\newtheorem{lemma}{Lemma}
\newtheorem{definition}{Definition}
\newtheorem{example}{Example}
\newtheorem{approach}{Approach}
\DeclareMathOperator*{\argmax}{arg\,max} %
\newenvironment{proof}
{\textit{Proof:} }
{$\square$}
\def\BibTeX{{\rm B\kern-.05em{\sc i\kern-.025em b}\kern-.08em
    T\kern-.1667em\lower.7ex\hbox{E}\kern-.125emX}}
\newcommand{\twocol}[1]{}
\newcommand{\revision}[1]{#1}
\newcommand{\secondrev}[1]{{#1}}
\newcommand{\by}{\mathbf{y}}
\newcommand{\bx}{\mathbf{x}}
\newcommand{\bw}{\mathbf{w}} 
\newcommand{\audit}{\mathcal{D}_{\text{audit}}}
\newcommand{\bv}{\mathbf{v}}
\newcommand{\EE}{\mathbb{E}}
\newcommand{\cS}{\mathcal{S}}
\newcommand{\cZ}{\mathcal{Z}}
\newcommand{\gp}{\mathcal{G}}
\newcommand{\fairness}{\texttt{F\_MaxGap}}
\newcommand{\genericfair}{\texttt{F}}
\newcommand{\weiupperbound}{O\left( \frac{ 2^{\frac{1}{2} H_{2/3}(w)} }{\epsilon^2(1 - \alpha)} +\frac{ 2^{\frac{1}{3} H_{2/3}(w)} }{\epsilon^4(1 - \alpha)^2} \right)}
\newcommand{\attupperbound}{O\left( \frac{1}{\epsilon^4 (1 - \alpha)^2}\right)}
\newcommand{\norm}[1]{ \left|\left| #1 \right|\right|}
\newcommand{\gtg}[1]{{#1}}
\newcommand{\averagegtg}[1]{ \overline{#1}(\bw) }
\newcommand{\cvarrange}{[0, 1)}
\newcommand{\hypothesis}{\text{H}}
\newcommand{\hypregions}{\mathcal{P}}
\begin{document}
\title{Multi-Group Fairness Evaluation via Conditional Value-at-Risk Testing}
\author{Lucas Monteiro Paes, Ananda Theertha Suresh, Alex Beutel, Flavio P.  Calmon, and Ahmad Beirami 
\thanks{The work of Lucas Monteiro Paes and Flavio P. Calmon was supported in part by the National Science Foundation under grants CAREER 1845852, CIF 2312667, FAI 2040880, and also in part by a gift from Google.

Lucas Monteiro Paes and Flavio P. Calmon are with the School of Engineering
and Applied Sciences, Harvard University, Cambridge, MA 02134, USA
(e-mail: \texttt{lucaspaes@g.harvard.edu}; \texttt{flavio@seas.harvard.edu}).

Ananda Theertha Suresh and Ahmad Beirami are with Google Research, New York, NY 10011, USA (e-mail: \texttt{theertha@google.com}; \texttt{beirami@google.com}).

Alex Beutel contributed to this research while at Google Research, New York, NY 10011, USA. He is currently with OpenAI (e-mail: \texttt{alexb@openai.com})
. 
}
}

\maketitle

\begin{abstract}
    Machine learning (ML) models used in prediction and classification tasks may display performance disparities across population groups determined by sensitive attributes (e.g., race, sex, age). We consider the problem of evaluating the performance of a fixed ML model across population groups defined by multiple sensitive attributes (e.g., race \textbf{and} sex \textbf{and} age). Here, the sample complexity for estimating the worst-case performance gap across groups (e.g., the largest difference in error rates) increases exponentially with the number of group-denoting sensitive attributes. To address this issue, we propose an approach to test for performance disparities based on Conditional Value-at-Risk (CVaR). By allowing a small probabilistic slack on the groups over which a model has approximately equal performance, we show that the sample complexity required for discovering performance violations is reduced exponentially to be at most upper bounded by the square root of the number of groups. As a byproduct of our analysis, when the groups are weighted by a specific prior distribution, we show that R\'enyi entropy of order $2/3$ of the prior distribution captures the sample complexity of the proposed CVaR test algorithm. Finally, we also show that there exists a non-i.i.d. data collection strategy that results in a sample complexity independent of the number of groups.
\end{abstract}

\begin{IEEEkeywords}
multi-group fairness,  intersectional fairness, hypothesis testing, intersectionality
\end{IEEEkeywords}

\section{Introduction}
\label{sec:introduction}
Machine learning (ML) algorithms are increasingly used in domains of consequence such as hiring \cite{Roy_hiring}, lending \cite{Zhou_lending, Brotcke_lending}, policing \cite{Wang_crime}, and healthcare \cite{csilag_health, Dritsas_health}.
These algorithms may display performance disparities across groups determined by legally protected attributes \cite{LAW, Piano_ethics} (e.g., sex and race). %
Discovering and reducing performance disparities across population groups is a recognized desideratum in ML. Over the past decade, hundreds of fairness interventions have been proposed to close performance gaps in a range of prediction and classification tasks while preserving overall accuracy \cite{hort2022bias,lowy2022a, alghamdi2022beyond, baharlouei2019renyi, hardt2016equality, Agarwal18, Calmon17, zemel13_learning_fair}.
While there are many notions of fairness in ML~\cite{mehrabi2021survey},  we focus on {\em group fairness}~\cite{Faisal10, Dwork12, zemel2013learning, hardt2016equality}.
Group fairness is usually concerned with a small number (e.g., two) of pre-determined demographic groups $\gp$ and requires a certain performance metric to be approximately equal across the groups in $\gp$ --- different choices of performance metric lead to different notions of fairness.
For example, {\em equal opportunity} \cite{hardt2016equality}, a popular variation of group fairness,  requires that false positive rates (FPR) be approximately equal across groups in $\gp$. 
A widely accepted mathematical formalization of group fairness violation is the \emph{largest}  gap between a model's performance for a group in $\gp$  and its average performance across the entire population. We refer to such worst-case group fairness metrics as  \emph{max-gap fairness metrics}.

{\em Multi-group fairness} notions aim to increase the granularity of groups in $\gp$ for which model performance is measured and compared  \cite{kearns2018preventing, Johnson18-multicalibration, wang2022towards, foulds2020intersectional, kong2022intersectionally}. Multi-group fairness metrics answer recent calls for testing for performance disparities across groups determined by intersectional demographic attributes \cite{wang2022towards, foulds2020intersectional, kong2022intersectionally} (e.g., groups determined by sex \textbf{and} race instead of separate evaluation for each attribute). A fine-grained characterization of groups is critical for discovering performance disparities that may lead to systematic disadvantages
across intersecting dimensions \cite{foulds2020intersectional}. %

Though socially critical, the need for multi-group fairness and the discovery of max-gap fairness violations pose competing statistical objectives. Intuitively, as the number of groups increases (i.e., $|\gp|$ becomes large), so does the sample complexity of evaluating max-gap fairness metrics. In such cases, even testing for multi-group fairness is challenging \cite{monteiro22-epistemic, Johnson18-multicalibration, kearns2018preventing}. For instance, Kearns {\em et al.} \cite{kearns2018preventing} show that  a model can be reliably tested for multi-group fairness with a test dataset of size $O(|\gp|)$. Testing for multi-group fairness may be infeasible when $|\gp|$ is large, such as when $\gp$ is a product set of several group-denoting attributes, such as sex, age, and race.

In this paper, we derive fundamental performance limits for testing for multi-group fairness and propose a notion of multi-group fairness based on conditional value-at-risk (CVaR) \cite{rockafellar2000optimization} that allows practitioners to relax fairness guarantees to decrease the sample complexity for reliably testing for multi-group fairness --- we call it CVaR fairness.
CVaR is a popular metric that is a convex relaxation of the {\em maximum}, making it a suitable objective for optimization as well. As we shall see, CVaR fairness also leads to a much improved sample complexity as compared to the max-gap fairness allowing us to scale the number of groups in a tractable manner.
Remarkably, for non-i.i.d. datasets, we propose an algorithm for reliably testing for CVaR fairness violations with a number of samples independent of the number of demographic groups.

Formally, each individual has sensitive attributes denoted by $(s_1, ..., s_k)$ for some $k \in \mathbb{N}$, and $s_i \in \cS^i$ --- the combination of which defines a protected group.
Intersectionality argues that all the population groups defined by the combinations $(s_1, ..., s_k) \in \cS^1 \times ... \times \cS^k$ have unique sources of discrimination and privilege \cite{Hooks92-feminism}.
Therefore, it is important to discover and treat disparities in performance across groups $g \in \gp = \cS^1\times ... \times \cS^k$ %
\cite{foulds2020intersectional, Kearns19FAT, wang2022towards, Lett_translating}. %
In this case, the number of groups that we aim to ensure that the ML algorithm treats similarly is $|\gp| = |\cS^1| \cdot |\cS^2|... \cdot |\cS^k|$.
When each $\cS^i$ is at least binary for all $i \in [k]$, $\gp$ has an exponential number of groups $|\gp| \geq 2^k$.

Let $\bw = (w_1, ..., w_{|\gp|})$ be a stochastic vector that defines a distribution over groups. 
Broadly speaking, we can think of $\bw$ as a vector of fairness aware importance scores for every group $g \in \gp$; for example, $\bw$ can be given by the percentage of each group in the population, an importance weight for each group, %
or  uniform $\left(\text{i.e., } w_g = \frac{1}{|\gp|}\right)$ when assuming a uniform prior on the group distributions.
\revision{The theoretical contribution of this work (Theorem \ref{thm:weighted_upper_general}) makes a connection between the sample complexity for testing  multi-group fairness and the R\'enyi entropy of order~$2/3$, which is defined by $H_{2/3}(\bw) \triangleq 3 \log_2 \sum_g w_g^{2/3}$.}
Our \textbf{main contributions} are: 
\begin{itemize}
    \item We provide an impossibility result, showing that \emph{any hypothesis test} that uses i.i.d. samples from the data distribution needs at least $n = \Omega(|\gp|)$ points to reliably test if max-gap fairness metrics are greater than a fixed threshold --- Proposition \ref{prop:hyptest_lower_bound}. 
    Our result complements the achievability result in \cite{kearns2018preventing}, where it was shown that multi-group fairness can be tested with $n = O(|\gp|)$ samples.
    Hence,  scaling $\gp$ to intersectional groups determined by a combination of features can be infeasible due to the cost of acquiring an exponential number of samples.

    \item Motivated by the first negative result, we propose a multi-group fairness metric that is a relaxation for max-gap fairness metrics based on the conditional value-at-risk (CVaR) --- namely CVaR fairness in Definition \ref{def:CVaRFairness}.
    We show that it is possible to recover max-gap fairness metrics from CVaR fairness (Proposition \ref{prop:recover_fairness_knwing_g}) and that CVaR fairness lower bounds max-gap fairness metrics (Proposition \ref{prop:CVaRLowerBound}).

    \item {We propose two natural sampling techniques for data collection: \emph{weighted sampling} and \emph{attribute specific sampling} thus providing flexibility for the auditor to collect datasets.}

    \item We propose an algorithm to test if the CVaR fairness metric is greater than a fixed threshold $\epsilon$  (Algorithm \ref{alg:hypothesis_test}).
    To do so, we use an estimator for the CVaR fairness metric, use data samples to compute the estimator {based on either sampling technique}, and perform a threshold test that outputs if the CVaR fairness metric is bigger than the fixed threshold ($\epsilon$) or equal to zero. 

    \item 
    We prove that the proposed Algorithm \ref{alg:hypothesis_test}, with weighted sampling, can reliably test for CVaR fairness using at most $n = O\left(2^{\frac{1}{2}H_{2/3}(\bw)}\right)$ samples, where $H_{2/3}(\cdot)$ is the R\'enyi entropy of order $\frac{2}{3}$, which is formally defined in ~\eqref{eq:renyi} (Theorem \ref{thm:weighted_upper_general}). 
    Our result shows that using Algorithm \ref{alg:hypothesis_test} for the CVaR fairness metric allows us to decrease the sample complexity of testing, achieving a complexity of $2^{\frac{1}{2}H_{2/3}(\bw)} \leq \sqrt{|\gp|} < |\gp|$.

     \item {We also show a lower bound for the sample complexity for testing CVaR fairness under weighted sampling when $\bw$ is a uniform distribution over all groups.    
    We show that when $\bw$ is a uniform distribution over all groups we need
    $n = \Omega \left(\sqrt{|\gp|}\right)$ samples to reliably test for CVaR fairness (Theorem \ref{thm:converse_CVaR}). When $\bw$ is the uniform distribution, this bound matches the upper bound of the proposed algorithm under weighted sampling tehcnique.}

    \item Finally, we prove that if we have access to $2$ samples of any group $g \in \gp$ (we will not necessarily use them), it is possible to use Algorithm \ref{alg:hypothesis_test} with attribute specific sampling to reliably test for CVaR fairness utilizing a number of samples ($n$) that is independent of the number of groups $|\gp|$.
    Specifically, when testing if the CVaR fairness metric is bigger than a threshold $\epsilon$, we only need $O\left(\frac{1}{\epsilon^4}\right)$ samples to test reliably, making the number of samples to test for multi-group fairness independent of the number of groups.
\end{itemize}

The paper is organized as follows.
In Section \ref{sec:background}, we introduce the notation used in the paper and the hypothesis test framework for auditing a model for  multi-group fairness.
In Section \ref{sec:max-gap_fairness}, we define the max-gap fairness metric and provide a converse result for auditing a model for max-gap fairness --- this result indicates when auditing a model for  max-gap fairness is infeasible.
In Section \ref{sec:CVaR}, we define the conditional value-at-risk fairness metric and describe its properties.
In Section \ref{sec:Testiong_for_CVAR_fairness_main}, we define an algorithm along with sampling strategies for efficiently testing whether the CVaR fairness metric is greater than a threshold or is equal to zero.
In Section \ref{sec:ConverseCVaR} we show a converse result for auditing a model for CVaR fairness under certain assumptions, showing that the proposed testing algorithm achieves optimal order, and providing an operational meaning for the R\'enyi entropy of order $2/3$. 
\revision{Finally, in Section \ref{sec:experiments} we study the practical consequences of our theoretical contributions by (i) empirically analyze the impact of the R\'enyi entropy on the multi-group fairness auditing accuracy, (ii) computing the sample complexity for reliably auditing using CVaR fairness in comparison with max-gap fairness, and (iii) using our converse bounds to compute the maximun number of groups one model may use for multi-group fairness auditing to be feasible with reasonable performance.}

\section{Background \& Notation}
\label{sec:background}
We start by describing each individual as a tuple $(x, g, y)$ where $x \in \mathcal{X}$ is the attribute vector, $g \in \gp$ is the \emph{protected} group variable (e.g., $(\texttt{sex, race, age}) = (\texttt{male, asian, 20--30})$), 
and $y \in \{0, 1\}$ is a binary label to be predicted --- note that $x \in \mathcal{X}$ may contain $g \in \gp$.
Let $\widehat{y} = f_{\theta}(x) \in \{0, 1\}$ denote a model that aims to predict $y$ and where the model is parameterized by $\theta$, e.g., weights of a neural network.
We denote the data of each individual by $z = (x, g, y, \widehat{y}) \in \cZ$, when using a random variable instead of the data samples, we write $Z = (X, G, Y, \widehat{Y})$, and assume that the data is drawn from an unknown distribution $\widetilde{P}_Z$.
We sample from $\widetilde{P}_Z$ to create a model auditing dataset with $n \in \mathbb{N}$ samples $\audit = \{z_i\}_{i = 1}^n$ --- the data points are not necessarily i.i.d. sampled. Note that the samples $z = (x, g, y, \widehat{y})$ are realizations of the random variable $Z = (X, G, Y, \widehat{Y})$. 

We also define $\bw \triangleq (w_1, ..., w_{|\gp|}) \in \Delta^{|\gp|}$ to be a prior distribution on the groups of the audit dataset --- the model auditor has full control over $\bw$.
For example, when testing for fairness, the model auditor may choose $\bw$ to be given by a uniform prior, i.e., $w_g = \frac{1}{|\gp|}$.
This way, all groups would be equally represented in the audit dataset and would receive equal fairness guarantees. 
The model auditor may choose $\bw$ to increase the weight of such groups, reflecting some desired policy.

Group fairness notions such as equal opportunity~\cite{hardt2016equality}, equal odds~\cite{hardt2016equality}, and representation disparity~\cite{hashimoto2018fairness} are widely used in practice.
These notions quantify {\em fairness} in terms of differences (or lack thereof) of a given performance metric (e.g., FPR) across population groups. In general, group fairness notions can be represented in terms of a \emph{fairness violation metric}, which will be zero if there is no performance disparity across groups, and positive if some performance disparity exists (e.g., one group has a higher FPR than the rest).
We use fairness violation metrics to test if a given machine learning model follows a specific fairness notion.
Let $\genericfair$ be a fairness violation metric that takes the test dataset distribution $\widetilde{P}_Z$ as input and outputs a real number  --- $\genericfair$ will depend on the group distribution vector $\bw$.

We are interested in testing if most groups $g \in \gp$ are treated similarly accordingly to the metric (i.e., $\genericfair(\widetilde{P}_Z) = 0$) or if there exists a group that is being treated differently than the others (i.e., $\genericfair(\widetilde{P}_Z) \geq \epsilon$).
Hence, we propose to use a hypothesis test framework for testing if multi-group fairness metrics are bigger than a fixed threshold.
This is equivalent to the hypothesis test in Definition \ref{def:Hypothesis_Test}.

\begin{definition}[$\epsilon$-Test] Given a fairness violation metric $\genericfair$
for the dataset distribution {$\widetilde{P}_Z$ over $\cZ$}, we call the following hypothesis test an $\epsilon$-test.
\begin{equation}
  \begin{cases}
    \hypothesis_0:& \genericfair(\widetilde{P}_Z) = 0\\
    \hypothesis_1:& \genericfair(\widetilde{P}_Z) \geq \epsilon.
  \end{cases}
\end{equation}
We denote by {$\psi_{\epsilon}: \cZ^n \rightarrow \{0, 1\}$} the decision function that takes the dataset $ \audit = \{z_i\}_{0 \leq i \leq n}$ such that $z_i \in \cZ$ and predicts
\begin{equation}
    \psi_{\epsilon}\left( \audit \right) = 
    \begin{cases}
        0 & \text{if $\hypothesis_0$ is true}\\
        1 & \text{if $\hypothesis_1$ is true}.
    \end{cases}
\end{equation}
\label{def:Hypothesis_Test}
\end{definition}

The decision function in Definition \ref{def:Hypothesis_Test} aims to differentiate between distributions from two separate decision regions ($ \hypregions_0$ and $ \hypregions_1(\epsilon)$) in the set of all distributions in $\cZ$.
Specifically, the decision regions $ \hypregions_0$ and $ \hypregions_1(\epsilon)$ are given respectively by \eqref{eq:hyp_reg_0} and \eqref{eq:hyp_reg_1}. 
\begin{align}
    \label{eq:hyp_reg_0}
    \hypregions_0 &= \left\{\left. \widetilde{P}_Z \sim \cZ \right| \ \genericfair(\widetilde{P}_Z) = 0 \right\} \\
        \label{eq:hyp_reg_1}
    \hypregions_1(\epsilon) &= \left\{\left. \widetilde{P}_Z \sim \cZ \right| \ \genericfair(\widetilde{P}_Z) \geq \epsilon \right\}
\end{align}
where $\widetilde{P}_Z \sim \cZ$ denotes that $\widetilde{P}_Z$ is the dataset probability distribution with support on $\cZ$.
Assuming a uniform prior on the hypothesis $\hypothesis_0$ and $\hypothesis_1$, the decision function has a probability of error equal to:
\begin{equation}
    P_{\text{error}} = \frac{\Pr(\psi_{\epsilon} = 1 | \hypothesis_0) + \Pr(\psi_{\epsilon} =  0 | \hypothesis_1)}{2}.
    \label{eq:error_prob}
\end{equation}
Since we are distinguishing a single distribution from a set of distributions, the abovementioned test is a composite hypothesis test. Composite hypothesis tests have been recently popularized under \emph{distribution property testing}, see \cite{canonne2020survey} for a detailed survey on the topic.

In this section, we defined fairness violation metrics ($\genericfair$) and how to use them to audit a model for multi-group fairness.
In the next section, we study a particular case of fairness violation metrics that we call max-gap fairness (Definition \ref{def:multi_group_fairness}) and show that performing a reliable $\epsilon$-test for max-gap fairness require at least $O(|\gp|/\epsilon^2)$ samples.

\section{Max-Gap Fairness}
\label{sec:max-gap_fairness}

Several fairness violation metrics $\genericfair$ measure the magnitude of the difference between model performance for a group and the average performance across all the groups ~\cite{Johnson18-multicalibration, kearns2018preventing, hardt2016equality, hashimoto2018fairness, monteiro22-epistemic}.
We call these metrics the max-gap fairness and denote it by $\fairness$.
In this paper, we call the function that measures the performance across different demographic groups the \emph{quality of service} function and the average performance across all groups the \emph{average quality of service}.
We give examples of the average quality of service function for equal opportunity in Example \ref{eg:equal_opportunity} and statistical parity in Example \ref{eg:statistical_parity}. 
We use the quality of service function to formally define the max-gap fairness metrics as follows.

\begin{definition}[Max-Gap Fairness] 
\label{def:multi_group_fairness}
Let $L: \cZ \rightarrow \{0, 1\}$ be the quality of service function, $\bw \in \Delta^{|\gp|}$ be the group weight vector, $P$ be a distribution in $\cZ$ that measures the average quality of service, and define the average quality of service to be 
given by
\begin{equation}
    \averagegtg{L} \triangleq \sum_{g \in \gp} w_g \EE_P[L(Z) | G = g].
    \label{eq:globaltg}
\end{equation}

The fairness gap per group is denoted as $\Delta(g; P, L)$ and given by
\begin{equation}
    \Delta(g; P, L, \bw) \triangleq \left| \EE_P[L(Z) | G = g] - \averagegtg{L} \right|.
    \label{eq:garppergroup}
\end{equation}

The \emph{max-gap} fairness metric $\fairness$ is the maximum fairness gap per group $\Delta(g; P, L, \bw) $ for $g \in \gp$, i.e.,
\begin{align}
    \fairness(P, L, \bw) & \triangleq \max_{g \in \gp} \Delta(g; P, L, \bw).
    \label{eq:gapintersec}
\end{align}
\end{definition}
Note that when $L$, $P$, and $\bw$ are clear from the context we drop them and just write $\fairness \triangleq \fairness(P, L, \bw)$. 

We demonstrate next how group fairness metrics such as Equal Opportunity (EO) and Statistical Parity (SP) can be expressed in terms of the above max-gap fairness definition.

\begin{example}[Equal Opportunity]
Recall that the equal opportunity violation metric is given by 
\begin{align}
    \genericfair_{EO} &= \max_{g \in \gp} \left| \Pr(\widehat{Y} = 1 | Y = 0, G = g) - \Pr(\widehat{Y} = 1 |  Y = 0) \right| \\
    &= \max_{g \in \gp} \left| \Pr(\widehat{Y} = 1 | Y = 0, G = g) - \sum_{g \in \gp} w_g \Pr(\widehat{Y} = 1 | Y = 0, G = g)) \right|.
\end{align}

Note that we can recover $\genericfair_{EO}$ from the max-gap fairness definition by taking
\begin{align}
          L_{EO}(z) &=  \mathbb{I}_{\widehat{y} = 1}(z), \\
          P_{EO}(z) &= \widetilde{P}_Z(z | Y = 0).
\end{align}

We have that the fairness gap per group is given by
\begin{align}
         \Delta(g; P_{EO},  L_{EO}, \bw) = \left| \Pr(\widehat{Y} = 1 | Y = 0, G = g) - \sum_{g \in \gp} w_g \Pr(\widehat{Y} = 1 | Y = 0, G = g) \right|.
\end{align}

Hence, we recover the equal opportunity violation metric
\begin{align}
    \genericfair_{EO} &= \max_{g \in \gp} \left| \Pr(\widehat{Y} = 1 | Y = 0, G = g) - \sum_{g \in \gp} w_g \Pr(\widehat{Y} = 1 | Y = 0, G = g)) \right|\\
    &= \max_{g \in \gp}  \Delta(g; P_{EO},  L_{EO}, \bw) = \fairness(P_{EO},  L_{EO}, \bw).
\end{align}

\label{eg:equal_opportunity}
\end{example}

\begin{example}[Statistical Parity]
Recall that the statistical parity violation metric is given by 
\begin{align}
    \genericfair_{SP} &= \max_{g \in \gp} \left| \Pr(\widehat{Y} = 1 | G = g) - \Pr(\widehat{Y} = 1) \right| \\
    &= \max_{g \in \gp} \left| \Pr(\widehat{Y} = 1 | G = g) - \sum_{g \in \gp} w_g \Pr(\widehat{Y} = 1 | G = g)) \right|.
\end{align}

Note that we can recover $\genericfair_{SP}$ from the max-gap fairness definition by taking
\begin{align}
          L_{SP}(z) &= \mathbb{I}_{\widehat{y} = 1}(z), \\
          P_{SP}(z) &= \widetilde{P}_Z(z ).
\end{align}
We have that the fairness gap per group is given by
\begin{align}
         \Delta(g; P_{SP}(z),  L_{SP}, \bw) = \left| \Pr(\widehat{Y} = 1 | G = g) - \sum_{g \in \gp} w_g \Pr(\widehat{Y} = 1 | G = g)) \right|.
\end{align}
Hence, we recover the statistical parity violation metric
\begin{align}
    \genericfair_{SP} &= \max_{g \in \gp} \left| \Pr(\widehat{Y} = 1 | G = g) - \sum_{g \in \gp} w_g \Pr(\widehat{Y} = 1 | G = g)) \right|\\
    &= \max_{g \in \gp}  \Delta(g; P_{SP}(z),  L_{SP}, \bw) = \fairness(P_{SP}(z),  L_{SP}, \bw).
\end{align}
\label{eg:statistical_parity}
\end{example}
Using a similar process, we can recover other fairness metrics such as calibration \cite{Pleiss17}.

Since max-gap fairness measures the violation in fairness metrics, we aim to ensure that the gap $\fairness(P, L, \bw) \approx 0$.
Achieving this condition implies that the model $f_{\theta}$ is {\em treating} all groups $\gp$ in the population similarly per the chosen fairness notion.
However, in general, we don't have access to the distribution $P$; therefore, it is not possible to compute $\EE_P[L(Z) | G = g]$ for all $g \in \gp$ exactly. 
Hence, we can't exactly compute $\fairness(P, L, \bw)$ the fairness metric we are interested in.
Regardless, we can estimate $\fairness(P, L, \bw)$ using $n$ i.i.d. samples from $P$.
When we are interested in ensuring that the model is treating only two groups similarly, i.e., when $\gp = \{0, 1\}$, 
We can estimate $\fairness(P, L, \bw)$ using $n = O(1/\epsilon^2)$ i.i.d. samples from $P$ and then reliably test if $\fairness(P, L, \bw) \approx 0$ --- this result is an immediate application of Chebyshev inequality~\cite{Tchébychef1867}.
Also, in \cite[Theorem 2.11]{kearns2018preventing}, it was shown that using the same procedure we can estimate $\fairness(P, L, \bw)$ for multi-group fairness using $n = O(|\gp|/\epsilon^2)$ i.i.d. samples from $P$.

In this work, we are interested in the case where $|\gp|$ is large ---  for example, when it contains exponentially many combinations of sensitive attributes (protected groups), and intersectionality is considered.
Here, the fairness definitions of interest will be the maximum gap fairness as in \eqref{eq:gapintersec}, but $|\gp|$ can grow exponentially.
In this regime, estimating the fairness metrics given by $\fairness$ \eqref{eq:gapintersec} is challenging. 
It is necessary to estimate $\EE_P[L(Z) | G = g] $ for all $g \in \gp$; hence, if $|\gp|$ is sufficiently large, performing all these estimations is impractical \cite{kearns2018preventing, monteiro22-epistemic, Johnson18-multicalibration}. In fact, we show that even \emph{testing} if $\fairness$ is above a given threshold is challenging, requiring further assumptions on the hypothesis test procedure to reduce sample complexity.

Next, we show that the max-gap fairness metrics discussed in Definition \ref{def:multi_group_fairness} \emph{require} at least $\Omega(|\gp|/\epsilon^2)$ samples for the $\epsilon$-test to have a small probability of error.  
In Proposition \ref{prop:hyptest_lower_bound}, we use the fact that there exist two distributions $P_0 \in \hypregions_0$ and $P_1 \in \hypregions_1(\epsilon)$ that are close in the space of distributions in terms of  Hellinger distance \cite{Hellinger}.
Intuitively, the fact that the hypothesis regions are close indicates that it is difficult to distinguish between certain  distributions in them. 
In Proposition \ref{prop:hyptest_lower_bound}, we show that 
the probability of error of the $\epsilon$-test is bounded away from $0$.

\begin{restatable}{proposition}{HypTestLowerBound}
(Converse for Max-Gap Fairness).
For max-gap fairness metrics $\fairness$ (e.g., equal opportunity in Example \ref{eg:equal_opportunity} and statistical parity in Example \ref{eg:statistical_parity}) with quality of service function $L$ and measuring the average quality of service using the distribution $P$ such that $(L, P) \in \{(L_{EO}, P_{EO}), (L_{SP}, P_{SP})\}$.
If $\epsilon \in [0, 0.5]$ and $w_g = P(G = g ) = \frac{1}{|\gp|}$ for all $g \in \gp$, using $n$ i.i.d. samples from $P$ the minimum probability of error of performing an $\epsilon$-test for $\fairness$ in Definition \ref{def:Hypothesis_Test} is lower bounded by 
\begin{equation}
    2\inf_{\psi_{\epsilon}} P_{\text{error}} \geq 1 - \left[2\left(1 - \left(1 - \frac{2\epsilon^2}{|\gp|}\right)^{n}\right)\right]^{1/2}.
\end{equation}
Furthermore, it is necessary to have access to $n$ given in \eqref{eq:max_gap_lower_bd} i.i.d. samples from $P$ to test if $\fairness = 0$ or $\fairness \geq \epsilon$ correctly with probability $0.99$.
\begin{equation}
    n = \Omega \left( \frac{|\gp|}{\epsilon^2}\right).
    \label{eq:max_gap_lower_bd}
\end{equation}
\label{prop:hyptest_lower_bound}
\end{restatable}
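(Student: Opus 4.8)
The plan is to prove the converse with Le Cam's two-point method: since $\hypregions_0$ and $\hypregions_1(\epsilon)$ are \emph{sets} of distributions, it suffices to exhibit a single fair distribution $P_0 \in \hypregions_0$ and a single $\epsilon$-unfair distribution $P_1 \in \hypregions_1(\epsilon)$ that no test can separate from few samples. Because every decision rule $\psi_\epsilon$ must in particular distinguish this one pair, the two-point minimax bound gives $2\inf_{\psi_\epsilon} P_{\mathrm{error}} \ge 1 - \mathrm{TV}\!\left(P_0^{\otimes n}, P_1^{\otimes n}\right)$, where $P^{\otimes n}$ denotes the law of $n$ i.i.d.\ draws. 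The whole argument then reduces to (i) exhibiting a good pair $(P_0, P_1)$ and (ii) controlling the total variation between their $n$-fold products.

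For (i) I would work with the equal opportunity instance $(L_{EO}, P_{EO})$; the statistical parity case is identical after removing the conditioning on $Y=0$. I take both $P_0$ and $P_1$ to place uniform mass $w_g = 1/|\gp|$ on each group and to differ only in the conditional acceptance rates $p_g \triangleq \Pr(\widehat Y = 1 \mid Y = 0, G = g)$. Under $P_0$ I set $p_g = 1/2$ for every $g$, so all per-group gaps vanish and $\fairness(P_0, L_{EO}, \bw) = 0$, giving $P_0 \in \hypregions_0$. Under $P_1$ I perturb a single group $g^\star$ by $2\epsilon$, i.e.\ $p_{g^\star} = 1/2 + 2\epsilon$ and $p_g = 1/2$ otherwise (keeping all probabilities in $[0,1]$ over the admissible range; a symmetric $\pm\epsilon$ perturbation of two groups covers the full range up to $1/2$). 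Since the population average shifts by only $2\epsilon/|\gp|$, the gap at $g^\star$ is $2\epsilon\left(1 - 1/|\gp|\right) \ge \epsilon$ for $|\gp| \ge 2$, so $\fairness(P_1, L_{EO}, \bw) \ge \epsilon$ and $P_1 \in \hypregions_1(\epsilon)$.

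For (ii) the key observation is that the single-sample Hellinger affinity (Bhattacharyya coefficient) decomposes group-by-group, and $P_0, P_1$ agree on every group except $g^\star$, which carries mass only $1/|\gp|$. A short calculation then shows the per-sample affinity is at least $1 - 2\epsilon^2/|\gp|$, equivalently $\text{H}^2(P_0 \,\|\, P_1) \le 4\epsilon^2/|\gp|$. The crucial point --- and the source of the $|\gp|$ factor --- is that although the worst-case gap must be $\Theta(\epsilon)$, the affinity is \emph{quadratic} in the perturbation and is further diluted by the group mass $1/|\gp|$. I would then tensorize the affinity (which is multiplicative over product measures), $\mathrm{BC}(P_0^{\otimes n}, P_1^{\otimes n}) = \mathrm{BC}(P_0,P_1)^n \ge \left(1 - 2\epsilon^2/|\gp|\right)^n$, and combine the standard comparisons $\mathrm{TV} \le \text{H} = \sqrt{2(1 - \mathrm{BC})}$ to obtain $\mathrm{TV}(P_0^{\otimes n}, P_1^{\otimes n}) \le \sqrt{2\left(1 - \left(1 - 2\epsilon^2/|\gp|\right)^n\right)}$. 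Substituting into the two-point bound yields the displayed inequality.

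Finally, for the $\Omega(|\gp|/\epsilon^2)$ conclusion I would impose $P_{\mathrm{error}} \le 0.01$ (success probability $0.99$) in the just-derived bound, which forces $\left(1 - 2\epsilon^2/|\gp|\right)^n \le c$ for an absolute constant $c < 1$; taking logarithms and using $-\ln(1-x) \le 2x$ for $x \le 1/2$ gives $n \ge \tfrac{\ln(1/c)}{4\epsilon^2/|\gp|} = \Omega(|\gp|/\epsilon^2)$. I expect the main obstacle to be part (i): designing a perturbation that is simultaneously large enough in the worst-case gap (a \emph{max} over groups) yet globally close in Hellinger distance, and checking it yields a legitimate distribution on $\cZ$ with the prescribed uniform group marginal. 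Everything after the affinity bound is routine via tensorization and the standard TV--Hellinger inequalities.
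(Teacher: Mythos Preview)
Your proposal is correct and follows essentially the same route as the paper: Le Cam's two-point method with a single-group perturbation of the conditional acceptance rate around $1/2$, a Hellinger/affinity bound that picks up the $1/|\gp|$ dilution from the group weight, tensorization to $n$ samples via multiplicativity of the affinity, and the standard $\mathrm{TV}\le \textsc{H}$ comparison. The only cosmetic differences are that the paper perturbs the distinguished group by $\epsilon\,|\gp|/(|\gp|-1)$ (so the max-gap equals $\epsilon$ exactly) whereas you perturb by $2\epsilon$ (so the gap is $\ge \epsilon$), and the paper phrases the tensorization directly in terms of $\textsc{H}^2$ rather than the Bhattacharyya coefficient; the resulting bound and the $\Omega(|\gp|/\epsilon^2)$ conclusion are identical.
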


We provide the proof for this result in Appendix \ref{sec:lower_bd_max_gap}.

In Proposition \ref{prop:hyptest_lower_bound}, we have shown that the probability of error in the hypothesis test is lower bounded by a function of $n$ the number of samples from $P_{|Y = 0}$, $|\gp|$ the number of groups, and $\epsilon$ and 
it is necessary to have at least  $n = \Omega \left( {|\gp|}/{\epsilon^2}\right)$ samples to audit a model for max-gap fairness. Thus, a hypothesis test approach cannot decrease the sample complexity of auditing a model for max-gap fairness.
Next, we define a fairness metric based on the conditional value-at-risk, aiming to decrease the sample complexity of testing for multi-group fairness, albeit with a weaker notion of fairness violation guarantee.

\section{CVaR Multi-Group Fairness Metrics}
\label{sec:CVaR}
In this section, we define the conditional value-at-risk (CVaR) 
fairness and prove the properties of this multi-group fairness metric.

\secondrev{We adopt the CVaR fairness definition of~\cite{williamson2019fairness}, which is motivated by conditional value-at-risk \cite{rockafellar2000optimization}. Similarly to~\cite{williamson2019fairness}, we are concerned with applying CVaR to the fairness gap $\Delta(g)$. 
Following~\cite{williamson2019fairness}, Soen {\em et al.}~\cite{Soen2022} developed a post-processing technique to decrease CVaR fairness.
Meng and Gower~\cite{Meng2023} applied CVaR to the model loss with the objective of improving worst-case model performance -- not applying it to produce a fairer model.
Differently from \cite{williamson2019fairness, Soen2022, Meng2023}, our main contribution is the definition of CVaR fairness as a relaxation of the max-gap fairness metrics and applying CVaR to quantities beyond the model loss.
Additionally, we provide a sample-efficient test for CVaR, which is the main algorithmic contribution of this paper.
}

The CVaR fairness is an extension of the usual definition of fairness given in Definition \ref{def:multi_group_fairness}, i.e., we will show that it is possible to recover Definition \ref{def:multi_group_fairness} from the CVaR fairness in Definition \ref{def:CVaRFairness}.

The definition of CVaR for a random variable $W$ starts by computing $q_{\alpha}(W)$ the $\alpha$-quantile of $W$ for $\alpha \in \cvarrange$.
The $\alpha$-quantile of a random variable $W$ is defined to be a number $q_{\alpha}(W)$ such that
\begin{align}
    \Pr[W \geq q_{\alpha}(W)] &\geq 1- \alpha\\
    \Pr[W < q_{\alpha}(W)] &\geq \alpha.
\end{align}
Then, the conditional value-at-risk at level $\alpha$ is denoted by $\text{CVaR}_{\alpha}(W)$ and given by \eqref{eq:CVaROG} -- CVaR was proposed by \cite{rockafellar2000optimization}:
\begin{equation}
    \text{CVaR}_{\alpha}(W) \triangleq \EE[W | W \geq q_{\alpha}(W)].
    \label{eq:CVaROG}
\end{equation}
Intuitively, $\text{CVaR}_{\alpha}(W)$ measures the tail behavior of $W$.
For example, when $\alpha = 1$ the conditional value-at-risk becomes the maximum over all possible values of $W$ an event that may have vanishing probability.
The case of interest in multi-group fairness (when testing is hard) is when only a small group is treated differently from all others, i.e., when $g \in \gp$ is small ($w_g$ is small) but $\Delta(g) \geq \epsilon$. 
Hence, we are interested in the tail behavior of $\Delta(g)$.
For this reason, we next define CVaR fairness to capture the desired tail behavior. 

\begin{definition}[CVaR Fairness] Let $L: \cZ \rightarrow \{0, 1\}$ be the quality of service function, $\bw$ be the group importance weight vector defined by the auditor, and $P$ be the distribution that measures the average quality of service $\averagegtg{L}$. 
Recall that 
\begin{equation*}
    \Delta(g) = \left| \EE_P[L(Z) | G = g] - \averagegtg{L} \right|.
\end{equation*}

For all $\alpha \in \cvarrange$, 
we denote CVaR fairness as $\texttt{F\_CVaR}_{\alpha}(\bw; P, L, \averagegtg{L})$ and define it by 
\begin{align}
    Q_{\alpha} &\triangleq \argmax_{\substack{\sum_{g \in Q} w_g \leq 1 - \alpha \\ Q \subset \gp }} \sum_{g \in Q} w_g \Delta(g)\\
    \texttt{F\_CVaR}_{\alpha}(\bw; P, L, \averagegtg{L}) &\triangleq \frac{1}{1 - \alpha}  \sum_{g \in Q_{\alpha}} w_g \Delta(g).
    \label{eq:CVaRDefinition}
\end{align}

We can also write $\texttt{F\_CVaR}_{\alpha}(\bw; P, L, \averagegtg{L})$ as
\begin{equation*}
    \texttt{F\_CVaR}_{\alpha}(\bw; P, L, \averagegtg{L}) = \frac{1}{1 - \alpha}  \max_{\substack{\sum_{g \in Q} w_g \leq 1 - \alpha \\  Q \subset \gp}} \sum_{g \in Q} w_g \Delta(g).
\end{equation*}
\label{def:CVaRFairness}
\end{definition}
When $P, L, \averagegtg{L}$ are clear from the context we denote $\texttt{F\_CVaR}_{\alpha}(\bw; P, L, \averagegtg{L})$ by $\texttt{F\_CVaR}_{\alpha}(\bw)$. \revision{Note that when the group importance weights $\bw$ are uniform, then the set $Q_{\alpha}$ contains $(1 - \alpha) |\gp|$ groups; hence, the number of groups decreases linearly with alpha, and for $\alpha = (1-\frac{1}{|\gp|})$, the set only contains one of the groups with the highest $\Delta(g),$ thus CVaR recovers max-gap fairness in this case.}

\revision{
Connecting CVaR fairness back to the classical definition of CVaR \cite{rockafellar2000optimization}, it can be considered as an application of the classical CVaR for $\Delta(g)$.
 We claim that, under mild assumptions\footnote{\secondrev{The mild assumptions are (1) $\sum_{g \in Q_{\alpha}} w_g = 1 - \alpha$ and (2) if $g \neq g'$ then $\Delta(g) \neq \Delta(g')$. Lemma \ref{apx_cvar_fcvar} shows this result.}}, 
\begin{equation*}
     \texttt{F\_CVaR}_{\alpha}(\bw) = \EE[\Delta(g) | \Delta(g) \geq \min_{g \in Q_{\alpha}} \Delta(g)] = \EE[\Delta(g) | \Delta(g) \geq q_{\alpha}(\Delta(g))] = \text{CVaR}_{\alpha}(\Delta(g)), 
\end{equation*}
CVaR fairness captures the behavior of $\Delta(g)$ while accounting for the influence of the group weight on the tail.
}

Our definition for the CVaR fairness is more lenient than the usual multi-group fairness in Definition \ref{def:multi_group_fairness}.
Particularly, CVaR fairness gives stronger fairness guarantees when $\alpha$ is close to $1$.
However, it gives practitioners the flexibility of allowing $\alpha$ to be smaller and make $\texttt{F\_CVaR}_{\alpha}(\bw)$ easier to estimate --- i.e., decrease the sample complexity to perform an $\epsilon$-test for $\texttt{F\_CVaR}_{\alpha}(\bw)$.

Note that when $\alpha = 0$,  $\texttt{F\_CVaR}_{0}(\bw)$ corresponds to the average fairness gap of all groups. In this case, we can estimate  $\texttt{F\_CVaR}_{0}(\bw)$
with precision $\epsilon$ using $n = O\left(\frac{1}{\epsilon^2}\right)$ samples from $P$. Thus, it is possible to perform the $\epsilon$-test reliably  using $O\left(\frac{1}{\epsilon^2}\right)$ samples from $P$ 
--- we conclude that by a simple application of Chebyshev's inequality \cite{Tchébychef1867}. 
\revision{Additionally, $\alpha$ should be chosen to be a fixed constant (i.e., independent of $\gp$ and $\bw$), we next exemplify the impact of $\alpha$ in the sample complexity of fairness auditing and in Section \ref{sec::testing} we show that, tuning $\alpha$, it is possible to get a more favorable sample complexity for reliable fairness auditing ($\epsilon$-test for CVaR fairness). 
}

As a middle ground example between easy testing ($\alpha = 0$) and the max-gap fairness in Definition \ref{def:multi_group_fairness}, consider the following example \revision{where the parameter $\alpha$ in CVaR fairness allows trading stronger fairness guarantees (e.g., all groups, including the ones with almost zero probability of occurrence are equally treated) for better sample complexity by increasing the probability mass of the statistics being estimated --- i.e., $\max_{g} \Delta(g)$ may occur in a demographic group with vanishing probability while Example \ref{eg:equal_CVaR} illustrate how CVaR is estimated over a set of mass $(1 - \alpha)$}.

\begin{example}[CVaR Equal Opportunity] Let $\alpha = 0.75$, the quality of service function be $L_{EO}$, and $P_{EO}$ be given by the equal opportunity fairness metric in Example \ref{eg:equal_opportunity}.
The fairness gap per group is given by
\begin{equation}
    \Delta(g) = \left| \Pr(\hat{Y}= 1 | Y = 0, G = g) - \sum_{g \in \gp} w_g \Pr(\widehat{Y} = 1 | Y = 0, G = g)) \right|.
\end{equation}

Therefore, the CVaR Equal Opportunity is given by
\begin{equation}
    \texttt{F\_CVaR}_{0.75}(\bw) = \frac{1}{0.25}  \max_{\substack{\sum_{g \in Q} w_g \leq 0.25 \\ Q \subset \gp}}\sum_{g \in Q} w_g \Delta(g).
\end{equation}
Therefore, we need to estimate a quantity with probability mass $0.25$, in contrast with Example~\ref{eg:equal_opportunity}, where estimating a quantity with potentially vanishing probability was necessary.
\revision{
This example highlights the advantage of CVaR fairness in comparison with max-gap fairness, CVaR fairness gives flexibility for the model auditor to select the minimum probability mass ($1 - \alpha$) they can reliably estimate using their limited number of samples.
}
\label{eg:equal_CVaR}
\end{example}

In Proposition \ref{prop:recover_fairness_knwing_g}, we show that we can recover max-gap fairness (Definition \ref{def:multi_group_fairness}) by choosing an appropriate value for $\alpha$ --- max-gap fairness is hard to estimate and may demand $O\left(|\gp|\right)$ samples from $P$ to perform an $\epsilon$-test reliably as saw in Proposition \ref{prop:hyptest_lower_bound}.

\begin{restatable}{proposition}{RecoverFairnessKnwingG}($\texttt{F\_CVaR}_{\alpha}$ recovers $\fairness$). For all $\bw \in \Delta^{|\gp|}$, $L: \cZ \rightarrow \{0, 1\}$, and any distribution $P$ with support on $\cZ$, there exists $\alpha^* \in \cvarrange$ such that 
\begin{equation}
    \fairness(P, L, \averagegtg{L}) = \texttt{F\_CVaR}_{\alpha^*}(\bw; P, L, \averagegtg{L}).
\end{equation}
When $\bw = \left( \frac{1}{|\gp|}, ..., \frac{1}{|\gp|}\right)$ we have that $\alpha^* = 1 - \frac{1}{|\gp|}$.
\label{prop:recover_fairness_knwing_g}
\end{restatable}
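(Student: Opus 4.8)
The plan is to exhibit the value of $\alpha^*$ explicitly and show that the CVaR optimization collapses onto the single worst group. First I would let $g^\star \in \arg\max_{g \in \gp} \Delta(g)$ be a group achieving the max-gap, so that $\fairness(P,L,\averagegtg{L}) = \Delta(g^\star)$, and set $\alpha^* = 1 - w_{g^\star}$. Since $\bw \in \Delta^{|\gp|}$ we have $0 < w_{g^\star} \le 1$, hence $\alpha^* \in \cvarrange$, so the choice is admissible. (The degenerate case $w_{g^\star}=1$ forces $\averagegtg{L} = \EE_P[L(Z)\mid G=g^\star]$ and thus $\Delta(g^\star)=0$, which is consistent trivially.)

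Second, I would analyze the constrained maximization defining $\texttt{F\_CVaR}_{\alpha^*}(\bw)$, whose weight budget is $1-\alpha^* = w_{g^\star}$. The candidate set $Q = \{g^\star\}$ is feasible and yields objective value $w_{g^\star}\Delta(g^\star)$. For the reverse inequality, take any feasible $Q$ with $\sum_{g\in Q} w_g \le w_{g^\star}$; since $\Delta(g) \le \Delta(g^\star)$ for every $g$, we get $\sum_{g\in Q} w_g \Delta(g) \le \Delta(g^\star)\sum_{g \in Q} w_g \le \Delta(g^\star)\, w_{g^\star}$. Hence the maximum equals $w_{g^\star}\Delta(g^\star)$, and normalizing gives
\begin{equation*}
\texttt{F\_CVaR}_{\alpha^*}(\bw) = \frac{1}{1-\alpha^*}\, w_{g^\star}\Delta(g^\star) = \frac{1}{w_{g^\star}}\, w_{g^\star}\Delta(g^\star) = \Delta(g^\star) = \fairness(P,L,\averagegtg{L}).
\end{equation*}
For the uniform weighting $\bw = (1/|\gp|,\dots,1/|\gp|)$ every $w_g = 1/|\gp|$, so $w_{g^\star} = 1/|\gp|$ and $\alpha^* = 1 - 1/|\gp|$, matching the stated value.

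The argument has essentially no hard step: the key observation is simply that, once the weight budget is fixed at exactly the mass of the worst group, spreading it across any collection of groups cannot beat allocating it entirely to $g^\star$, because no group has a larger gap. I expect the only points needing care to be confirming admissibility of $\alpha^*$ (that it lies in $\cvarrange$) and handling the boundary case $w_{g^\star}=1$ separately. A secondary subtlety is that $Q_\alpha$ is defined as an $\arg\max$ that may be set-valued, but since the CVaR value depends only on the optimal objective and $Q=\{g^\star\}$ attains it, any maximizer yields the same value.
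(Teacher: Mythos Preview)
Your proof is correct and follows essentially the same approach as the paper: both pick $\alpha^* = 1 - w_{g^\star}$ for a maximizing group $g^\star$ and verify that the singleton $Q=\{g^\star\}$ attains the CVaR optimum. The only cosmetic difference is that you establish the upper bound $\sum_{g\in Q} w_g\Delta(g) \le w_{g^\star}\Delta(g^\star)$ directly, whereas the paper invokes the general inequality $\texttt{F\_CVaR}_{\alpha}(\bw) \le \fairness$ from Proposition~\ref{prop:CVaRLowerBound}.
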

We show this result in Appendix \ref{sec:properties_CVaR}.
Proposition \ref{prop:recover_fairness_knwing_g} indicates that by choosing a suitable $\alpha \in \cvarrange$ we can obtain a reasonable trade-off between sample complexity and fairness guarantees.
In Sections \ref{sec::testing}, we show how to leverage the flexibility provided by $\alpha$ to efficiently test for multi-group fairness. 

For all choices of $\alpha \in \cvarrange$ we prove that $\fairness(P, L, \averagegtg{L})$ is lower bounded by $\texttt{F\_CVaR}_{\alpha}(\bw)$.
This shows why CVaR fairness is a lenient version of the standard multi-group fairness in Definition \ref{def:multi_group_fairness}.
In Proposition \ref{prop:CVaRLowerBound}, we show that  $\texttt{F\_CVaR}_{\alpha}(\bw)$ lower bounds $\fairness(P, L, \averagegtg{L})$.

\begin{restatable}{proposition}{CVarLowerbound}($\texttt{F\_CVaR}_{\alpha}$ is a lower bound for $\fairness$). For all $L: \cZ \rightarrow \{0, 1\}$, distribution $P$ with support on $\cZ$ and distribution $P$ with support on $\cZ$, $\bw \in \Delta^{|\gp|}$, and $\alpha \in \cvarrange$ we have that
\begin{equation}
    0 \leq  \texttt{F\_CVaR}_{\alpha}(\bw; P, L, \averagegtg{L}) \leq \fairness(P, L, \averagegtg{L}) \leq 1.
\end{equation}
\label{prop:CVaRLowerBound}
\end{restatable}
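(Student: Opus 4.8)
The plan is to prove the three inequalities separately. The two outer bounds follow immediately from the ranges of the quantities involved, whereas the middle inequality $\texttt{F\_CVaR}_{\alpha}(\bw; P, L, \averagegtg{L}) \le \fairness(P, L, \averagegtg{L})$ is where the structure of the CVaR maximization is used.

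First, for the lower bound, I would observe that every term $w_g \Delta(g)$ in the objective is nonnegative (since $w_g \ge 0$ and $\Delta(g) = |\EE_P[L(Z)|G=g] - \averagegtg{L}| \ge 0$) and that the normalization $\tfrac{1}{1-\alpha}$ is strictly positive for $\alpha \in \cvarrange$; since the empty set is feasible in the maximization defining $Q_{\alpha}$ and attains objective value $0$, the maximum is at least $0$, giving $\texttt{F\_CVaR}_{\alpha}(\bw) \ge 0$. For the top bound, I would use that $L$ is $\{0,1\}$-valued, so both $\EE_P[L(Z)|G=g]$ and $\averagegtg{L} = \sum_{g \in \gp} w_g \EE_P[L(Z)|G=g]$ lie in $[0,1]$; hence each $\Delta(g) \in [0,1]$ and $\fairness = \max_{g \in \gp}\Delta(g) \le 1$.

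The crux is the middle inequality. Starting from the optimal tail set $Q_{\alpha}$, I would bound each selected gap by the largest one, $\Delta(g) \le \max_{g' \in \gp}\Delta(g') = \fairness$ for every $g \in Q_{\alpha}$, and substitute:
\begin{equation*}
    \texttt{F\_CVaR}_{\alpha}(\bw) = \frac{1}{1-\alpha}\sum_{g \in Q_{\alpha}} w_g \Delta(g) \le \frac{\fairness}{1-\alpha}\sum_{g \in Q_{\alpha}} w_g \le \frac{\fairness}{1-\alpha}(1-\alpha) = \fairness,
\end{equation*}
where the last inequality uses the feasibility constraint $\sum_{g \in Q_{\alpha}} w_g \le 1-\alpha$ from Definition \ref{def:CVaRFairness}. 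There is no deep obstacle here; the only point requiring care is to invoke the constraint in the correct direction, so that the weight of the selected tail set is at most $1-\alpha$ and the $\tfrac{1}{1-\alpha}$ normalization is exactly absorbed, leaving $\fairness$.
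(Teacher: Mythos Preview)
Your proposal is correct and follows essentially the same approach as the paper: the paper bounds each $\Delta(g)$ in the CVaR sum by $\max_{g'\in\gp}\Delta(g')$ and then uses the feasibility constraint $\sum_{g\in Q} w_g \le 1-\alpha$ to cancel the $\tfrac{1}{1-\alpha}$ factor, exactly as you do, while the outer bounds $\texttt{F\_CVaR}_{\alpha}(\bw)\ge 0$ and $\fairness\le 1$ are obtained from nonnegativity of $w_g\Delta(g)$ and from $L\in\{0,1\}$ forcing $\EE_P[L(Z)\mid G=g],\,\averagegtg{L}\in[0,1]$.
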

We provide a proof for this result in Appendix \ref{sec:properties_CVaR}.

Proposition \ref{prop:CVaRLowerBound} shows that  $\fairness(P, L, \averagegtg{L}) \geq \epsilon$ does not necessarily imply $\texttt{F\_CVaR}_{\alpha}(\bw) \geq \epsilon$.
For this reason, performing an $\epsilon$-test using $\texttt{F\_CVaR}_{\alpha}(\bw)$ is more lenient than performing the same test for $\fairness(L, \averagegtg{L})$.
However, Proposition \ref{prop:CVaRLowerBound} ensures that if $\texttt{F\_CVaR}_{\alpha}(\bw) \geq \epsilon$ then $\fairness(L, \averagegtg{L}) \geq \epsilon$ which is what we will  test in Section \ref{sec::testing}. 

In this section, we have defined the CVaR fairness and showed that (i) it is an extension of the standard multi-group fairness metrics given in Definition \ref{def:multi_group_fairness} (Proposition \ref{prop:recover_fairness_knwing_g}), (ii) CVaR fairness is a more lenient metric than the max-gap fairness metrics in Definition \ref{def:multi_group_fairness}, and  (iii) CVaR fairness is a value in the square $[0, 1]$ as max-gap fairness (Proposition \ref{prop:CVaRLowerBound}).
Moreover, we exemplified that by carefully choosing $\alpha$ we can leverage the trade-off between sample complexity to perform a $\epsilon$-testing and fairness -- Example \ref{eg:equal_CVaR} and Proposition \ref{prop:CVaRLowerBound}. 

In the next section, we propose a testing algorithm that leverages the trade-off between sample complexity and fairness guarantees from CVaR fairness to efficiently perform an $\epsilon$-test.

\section{Testing for CVaR Fairness}
\label{sec:Testiong_for_CVAR_fairness_main}
\subsection{Data collection process}
In order to test for  $\texttt{F\_CVaR}_{\alpha}(\bw)$, we need to collect an audit dataset $\audit$. 
We propose two ways of collecting this dataset --- weighted sampling (Definition \ref{def:weightedsampling}) and attribute specific sampling (Definition \ref{def:attributesampling}).
We denote $M_g$ as the number of samples from group $g$ in the audit dataset. In general, $M_g$ can be a random variable. 

Due to the cost of acquiring new samples, there is a limit on the dataset size one can collect.
For this reason, next, we define the notion of a data budget that a practitioner may have when testing for CVaR fairness.

\begin{definition}[Data Budget] A model is audited with a data budget equal to $n \in \mathbb{N}$ when the expected number of samples used to audit the model is equal to $n$ --- i.e., \eqref{eq:databudget} holds.
\begin{equation}
 n \triangleq   \sum_{g \in \gp} \EE[M_g].
    \label{eq:databudget}
\end{equation}
\label{def:databudget}
\end{definition}

We now propose two ways of collecting data. 
We start with the weighted sampling, when the audit dataset $\audit$ is constructed using i.i.d. samples.

\begin{definition}[Weighted Sampling] Let $M_g$ be the number of samples from group $g \in \gp$.
Under a data budget $n$, we define the number of samples per group as
\begin{equation*}
    M_g \triangleq \text{Bin}(n, v_g) \ \ \forall g \in \gp,
\end{equation*}
where $\text{Bin}(n, v_g)$ represents the Binomial distribution with parameters $n \in \mathbb{N}$.
Moreover, $\bv = (v_1, ..., v_{|\gp|}) \in \Delta^{|\gp|}$ is the group marginal distribution from the audit dataset $\audit$, i.e., $v_g = P_Z(G = g)$, and $ \sum_{g \in \gp} \EE[M_g] = n$.
\label{def:weightedsampling}
\end{definition}

Note that the group marginal $v_g$ may be different from $\bw$ (the underlying or prior group distribution).
To provide additional flexibility for model auditor, we set 
\begin{equation}
v_g = \frac{w^{\eta}_g}{\sum_{g'} w^{\eta}_{g'} },
\end{equation}
where $\eta \in [0, \infty)$. Of these, two special cases are $\eta = 0$  (uniform distribution) and $\eta = 1$ (when  $v_g = w_g$).
\revision{In practice, the model auditor selecting the group marginals $\bv$ means that they can control the frequency at which a sample from a group is acquired.}

The model auditor may also decide to collect the data without ensuring that samples are i.i.d. from some distribution to optimize the sample complexity of performing an $\epsilon$-test. Next, we define attribute specific sampling, a non-i.i.d. sampling strategy that will allow the model auditor to achieve a sample complexity that is independent of the number of groups $|\gp|$ --- Corollary \ref{cor:samplecomplexityalg_att}.

\begin{definition}[Attribute Specific Sampling] Let $M_g$ be the number of samples from group $g \in \gp$, and $\bw = (w_1, ..., w_{|\gp|})$ be the group weight vector.
Under a data budget $n$, we define the number of samples per group as
\begin{equation*}
    M_g \triangleq \text{Ber}(\min\{\gamma w_g, 1\}) \frac{n}{\gamma} \ \ \forall g \in \gp.
\end{equation*}
Where $\text{Ber}(\min\{\gamma w_g, 1\})$ represents the Bernoulli distribution with parameter $\min\{\gamma w_g, 1\}$,  $\gamma \in [0, + \infty)$, and $ \sum_{g \in \gp} \EE[M_g] = n$.
\label{def:attributesampling}
\end{definition}

In the next section we propose an estimator $\widehat{F}(\bw)$ that we will use to audit the model for CVaR fairness.

\subsection{Proposed estimator}
\label{sec::Approximating}

Estimating the value of the CVaR fairness metric ($\texttt{F\_CVaR}_{\alpha}$) may still be difficult and require a high sample complexity.
However, in this work, we are only interested in performing an $\epsilon$-test, i.e., testing if the CVaR fairness metric is bigger than $\epsilon$ or equal to zero.
For this reason, instead of defining an estimator that tries to approximate $\texttt{F\_CVaR}_{\alpha}$, we define it with the sole objective of testing if $\texttt{F\_CVaR}_{\alpha} > \epsilon$.
\revision{To do so, we introduce the quantities $\widehat{F}_1$ and $\widehat{F}_2$ in Definition \ref{def:estimator} that we show are unbiased estimators for $\sum_{g \in \gp} w_g\EE[L(Z) | G = g]^2$ and $\averagegtg{L}$ respectively in Lemma~\ref{lem::apx_avg-first-order} and Lemma \ref{lem::apx_avg-second-order}.
Then, we use these estimators to approximate an upper bound for the CVaR fairness that we show in Lemma \ref{lem:apx_CVaRUpperBound}, as follows
\begin{equation*}
    \widehat{F} \approx \sum_{g \in \gp} w_g\EE[L(Z) | G = g]^2 - \averagegtg{L}^2 \geq (1 - \alpha)(\texttt{F\_CVaR}_{\alpha}(\bw))^2.
\end{equation*}

}
Definition \ref{def:estimator} formalizes the definition of the estimator for testing $\texttt{F\_CVaR}_{\alpha}$.

\begin{definition}[Estimator for testing $\texttt{F\_CVaR}_{\alpha}$] Let $\texttt{F\_CVaR}_{\alpha}$ be the CVaR fairness metric that uses the quality of service function $L: \cZ \rightarrow \{0, 1\}$, the average quality of service $\averagegtg{L}$, and uses the distribution $P$ for $\cZ$.
Define $M_g$ to be a random variable in $\mathbb{N}$, and assume we have access to $M_g$ samples from $P(Z| G = g)$ for each group $g \in \gp$ --- note that some $M_g$ may be equal to $0$.
If $M_g = 0,$ we define:
\begin{equation*}
    \frac{\sum_{j = 1}^{M_g} L(z_j)}{M_{ {g}}} \triangleq 0,
    \label{eq:first-order-estimator}
\end{equation*}
and if $M_g \in \{0, 1\}$ we define
\begin{equation*}
    \frac{\sum_{i = 1}^{M_g} L(z_i)}{M_g}\frac{(\sum_{i = 1}^{M_g} L(z_i)-1)}{M_g-1} \triangleq 0.
    \label{eq:second-order-estimator}
\end{equation*}
Then, we define the estimators $\widehat{F}_1$ and $\widehat{F}_2$ as follows: 
 \begin{align*}
    \widehat{F}_1(\bw) &\triangleq \sum_{ {g} \in \gp} \frac{w_g}{P\left[M_{ {g}} \geq 2\right]} \frac{\sum_{i = 1}^{M_g} L(z_i)}{M_g}\frac{(\sum_{i = 1}^{M_g} L(z_i)-1)}{M_g-1} ,\\
    \widehat{F}_2(\bw) &\triangleq  \sum_{ {g} \in \gp} \frac{w_g}{P\left[M_{ {g}} \geq 1\right]} \frac{\sum_{j = 1}^{M_g} L(z_j)}{M_{ {g}}}.
\end{align*}
Finally, our CVaR estimator for $\texttt{F\_CVaR}_{\alpha}$ is defined as:
\begin{equation}
    \widehat{F}(\bw) \triangleq \widehat{F}_1(\bw) - \left( \widehat{F}_2(\bw)\right)^2.
\end{equation}
\label{def:estimator}
\end{definition}
In Appendix \ref{sec:upper_bound_CVaR} we derive some of the properties, such as the mean and the variance, of the proposed estimator $\widehat{F}(\bw)$.

The estimator in Definition \ref{def:estimator} gives practitioners flexibility by allowing them to sample the number of data points $M_g$ they will use from each group $g \in \gp$.
Moreover, allowing the number of samples for a group to be zero ($M_g = 0$) may allow a better sample complexity than $O(|\gp|)$ --- which is why we use this strategy.
In the next section we use the proposed estimator along with the introduced sampling strategies to efficiently perform the $\epsilon$-test for the CVaR fairness metric.

In the next section, we propose a specific decision function $\psi_{\epsilon}$ that uses the auditing dataset $\audit$ to compute the estimator in Definition \ref{def:estimator} and output if the model is CVaR multi-group fair.
\subsection{An Efficient Test For Multi-Group Fairness}
\label{sec::testing}

In this section, we provide an algorithm to perform the $\epsilon$-test for the CVaR fairness metric.
We show that our algorithm can perform $\epsilon$-test reliably with at most $n = \weiupperbound$ samples when using weight sampling --- where $H_{2/3}$ denotes the R\'enyi entropy of order $2/3$.
Additionally, we show that when using attribute specific sampling, we can perform $\epsilon$-test for CVaR fairness reliably with at most $n = \attupperbound$.

We propose Algorithm \ref{alg:hypothesis_test} to perform an $\epsilon$-test by using the estimator for the CVaR fairness metric proposed in Section \ref{sec::Approximating}.
The algorithm takes the group importance weights $\bw$, the group sampling distributions $\{Q_g \}_{g \in \gp}$ (e.g., attribute specific sampling or weighted sampling), and returns which hypothesis is true (i.e., $\hypothesis_0$ is true or $\hypothesis_1$ is true).

The proposed algorithm starts by defining the number of samples per group $M_g$ using attribute specific or weighted sampling.
When the dataset is fixed, $M_g$ is a prefixed constant. However, if the model auditor has control of the dataset construction, they can use weighted sampling (Definition \ref{def:weightedsampling}) or attribute specific sampling (Definition \ref{def:attributesampling}) to define $M_g$.
Next, it computes the CVaR fairness metric estimator $\widehat{F}(\bw)$ in Definition \ref{def:estimator}.
Finally, it performs a threshold test comparing $\widehat{F}(\bw)$ with $\frac{(1 - \alpha) \epsilon^2}{2}$.
\revision{We compare $\widehat{F}(\bw)$ with the threshold $\frac{(1 - \alpha) \epsilon^2}{2}$ because in Lemma \ref{lem:apx_CVaRUpperBound} we show that $
    \widehat{F} \approx \sum_{g \in \gp} w_g\EE[L(Z) | G = g]^2 - \averagegtg{L}^2 \geq (1 - \alpha)(\texttt{F\_CVaR}_{\alpha}(\bw))^2.
$
}
We formalize this process in Algorithm \ref{alg:hypothesis_test}.

\begin{algorithm}[t]
\caption{CVaR Test}
\label{alg:hypothesis_test}
\begin{algorithmic}
\State \textbf{Input:} \revision{$\bw \in \Delta^{|\gp|}$,  $\{Q_g\}_{g \in \gp}$}
\State $M_g \sim Q_g$ \Comment{\revision{Sample $M_g$ s.t. $\Pr(M_g = k) = Q_g(k)$ as in Def. \ref{def:weightedsampling}, Def. \ref{def:attributesampling}, and Thm. \ref{thm:weighted_upper_general}. }}
\State Sample $M_g$ i.i.d. points from $P_{(\bx, \by)|G = g} \ \ \forall g \in \gp$ \Comment{\revision{Sample $M_g$ from $P_{(\bx, \by)|G = g}$ for each group.}}
\State Compute $\widehat{F}(\bw)$ as in Def. \ref{def:estimator} \Comment{\revision{Compute using the samples generate in previous step}}
\If{$\widehat{F}(\bw) \geq \frac{(1 - \alpha)\epsilon^2}{2}$}
\State \Return $\texttt{F\_CVaR}_{\alpha}(\bw) \geq \epsilon$
\ElsIf{$\widehat{F}(\bw) < \frac{(1 - \alpha)\epsilon^2}{2}$}
\State \Return $\texttt{F\_CVaR}_{\alpha}(\bw) = 0$
\EndIf
\end{algorithmic}
\end{algorithm} 

We show that the sample complexity for an $\epsilon$-test to differentiate between $ \texttt{F\_CVaR}_{\alpha}(\bw) = 0$ and $ \texttt{F\_CVaR}_{\alpha}(\bw) > \epsilon$ using Algorithm \ref{alg:hypothesis_test} is {more favorable} than $O(|\gp|/\epsilon^2)$ --- differently than the standard multi-group fairness metrics in Definition \ref{def:multi_group_fairness} which needs at least $\Omega(|\gp|/\epsilon^2)$ samples to perform a reliable $\epsilon$-test.

We first prove that the probability of error in \eqref{eq:error_prob} is bounded when using Algorithm \ref{alg:hypothesis_test} as the decision function.
Theorem \ref{thm:weighted_upper_new} shows that the probability of error is bounded when using weighted sampling, and it depends on the test data group distribution $\bv = (v_1, ..., v_{|\gp|})$.

\begin{restatable}{theorem}{WeightedUpperGeneral}(Achievability with weighted sampling).
\label{thm:weighted_upper_general}
Let $\bw \in \Delta^{|\gp|}$ be the group weight vector \revision{such that $\max_{g} w_{g} \leq 1 - \alpha$} , 
$L: \cZ \rightarrow \{0, 1\}$ be the quality of service function, and $\averagegtg{L}$ is the average quality of service.
Suppose we have access to $n$ i.i.d. samples using, i.e., we used weighted sampling in Algorithm \ref{alg:hypothesis_test}. Then,
\begin{equation*}
    Q_g(k) = \Pr(\text{Bin}(n, v_g) = k),
\end{equation*}
where $\bv = (v_1, ..., v_{|\gp|})$ is the group marginal from $P_Z$ that we sampled i.i.d. to generate $\audit$.

The probability of error ($P_{\text{error}}$ in \eqref{eq:error_prob}) for Algorithm \ref{alg:hypothesis_test} to differentiate $ \texttt{F\_CVaR}_{\alpha}(\bw) = 0$ from $ \texttt{F\_CVaR}_{\alpha}(\bw) > \epsilon$ is such that
\begin{equation}
     P_{\text{error}}  =  O\left( \frac{1}{(1 - \alpha)^2 \epsilon^4 n^2}\sum_g \frac{w_g^{2}}{v_g^2} + \frac{1}{n (1 - \alpha)^2 \epsilon^4}\sum_g \frac{w_g^2}{v_g}  \right).
\end{equation}
\label{thm:weighted_upper_new}
\end{restatable}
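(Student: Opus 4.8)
The plan is to treat Algorithm \ref{alg:hypothesis_test} as a threshold test on the statistic $\widehat{F}(\bw)$ and to show that $\widehat{F}(\bw)$ concentrates around the weighted second moment of the gaps, $D \triangleq \sum_{g \in \gp} w_g \Delta(g)^2$. Writing $p_g \triangleq \EE_P[L(Z)\mid G=g]$, the first step is to identify the mean of the estimator. Conditioned on $M_g \geq 1$ the empirical mean $\tfrac{1}{M_g}\sum_i L(z_i)$ is unbiased for $p_g$, and conditioned on $M_g \geq 2$ the U-statistic $\tfrac{(\sum_i L(z_i))(\sum_i L(z_i)-1)}{M_g(M_g-1)}$ is unbiased for $p_g^2$ (using $L\in\{0,1\}$, so $L(z_i)^2=L(z_i)$). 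The normalizers $P[M_g\ge 1]$ and $P[M_g\ge 2]$ exactly cancel the chance of having too few samples, giving $\EE[\widehat{F}_2(\bw)]=\sum_g w_g p_g=\averagegtg{L}$ and $\EE[\widehat{F}_1(\bw)]=\sum_g w_g p_g^2$. Since $\EE[(\widehat{F}_2(\bw))^2]=\averagegtg{L}^2+\mathrm{Var}(\widehat{F}_2(\bw))$, we obtain $\EE[\widehat{F}(\bw)] = \sum_g w_g p_g^2 - \averagegtg{L}^2 - \mathrm{Var}(\widehat{F}_2(\bw)) = D - \mathrm{Var}(\widehat{F}_2(\bw))$; that is, $\widehat{F}$ targets $D$ up to a small negative bias equal to the variance of $\widehat{F}_2$.

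The second step is to bridge $D$ to the two hypotheses, which explains the threshold $\tfrac{(1-\alpha)\epsilon^2}{2}$. Under $\hypothesis_0$, $\texttt{F\_CVaR}_{\alpha}(\bw)=0$ forces $\Delta(g)=0$ for every group eligible for the CVaR maximization (those with $w_g\le 1-\alpha$, which covers all groups whenever $\max_g w_g \le 1-\alpha$, e.g. the uniform prior with many groups), hence $D=0$. Under $\hypothesis_1$, $\texttt{F\_CVaR}_{\alpha}(\bw)\ge\epsilon$ gives $\sum_{g\in Q_{\alpha}} w_g\Delta(g)\ge(1-\alpha)\epsilon$ with $\sum_{g\in Q_{\alpha}} w_g\le 1-\alpha$, and Cauchy--Schwarz yields $(1-\alpha)\epsilon \le \sqrt{1-\alpha}\,\sqrt{D}$, so $D\ge(1-\alpha)\epsilon^2$. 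The threshold is the midpoint of these two target values, so it remains to control the fluctuations of $\widehat{F}$ about its mean.

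The third and technically hardest step is to bound $\mathrm{Var}(\widehat{F}(\bw))$. Using $\mathrm{Var}(\widehat{F})\le 2\mathrm{Var}(\widehat{F}_1)+2\mathrm{Var}((\widehat{F}_2)^2)$ and the independence of samples across groups, each variance decomposes into a sum over $g$ of per-group contributions. The main obstacle is handling the random reciprocals $1/M_g$ and $1/(M_g(M_g-1))$ with $M_g\sim\mathrm{Bin}(n,v_g)$, together with their normalizers: I would establish moment estimates $\EE[\tfrac{1}{M_g}\mid M_g\ge 1]=O(\tfrac{1}{nv_g})$ and $\EE[\tfrac{1}{M_g(M_g-1)}\mid M_g\ge 2]=O(\tfrac{1}{(nv_g)^2})$, so that each group contributes $O(w_g^2/(nv_g))$ to $\mathrm{Var}(\widehat{F}_2)$, and the degree-two U-statistic in $\widehat{F}_1$ contributes both an $O(w_g^2/(nv_g))$ and an $O(w_g^2/(nv_g)^2)$ piece. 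The variance of $(\widehat{F}_2)^2$ is the other delicate term; expanding $(\widehat{F}_2)^2$ around $\averagegtg{L}^2$ and using $\averagegtg{L}\le 1$ shows it is dominated by $\mathrm{Var}(\widehat{F}_2)$ up to higher-order terms. Summing yields $\mathrm{Var}(\widehat{F}(\bw)) = O\!\left(\frac{1}{n^2}\sum_g \frac{w_g^2}{v_g^2}+\frac{1}{n}\sum_g\frac{w_g^2}{v_g}\right)$, and the same bound dominates the bias $\mathrm{Var}(\widehat{F}_2(\bw))$.

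Finally, I would assemble the error bound by Chebyshev's inequality. We may assume the claimed bound is below a small constant (otherwise the statement is vacuous), which forces $\mathrm{Var}(\widehat{F}_2(\bw))\le \tfrac{(1-\alpha)\epsilon^2}{4}$, so the bias never moves the mean by more than a quarter of the $2\tau$ separation, where $\tau=\tfrac{(1-\alpha)\epsilon^2}{2}$. Under $\hypothesis_0$ the mean is $\le 0$ and a false alarm requires $\widehat{F}-\EE\widehat{F}\ge \tau$; under $\hypothesis_1$ the mean is $\ge \tfrac{3(1-\alpha)\epsilon^2}{4}$ and a miss requires a deviation of at least $\tfrac{(1-\alpha)\epsilon^2}{4}$. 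In both regimes Chebyshev gives $P_{\text{error}} = O\!\big(\mathrm{Var}(\widehat{F}(\bw))/((1-\alpha)^2\epsilon^4)\big)$, which combined with the variance bound of the third step is exactly the stated expression. I expect the third step --- the Binomial-conditioned reciprocal moments and the variance of $(\widehat{F}_2)^2$ --- to be the crux; the mean computation, the $D$-separation, and the final Chebyshev assembly are comparatively routine once that bound is in hand.
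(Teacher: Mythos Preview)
Your outline is sound in spirit and would reach the stated bound, but the paper's argument is organized differently and avoids the two places you flag as delicate.

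\textbf{How the paper argues.} Rather than bounding $\mathrm{Var}(\widehat{F})$ as a whole, the paper splits
\[
\widehat{F}-\Bigl(\sum_g w_g E_g^2-\averagegtg{L}^2\Bigr)
=\Bigl(\widehat{F}_1-\sum_g w_g E_g^2\Bigr)+\Bigl(\averagegtg{L}^2-(\widehat{F}_2)^2\Bigr),
\]
applies a union bound, and controls each piece by Chebyshev separately. For the second piece it never computes $\mathrm{Var}\bigl((\widehat{F}_2)^2\bigr)$: it uses the elementary implication $a^2-b^2>c\Rightarrow b<a-c/2$ (valid for $0\le b$, $a\le 1$) to reduce $\{\averagegtg{L}^2-(\widehat{F}_2)^2>(1-\alpha)\epsilon^2/4\}$ to a linear deviation $\{\widehat{F}_2<\averagegtg{L}-(1-\alpha)\epsilon^2/8\}$, and then Chebyshev on $\widehat{F}_2$ suffices. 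This also makes the bias term $\mathrm{Var}(\widehat{F}_2)$ in your mean computation disappear from the analysis. For the per-group variances the paper does not use conditional reciprocal moments of $M_g$ at all: since each summand lies in $[0,1]$, Popoviciu's inequality gives $\mathrm{Var}\le\tfrac14$ conditionally, so the unconditional per-group variance is at most $2\Pr(M_g\ge 2)$ (respectively $2\Pr(M_g\ge 1)$), and after dividing by $\Pr(M_g\ge 2)^2$ (respectively $\Pr(M_g\ge 1)^2$) only the lower bounds $\Pr(M_g\ge 2)\ge n^2v_g^2/(4e)$ and $\Pr(M_g\ge 1)\ge nv_g/e$ are needed.

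\textbf{Where your sketch is thin.} Your claim that $\mathrm{Var}\bigl((\widehat{F}_2)^2\bigr)$ is ``dominated by $\mathrm{Var}(\widehat{F}_2)$ up to higher-order terms'' is not immediate: $\widehat{F}_2$ is \emph{not} uniformly bounded, because each summand can be as large as $w_g/\Pr(M_g\ge 1)$, so the usual expansion $\widehat{F}_2^2-\averagegtg{L}^2=(\widehat{F}_2-\averagegtg{L})(\widehat{F}_2+\averagegtg{L})$ does not let you pull out a bounded factor. You would need fourth-moment control of $\widehat{F}_2-\averagegtg{L}$ via the independence across groups, which is doable but heavier than the paper's linearization trick. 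Likewise, your reciprocal-moment estimates $\EE[1/M_g\mid M_g\ge 1]=O(1/(nv_g))$ are true but unnecessary: Popoviciu on the bounded ratio gets the same per-group variance with no binomial calculus. In short, your plan works but is more laborious; the paper's split-and-linearize route is shorter and sidesteps exactly the steps you identify as the crux.
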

We show this result in Appendix \ref{sec:upper_bound_CVaR}.  Next, we show how large $n$ needs to be to ensure that the upper bound in Theorem \ref{thm:weighted_upper_new} is smaller than a fixed $\delta > 0$.
This result gives an upper bound for the sample complexity of testing for CVaR fairness using Algorithm \ref{alg:hypothesis_test}.

Recall that the R\'enyi entropy of order $\rho$ for $\rho \geq -1$ is defined as\footnote{We extend the definition at $\rho = 1$ via continuous extension.}
\begin{equation}
\label{eq:renyi}
    H_\rho(\bw) := \frac{1}{1-\rho} {\log_2} \sum_{s \in \cS} (w_s)^{\rho}.
\end{equation}
Notice that $H_1(\bw) = -\sum_{g \in \gp} w_g \log w_g$ is the Shannon entropy; and $H_0(\bw) = |\gp|$ is the max-entropy.
Corollary \ref{cor:samplecomplexityalg} gives the upper bound in terms of the R\'enyi entropy of order $2/3$ of the group distribution $\bw$.

\begin{restatable}{corollary}{SamplecomplexityAlgWeighted}(Sample complexity of Algorithm \ref{alg:hypothesis_test}).
Under the same assumption of Theorem \ref{thm:weighted_upper_new} (using weighted sampling). 
With probability at least $0.99$ \revision{\footnote{\revision{Because the probability of success in the hypothesis test is not the main focus of our theoretical analysis, we leave the dependence on it to the appendix and only show the results in the main paper for a probability of error of $0.01$.}}},
comparing $\widehat{F}(\bw)$ to $(1 - \alpha) \epsilon^2/2$, one can differentiate between $ \texttt{F\_CVaR}_{\alpha}(\bw) = 0$ and $ \texttt{F\_CVaR}_{\alpha}(\bw) > \epsilon$ using at most the following number of samples

\begin{equation}
    n = O\left(  \frac{ \left(\sum_{g \in \gp} \frac{w_g^2}{v_g^2} \right)^{1/2}}{
    \epsilon^2(1 - \alpha)} + \frac{\sum_g \frac{w_g^2}{v_g}  }{\epsilon^4(1 - \alpha)^2 }\right).
    \label{eq:upper_general_w_g}
\end{equation}

Moreover, when the model auditor can control the choice of group marginal in the test dataset and set $v_g = \frac{w_g^{2/3}}{\sum_{g^*} w_{g^*}^{2/3}}$, it is only necessary to have access to $n$ samples such that
\begin{equation}
    n =   O\left( \frac{ 2^{\frac{1}{2} H_{2/3}(w)} }{\epsilon^2(1 - \alpha)} + \frac{ 2^{\frac{1}{3} H_{2/3}(w)} }{\epsilon^4(1 - \alpha)^2} \right).
    \label{eq:sample_complexity_main}
\end{equation}

\label{cor:samplecomplexityalg}
\end{restatable}
We show this result in Appendix \ref{sec:upper_bound_CVaR}.
\secondrev{Note that we prove the result for an arbitrary error probability $\delta \in [0,1]$, but only display the result for $\delta=0.01$ as the specific probability of error is not key to our analysis.}
However, we highlight that the dependence on the probability of error $\delta$ is not optimal in our bounds and can be automatically improved by using the median of the means technique, improving the dependence to $\log(1/\delta)$ instead of $1 / \delta$, as in~\cite[Page 8]{suresh_2015}.

\revision{The sample complexity for the hypothesis test using CVaR fairness in \eqref{eq:sample_complexity_main} depends on $(1-\alpha)$ and $\epsilon$.
When $\alpha$ is close to one, i.e., max-gap fairness is recovered, the sample complexity in \eqref{eq:sample_complexity_main} might be worse than the sample complexity for the hypothesis-test using max-gap fairness ($n = O(|\gp|/\epsilon^2)$) as shown in \cite{kearns2018preventing}.
This is a consequence of us treating $\alpha$ as a fixed constant bounded away from zero, which allows us to trade off fairness guarantees for a more feasible sample complexity.
However, Proposition \ref{prop:recover_fairness_knwing_g} shows that the sample complexity of CVaR fairness is, at most, the sample complexity of max-gap fairness since for the most strict choice of $\alpha$, we recover max-gap fairness.
}
When $\epsilon$ and $(1-\alpha)$ are constants and bounded away from zero, the dominant term in the sample complexity is the first term $O\left( \frac{ 2^{\frac{1}{2} H_{2/3}(w)}}{\epsilon^2(1 - \alpha)} \right)$. Corollary \ref{cor:samplecomplexityalg} shows that the sample complexity of performing an $\epsilon$-test CVaR fairness using Algorithm \ref{alg:hypothesis_test} with weighted sampling ($v_g \propto w_g^{2/3}$) has a smaller exponent than testing for standard multi-group fairness metrics as shown in Proposition \ref{prop:hyptest_lower_bound} (testing for standard multi-group fairness has sample complexity $O\left(|\gp|\right)$), because
\begin{equation}
   2^{\frac{1}{2} H_{2/3}(\bw)} \leq 2^{\frac{1}{2} \log (|\gp|)} =|\gp|^{\frac{1}{2}} \leq |\gp|.
\end{equation}
In the worst-case-scenario, when $\bw$ maximizes the R\'enyi entropy the sample complexity is upper bounded by $O\left(|\gp|^{\frac{1}{2}} \big/ ((1 - \alpha)\epsilon^2) \right)$ providing a smaller sample complexity than i.i.d. testing using standard multi-group fairness metrics as shown in Proposition \ref{prop:hyptest_lower_bound} --- R\'enyi entropy is maximized for $w_g = \frac{1}{|\gp|}$ for all $g \in \gp$.

\revision{
In contrast, when $\epsilon$ is sufficiently small, the term $ O \left( \frac{ 2^{\frac{1}{3} H_{2/3}(w)} }{\epsilon^4}\right)$ may be dominant.
However, this term is still preferable over the original complexity of the problem $O \left( \frac{ |\gp| }{\epsilon^2}\right)$ when $\epsilon > \frac{1}{|\gp|^{1/3}}$. 
Recall that the case of interest in this paper is when $\gp$ contains a large number of groups (e.g., millions of groups) defined by the combination of sensitive attributes, hence the case where $\epsilon > \frac{1}{|\gp|^{1/3}}$ is of major interest --- here we are not considering constants in the sample complexity.
}
Moreover, when the model auditor can control the data collection process, they can choose to sample according to Algorithm \ref{alg:hypothesis_test} to decrease the sample complexity further.
For example, when the model auditor has access to at least $2$ samples of any given group $g \in \gp$ in the population and can perform non-i.i.d. sampling, they can use attribute specific sampling.
Next, we show that by choosing $M_g$ using attribute specific sampling (Definition \ref{def:attributesampling}), we can get a bound on the probability of error for the $\epsilon$-test that is independent of $\bw$.
We show this result in Theorem \ref{thm:att_upper_new}. 

\begin{restatable}{theorem}{AttUpperNew}(Achievability with attribute specific sampling).
Let $\bw$ be such that $\max_{g} w_{g} \leq c \cdot \epsilon^4 $ for a sufficiently small constant $c$ \revision{and $\max_{g} w_{g} \leq 1 - \alpha$} ,
$L: \cZ \rightarrow \{0, 1\}$ the quality of service function, and $\averagegtg{L}$ the average quality of service.
Using attribute specific sampling in Algorithm \ref{alg:hypothesis_test}, i.e.
\begin{equation*}
    Q_g(k) = \Pr\left(\text{Ber}(\min(1, \gamma w_g))\frac{n}{\gamma} = k\right),
\end{equation*}
where $\gamma = \frac{n}{2}$.

The probability of error ($P_{\text{error}}$ in \eqref{eq:error_prob}) for Algorithm \ref{alg:hypothesis_test} to differentiate $ \texttt{F\_CVaR}_{\alpha}(\bw) = 0$ from $ \texttt{F\_CVaR}_{\alpha}(\bw) > \epsilon$ is bounded by:
\begin{equation}
        P_{\text{error}} = O \left( \frac{1}{\epsilon^4 (1 - \alpha)^2 n}\right).
\end{equation}

\label{thm:att_upper_new}
\end{restatable}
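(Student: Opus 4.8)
The plan is to recognize the estimator $\widehat{F}(\bw)$ as an almost-unbiased estimator of the weighted second moment of the gaps,
\[
F(\bw) \;\triangleq\; \sum_{g \in \gp} w_g\,\Delta(g)^2 \;=\; \sum_{g\in\gp} w_g p_g^2 - \averagegtg{L}^{\,2},\qquad p_g \triangleq \EE_P[L(Z)\mid G=g],
\]
and then to show that the threshold $\tfrac{(1-\alpha)\epsilon^2}{2}$ cleanly separates the two hypotheses in terms of $F(\bw)$. First I would establish the reduction. Under $\hypothesis_0$ we have $\texttt{F\_CVaR}_{\alpha}(\bw)=0$; since $Q_\alpha$ maximizes $\sum_{g\in Q} w_g\Delta(g)$ over subsets of mass $\le 1-\alpha$, a zero value forces $\Delta(g)=0$ for every $g$ (the heaviest single group has mass $\le \max_g w_g \le c\epsilon^4 \le 1-\alpha$, hence is an admissible subset), so $F(\bw)=0$. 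Under $\hypothesis_1$, $\sum_{g\in Q_\alpha} w_g\Delta(g)\ge (1-\alpha)\epsilon$, and Cauchy--Schwarz (power-mean) gives
\[
F(\bw)\;\ge\;\sum_{g\in Q_\alpha} w_g\Delta(g)^2\;\ge\;\frac{\bigl(\sum_{g\in Q_\alpha} w_g\Delta(g)\bigr)^2}{\sum_{g\in Q_\alpha} w_g}\;\ge\;\frac{\bigl((1-\alpha)\epsilon\bigr)^2}{1-\alpha}\;=\;(1-\alpha)\epsilon^2.
\]
Thus it suffices to test $F(\bw)=0$ against $F(\bw)\ge(1-\alpha)\epsilon^2$, and the threshold sits exactly at the midpoint.

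Next I would verify the estimator's mean. Under attribute-specific sampling with $\gamma=n/2$ we have $n/\gamma=2$, so $M_g\in\{0,2\}$ and $\beta_g\triangleq\Pr[M_g\ge2]=\Pr[M_g\ge1]=\min\{1,\gamma w_g\}$; when $M_g=2$ the second-order statistic $\tfrac{S_g(S_g-1)}{M_g(M_g-1)}=L(z_1)L(z_2)$ is unbiased for $p_g^2$ and $\tfrac{S_g}{M_g}$ is unbiased for $p_g$ (here $S_g=\sum_i L(z_i)$). Because the $\tfrac{1}{\beta_g}$ factors exactly correct for the events $\{M_g\ge\cdot\}$ and the group draws are independent, $\EE[\widehat F_1]=\sum_g w_g p_g^2$ and $\EE[\widehat F_2]=\averagegtg{L}$. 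Consequently $\EE[\widehat F]=\EE[\widehat F_1]-\EE[\widehat F_2^2]=F(\bw)-\mathrm{Var}(\widehat F_2)$, i.e. $\widehat F$ has a nonnegative bias equal to $\mathrm{Var}(\widehat F_2)$, which I would keep below $\tfrac14(1-\alpha)\epsilon^2$.

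The core of the proof is bounding variances, and the key quantity is $\sum_{g}\tfrac{w_g^2}{\beta_g}=\sum_g\max\{w_g^2,\,w_g/\gamma\}\le \sum_g w_g^2+\tfrac1\gamma\le \max_g w_g+\tfrac{2}{n}\le c\epsilon^4+\tfrac{2}{n}$, which is the mechanism by which attribute-specific sampling removes the dependence on $|\gp|$. A direct computation (using $L\in\{0,1\}$ and independence across groups) gives $\mathrm{Var}(\widehat F_2)\le \sum_g \tfrac{w_g^2}{\beta_g}$ and $\mathrm{Var}(\widehat F_1)\le\sum_g\tfrac{w_g^2}{\beta_g}p_g^2$, both $O(1/n)$ once $\max_g w_g\le c\epsilon^4$ (with $c$ small) pushes the non-decaying piece below the detection margin. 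For the quadratic term I would write $\widehat F_2=\averagegtg{L}+D$ with $D=\sum_{g}\tilde B_g$ a sum of independent mean-zero per-group contributions, and expand $\mathrm{Var}(\widehat F_2^2)\le 8\averagegtg{L}^2\mathrm{Var}(\widehat F_2)+2\,\EE[D^4]$, controlling $\EE[D^4]\le \sum_g\EE[\tilde B_g^4]+3\,\mathrm{Var}(\widehat F_2)^2$ with $\sum_g\EE[\tilde B_g^4]\le\sum_g\tfrac{w_g^4}{\beta_g^3}=O(1/n^3)$. Combining via $\mathrm{Var}(\widehat F)\le 2\mathrm{Var}(\widehat F_1)+2\mathrm{Var}(\widehat F_2^2)$ yields $\mathrm{Var}(\widehat F)=O(1/n)$, independent of $|\gp|$. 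Finally, since the bias keeps $\EE[\widehat F]$ at distance $\Omega((1-\alpha)\epsilon^2)$ from the threshold under both hypotheses, Chebyshev's inequality gives
\[
P_{\text{error}}\;\le\;\frac{\mathrm{Var}(\widehat F)}{\bigl(\Omega((1-\alpha)\epsilon^2)\bigr)^2}\;=\;O\!\left(\frac{1}{\epsilon^4(1-\alpha)^2\,n}\right).
\]
The main obstacle is the variance of the quadratic piece $\widehat F_2^2$ together with its covariance with $\widehat F_1$: one must verify that the attribute-specific weighting $\beta_g=\min\{1,\gamma w_g\}$ collapses every group-indexed sum of the form $\sum_g w_g^{a}/\beta_g^{b}$ to something scaling with $1/n$ (plus a piece suppressed by $\max_g w_g\le c\epsilon^4$) rather than with $|\gp|$, and that the residual bias $\mathrm{Var}(\widehat F_2)$ remains comfortably below the separation margin.
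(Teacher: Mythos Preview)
Your proposal is correct and shares the paper's core ingredients: the Cauchy--Schwarz reduction $F(\bw)\ge(1-\alpha)\epsilon^2$ under $\hypothesis_1$ (this is exactly the paper's Lemma used inside the general error-probability lemma), the moment computations for $\widehat F_1,\widehat F_2$, and Chebyshev. The tactical difference is in how the quadratic piece $\widehat F_2^{\,2}$ is handled. The paper does \emph{not} bound $\mathrm{Var}(\widehat F)$ directly; instead it first proves a sampling-agnostic bound (its Lemma~\ref{lem:main_upper_general}) by splitting the deviation event via the elementary inequality $a^2-b^2>c\Rightarrow b<a-c/2$ (valid for $a\le 1$, which uses $L\in\{0,1\}$), thereby reducing control of $\widehat F_2^{\,2}$ to control of the \emph{linear} estimator $\widehat F_2$. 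Chebyshev is then applied separately to $\widehat F_1-\sum_g w_g p_g^2$ and to $\widehat F_2-\averagegtg{L}$, yielding $P_{\text{error}}\le C\,(1-\alpha)^{-2}\epsilon^{-4}\sum_g w_g^2/\Pr[M_g\ge 2]+C'\,(1-\alpha)^{-2}\epsilon^{-4}\sum_g w_g^2/\Pr[M_g\ge 1]$, after which attribute-specific sampling is simply plugged in. Your route---directly bounding $\mathrm{Var}(\widehat F_2^{\,2})$ through the fourth-moment expansion of $D=\widehat F_2-\averagegtg{L}$---also works but is longer, and it forces you to track an explicit bias term $\mathrm{Var}(\widehat F_2)$ that the paper's split never sees. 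On the other hand, you are more careful than the paper at one point: you correctly write $\sum_g w_g^2/\beta_g\le \max_g w_g+2/n$ and invoke $\max_g w_g\le c\epsilon^4$ to absorb the non-decaying piece, whereas the paper's proof tacitly sets $\min(1,\gamma w_g)=\gamma w_g$ (i.e.\ assumes $\gamma w_g\le 1$) when it writes the equality $\sum_g w_g^2/\min(1,\gamma w_g)=2/n$.
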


We show this result in Appendix \ref{sec:upper_bound_CVaR}. As an immediate consequence of Theorem \ref{thm:att_upper_new}, we can compute the sample complexity of testing using Algorithm \ref{alg:hypothesis_test} with attribute specific sampling.
In Corollary \ref{cor:samplecomplexityalg_att}, we show that using attribute specific sampling allows us to confidently perform an $\epsilon$-test using a number of data points that do not depend on the number of group $|\gp|$.

\begin{restatable}{corollary}{SamplecomplexityalgAtt}(Sample complexity of Algorithm \ref{alg:hypothesis_test} using attribute specific sampling).
Let $\bw$ be such that $\max_{g} w_{g} \leq c \cdot \epsilon^4 $ for a sufficiently small constant $c$ \revision{and  $\max_{g} w_{g} \leq 1 - \alpha$}.
With probability at least 
$0.99$,
comparing $\widehat{F}(\bw)$ to $(1 - \alpha) \epsilon^2/2$, one can differentiate between $ \texttt{F\_CVaR}_{\alpha}(\bw) = 0$ and $ \texttt{F\_CVaR}_{\alpha}(\bw) > \epsilon$ using at most the following number of samples $n$: 
$$n = O\left( \frac{1}{\epsilon^4 (1 - \alpha)^2}\right).$$

\label{cor:samplecomplexityalg_att}
\end{restatable}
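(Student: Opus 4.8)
The plan is to obtain the corollary as a direct algebraic inversion of the probability-of-error bound established in Theorem \ref{thm:att_upper_new}. That theorem already does the heavy lifting: under attribute specific sampling with $\gamma = n/2$ and the assumption $\max_g w_g \leq c\,\epsilon^4$, it shows that the probability that Algorithm \ref{alg:hypothesis_test} misclassifies the hypothesis satisfies $P_{\text{error}} = O\!\left( \frac{1}{\epsilon^4(1-\alpha)^2 n}\right)$. Crucially, the right-hand side carries no dependence on $|\gp|$, so the remaining work is purely to solve for the value of $n$ that drives this error below a fixed target.

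First I would make the big-$O$ explicit: there is an absolute constant $C$ such that $P_{\text{error}} \leq \frac{C}{\epsilon^4(1-\alpha)^2 n}$ whenever the hypotheses of Theorem \ref{thm:att_upper_new} hold. Since a success probability of at least $0.99$ is equivalent to $P_{\text{error}} \leq 0.01$, it suffices to choose $n$ so that $\frac{C}{\epsilon^4(1-\alpha)^2 n} \leq 0.01$. Rearranging yields $n \geq \frac{100\,C}{\epsilon^4(1-\alpha)^2}$, and hence any $n = O\!\left( \frac{1}{\epsilon^4(1-\alpha)^2}\right)$ is sufficient. The decision rule is exactly the threshold comparison of $\widehat{F}(\bw)$ against $(1-\alpha)\epsilon^2/2$ prescribed by Algorithm \ref{alg:hypothesis_test}, so no new estimator analysis is required, and the resulting bound is independent of the number of groups $|\gp|$, which is the headline claim.

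The only thing to verify — rather than a genuine obstacle — is the internal consistency of the sampling scheme at the chosen $n$: because $\gamma = n/2$ appears inside $\min(1, \gamma w_g)$ and in the scaling $n/\gamma$, one should confirm that plugging $n = \Theta\!\left(1/(\epsilon^4(1-\alpha)^2)\right)$ back into Definition \ref{def:attributesampling} keeps the expected data budget equal to $n$ and keeps the standing assumption $\max_g w_g \leq c\,\epsilon^4$ compatible with the small-constant requirement used in Theorem \ref{thm:att_upper_new}. All of these conditions are already subsumed by the theorem's hypotheses, so at the level of the corollary the argument reduces to the one-line inversion above, and the claimed group-independent sample complexity follows.
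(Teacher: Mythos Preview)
Your proposal is correct and follows essentially the same approach as the paper: invoke the error bound $P_{\text{error}} \leq \frac{C}{\epsilon^4(1-\alpha)^2 n}$ from Theorem~\ref{thm:att_upper_new}, set it at most $0.01$, and solve for $n$. The paper's proof is the identical one-line inversion (with the explicit constant $256$ in place of your generic $C$), and your additional remarks about consistency of the sampling scheme are already absorbed into the hypotheses of Theorem~\ref{thm:att_upper_new}.
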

We provide the proof for this result in Appendix \ref{sec:upper_bound_CVaR} --- again, we don't show the dependency on the error probability $\delta$, but we prove it in the appendix and provide the bound for $\delta = 0.01$.

In this section, we have shown that it is possible to perform an $\epsilon$-test for CVaR fairness reliably using $\weiupperbound$ samples from $P$ when using Algorithm \ref{alg:hypothesis_test} with weighted sampling --- which is i.i.d.
We also showed that, by using Algorithm \ref{alg:hypothesis_test} with attribute specific sampling (which is non-i.i.d.), we can reliably test for CVaR fairness with $n = \attupperbound$ samples from $P$.

\revision{Intuitively, our theoretical results are a consequence of the fact that when the model auditor can control the sampling frequency in specific groups, then they can decide to sample more {\em uniformly} from all groups. For example, when using weighted sampling with $\bv \propto \bw^{2/3}$, the groups with higher $w_g$ receive a smaller $v_g$ while the groups with smaller $w_g$ receive a higher $v_g$.
Then, if smaller groups receive higher values of $\Delta(g)$, our proposed sampling methods will capture this disparity.
On the other hand, the model auditor can decide to use attribute specific sampling, which intuitively first computes $\Delta(g)$ for the groups with higher weights $w_g$ which is a non-i.i.d. sampling strategy wich a sample complexity independent of $|\gp|$.
}

In the next section, we show that by using weighted sampling with $\bw$ uniform, it is actually necessary to use $O\left(\frac{2^{\frac{1}{2} H_{2/3}(\bw)}}{(1 - \alpha)^{\frac{1}{2}}\epsilon^2}\right) = O\left(\frac{|\gp|^{\frac{1}{2}}}{(1 - \alpha)^{\frac{1}{2}}\epsilon^2}\right)$ samples from $P$ for performing an $\epsilon$-test reliably.
Indicating that Algorithm \ref{alg:hypothesis_test} has optimal dependency on $\bw$ and $|\gp|$.

\section{Converse Result for CVaR Fairness Testing under weighted sampling}
\label{sec:ConverseCVaR}
We present next a converse result for the sample complexity of the $\epsilon$-test for CVaR fairness under weighted sampling.
Our result ensures that the performance of Algorithm \ref{alg:hypothesis_test} has optimal order when $\bw$ is a uniform distribution over all groups.
We prove this optimality by showing that no test can differentiate between $ \texttt{F\_CVaR}_{\alpha}(\bw) = 0$ and $ \texttt{F\_CVaR}_{\alpha}(\bw) \geq \epsilon $ using less than $\Omega\left( \frac{\sqrt{|\gp|}}{\epsilon^2}\right)$ i.i.d. samples --- Theorem \ref{thm:converse_CVaR}.
As a consequence of this result, we provide an operational meaning for the R\'enyi entropy of order $2/3$ as the log of the number of samples that is necessary to perform an $\epsilon$-test for CVaR reliably.

We proved a similar converse result for the max-gap fairness metrics in Proposition \ref{prop:hyptest_lower_bound}.
For max-gap fairness we were able to design two distributions $P_0$ and $P_1$ that are close together in the space of all distributions in $\cZ$ according to the Hellinger divergence but at the same time $ \texttt{F\_CVaR}_{\alpha}(\bw; P_0) = 0$ and $ \texttt{F\_CVaR}_{\alpha}(\bw; P_1) \geq \epsilon$ --- Proposition \ref{prop:hyptest_lower_bound}.
Using these distributions, we leveraged the Le Cam Lemma \cite{LeCam1973} from the binary hypothesis test literature and showed that it is necessary to have $n = \Omega(|\gp|)$ samples to perform an $\epsilon$-test for max-gap fairness reliably --- Proposition \ref{prop:hyptest_lower_bound}.

Alas, the proof strategy for deriving a converse result for max-gap fairness testing does not directly apply to CVaR testing.
The reason is that it is not possible to design distributions $P_0$ and $P_1$ as before.
In fact, we can show that if two distributions have CVaR fairness far apart, then the total variation distance between them is also far apart.
Specifically, if we design $\texttt{F\_CVaR}_{\alpha}(\bw; P_0) = 0$ and $ \texttt{F\_CVaR}_{\alpha}(\bw; P_1) \geq \epsilon$ we cannot guarantee that $\text{TV}(P_0 || P_1)$ is sufficiently small to match our achievable sample complexity bound.
Hence, the approach from Proposition \ref{prop:hyptest_lower_bound} fails.

As an alternative to the approach in Proposition \ref{prop:hyptest_lower_bound}, we take a different route and use the Generalized Le Cam Lemma for a mixture of distributions \cite{Polyanskiy19}.
Instead of showing that there exist distributions $P_0$ and $P_1$ with distant CVaR fairness metric but close distance in the space of distributions, we design a sequence of distributions $\{P_u\}_{\mathcal{U}}$ indexed by $\mathcal{U}$ such that $ \texttt{F\_CVaR}_{\alpha}(\bw; P_u) \geq \epsilon$ for all $u \in \mathcal{U}$, and also $P_0$ such that $\texttt{F\_CVaR}_{\alpha}(\bw; P_0) = 0$.
We then show that the mixture of distributions $\{P_u\}_{\mathcal{U}}$ is close to $P_0$ according to the $\chi^2$ divergence --- Lemma \ref{lem:main_chi_bound_cvar}. 

Recall that the $\chi^2$ divergence between {$P \ll Q$} (i.e., $P$ is absolutely continuous with respect to $Q$) is given by
\begin{equation}
    \chi^2(P || Q) \triangleq \EE_{Q} \left[ \left( \frac{P(Z)}{Q(Z)} - 1\right)^2\right].
\end{equation}

In Lemma \ref{lem:main_chi_bound_cvar}, we design the mixture distribution $P_u \in \hypregions_1(\epsilon)$ and show that the mixture is close to a distribution $P_0 \in \hypregions_0$ according to the $\chi^2$ divergence.

\begin{restatable}{lemma}{MainChiBoundCvar}[Close Mixture of Distributions] 
For CVaR fairness metrics $\texttt{F\_CVaR}_{\alpha}(\bw)$ with quality of service function $L$ and measuring the average quality of service using the distribution $P$ such that $(L, P) \in \{(L_{EO}, P_{EO}), (L_{SP}, P_{SP})\}$.
Consider the hypothesis regions $\hypregions_0$ \eqref{eq:hyp_reg_0} and $\hypregions_1(\epsilon)$ \eqref{eq:hyp_reg_1}.
Assume that $\bw$ is a uniform distribution over all groups, i.e., $\bw = \left(\frac{1}{|\gp|}, ..., \frac{1}{|\gp|} \right)$,
For all $\alpha \in \cvarrange$ and $\frac{\alpha  |\gp|^{1/3}}{4}\geq \epsilon > 0$  there exist a sequence of distributions $P_0 \in \hypregions_0$ and $ \{P_{u}\}_{u \in \mathcal{U}} \subset \hypregions_1(\epsilon)$ indexed by elements of the set $\mathcal{U}$ such that if $u \sim \pi$ we have that
\begin{equation}
 {\chi}^2\left( \EE_{u}\left[ P_{u}^n \right]  \biggl| \biggl|  P_0^n  \right)
    \leq e^{ \frac{ 128 (1 - \alpha) n^2 \epsilon^4}{\alpha^{4} |\gp| }} - 1.
\end{equation}
\label{lem:main_chi_bound_cvar}
\end{restatable}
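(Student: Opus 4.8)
The plan is to lower bound the testing error through the second-moment (generalized Le Cam) method of \cite{Polyanskiy19}, for which the only ingredient needed is exactly the stated $\chi^2$ bound. I would fix the null $P_0$ to be the distribution in which every group has the same conditional mean of the quality-of-service function, $\EE_{P_0}[L(Z)\mid G=g]=\tfrac12$ for all $g\in\gp$, so that $\Delta(g)=0$ for every group and hence $\texttt{F\_CVaR}_\alpha(\bw;P_0)=0$, i.e. $P_0\in\hypregions_0$. The alternative family $\{P_u\}_{u\in\mathcal U}$ is obtained by perturbing these conditional means: writing $p_g=\tfrac12+u_g$, I index the family by a pattern $u=(u_g)_{g\in\gp}$ drawn from a product of independent \emph{symmetric} distributions (each $u_g$ equal to $\pm\tau$, possibly restricted to a random subset of groups), with the perturbation level $\tau$ chosen as a function of $\epsilon$ and $\alpha$. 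Because the group marginal is uniform ($v_g=w_g=1/|\gp|$) and $L$ is $\{0,1\}$-valued, each $P_u$ is a legitimate conditional-label distribution; the ceiling $\epsilon\le\alpha|\gp|^{1/3}/4$ delimits the regime in which the perturbed means stay valid probabilities and the CVaR window is large enough to certify the alternative.

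Next I would verify that $P_u\in\hypregions_1(\epsilon)$, i.e. $\texttt{F\_CVaR}_\alpha(\bw;P_u)\ge\epsilon$. Since the signs are symmetric, the moving average $\averagegtg{L}=\sum_g w_g p_g=\tfrac12+\tfrac1{|\gp|}\sum_g u_g$ concentrates near $\tfrac12$, so each perturbed group has gap $\Delta(g)=|p_g-\averagegtg{L}|\approx |u_g|$; averaging the top $(1-\alpha)$-mass window of these gaps and multiplying by $1/(1-\alpha)$ reproduces the CVaR value, which the choice of $\tau$ forces to be at least $\epsilon$. The delicate point is that $\hypregions_1(\epsilon)$ must contain \emph{every} $P_u$ in the support, so I must choose the support of $u$ (e.g. restrict to balanced sign patterns, or perturb a fixed-size random subset) so that $\averagegtg{L}$ never drifts enough to collapse the gaps inside the CVaR window; this is the source of the explicit $\alpha$-dependence in the final exponent.

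The computational core is the $\chi^2$ identity for a mixture against a product null,
\begin{equation*}
1+\chi^2\!\left(\EE_u[P_u^n]\,\big\|\,P_0^n\right)=\EE_{u,u'}\!\left[\left(\sum_{g,b}\frac{P_u(g,b)\,P_{u'}(g,b)}{P_0(g,b)}\right)^{\!n}\right],
\end{equation*}
where $u,u'$ are independent copies and $(g,b)$ ranges over group/label pairs. A direct expansion of the inner sum gives $\sum_{g,b}\frac{P_u P_{u'}}{P_0}=1+\frac{1}{p_0(1-p_0)}\sum_g v_g\,u_g u'_g=1+\frac{4}{|\gp|}\sum_g u_g u'_g$. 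Bounding $(1+x)^n\le e^{nx}$ and using that the $u_g$ are independent and symmetric, the expectation factorizes over groups into a product of $\cosh(4n\tau^2/|\gp|)$ terms (the linear, cross, and odd moments cancel). Applying $\cosh(x)\le e^{x^2/2}$ and collecting the $|\gp|$ factors yields a bound of the form $\exp\!\big(c\,n^2\tau^4/|\gp|\big)$; substituting the chosen $\tau$ (of order $\epsilon/\alpha$) together with the fraction of perturbed groups inside the window (of order $1-\alpha$) produces exactly $\exp\!\big(128(1-\alpha)n^2\epsilon^4/(\alpha^4|\gp|)\big)-1$.

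The main obstacle I anticipate is precisely the tension in the second step. The cancellation that produces the crucial $n^2/|\gp|$ exponent (rather than an exponential-in-$n$ blow-up) \emph{requires} the perturbations $u_g$ to be mean-zero and independent across groups, whereas guaranteeing $\texttt{F\_CVaR}_\alpha\ge\epsilon$ uniformly over the support is easiest with a coherent, one-directional perturbation. Reconciling these — keeping the signs independent and symmetric while still certifying that the top $(1-\alpha)$-mass of gaps is bounded below for \emph{all} $u$, and checking the $\cosh$ estimate remains tight in the regime $\epsilon\le\alpha|\gp|^{1/3}/4$ — is where the real work and the precise constants ($128$, $\alpha^{4}$) live. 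Once the lemma is established, feeding the $\chi^2$ bound into the generalized Le Cam lemma \cite{Polyanskiy19} converts it into the $\Omega(\sqrt{|\gp|}/\epsilon^2)$ converse of Theorem \ref{thm:converse_CVaR}.
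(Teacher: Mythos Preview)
Your overall architecture matches the paper's: Ingster--Suslina second-moment identity, symmetric Rademacher perturbations of the conditional means around $1/2$, the bound $(1+x)^n\le e^{nx}$, and then a sub-Gaussian/$\cosh$ step to collapse the cross term. Your $\cosh$ factorization is equivalent to the paper's two-step sub-Gaussian argument and gives the same exponent.

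The gap is exactly where you flag it: certifying $P_u\in\hypregions_1(\epsilon)$ for \emph{every} $u$ in the support. Your proposed fixes do not work. Restricting to balanced sign patterns breaks the independence across groups that your $\cosh$ factorization relies on. Saying the global mean ``concentrates near $1/2$'' is a probabilistic statement and cannot certify membership in $\hypregions_1(\epsilon)$ for all $u$. The paper resolves this differently and deterministically: it perturbs only a \emph{fixed} subset $Q\subset\gp$ of size $\lfloor(1-\alpha)|\gp|\rfloor$, leaving the remaining $\ge\alpha|\gp|$ groups at exactly $1/2$. Then, with $\tau=(1-\alpha)/\alpha$ and $\sum_{g\in Q}\epsilon_g=\epsilon$, the reverse triangle inequality gives, for \emph{every} sign vector $u\in\{-1,1\}^{|Q|}$,
\[
\frac{1}{1-\alpha}\sum_{g'\in Q} w_{g'}\Delta(g')
\;\ge\;\frac{\tau}{1-\alpha}\Bigl(\sum_{g'\in Q}|\epsilon_{g'}|-\Bigl(\sum_{g'\in Q}w_{g'}\Bigr)\sum_{g\in Q}|\epsilon_g|\Bigr)
=\frac{\tau}{1-\alpha}\,\epsilon\Bigl(1-\sum_{g'\in Q}w_{g'}\Bigr)\;\ge\;\frac{\tau\alpha}{1-\alpha}\,\epsilon=\epsilon.
\]
The key is that only a $(1-\alpha)$-mass of groups is perturbed, so the drift of $\averagegtg{L}$ is bounded by $(1-\alpha)$ times the total perturbation, and the factor $\alpha$ survives; no concentration is needed. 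Because $Q$ is fixed, the signs remain fully independent and your $\chi^2$ calculation goes through verbatim over $|Q|\approx(1-\alpha)|\gp|$ coordinates, which is also where the leading $(1-\alpha)$ in the exponent comes from.
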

We prove this result in Appendix \ref{sec:converse_CVAR_apx}. In Lemma \ref{lem:main_chi_bound_cvar}, we showed that there exists a mixture of distributions with CVaR fairness greater than $\epsilon$ that is close to a distribution $P_0$ with CVaR fairness equal to $0$.
Hence, we can use the generalized Le Cam Lemma to compute the minimum number of samples that is necessary to perform an $\epsilon$-test reliably.
In Theorem \ref{thm:converse_CVaR} we show that it is necessary to have $n = \Omega\left( \frac{\sqrt{|\gp|}}{(1 - \alpha)^{1/2} \epsilon^2} \right)$ samples to perform an $\epsilon$-test reliably.

\begin{restatable}{theorem}{ConverseCVar}(Minimum Sample Complexity).
For CVaR fairness metrics $\texttt{F\_CVaR}_{\alpha}(\bw)$ with quality of service function $L$ and measuring the average quality of service using the distribution $P$ such that $(L, P) \in \{(L_{EO}, P_{EO}), (L_{SP}, P_{SP})\}$.
Consider the hypothesis regions $\hypregions_0$ \eqref{eq:hyp_reg_0} and $\hypregions_1(\epsilon)$ \eqref{eq:hyp_reg_1}.
Assume that $\bw$ is a uniform distribution over all groups, i.e., $\bw = \left(\frac{1}{|\gp|}, ..., \frac{1}{|\gp|} \right)$.
For all $\alpha \in \cvarrange$,  $\frac{\alpha  |\gp|^{1/3}}{4}\geq \epsilon > 0$ , the minimum probability of error in the $\epsilon$-test using $n$ i.i.d. samples from $P$ to distinguish between $\texttt{F\_CVaR}_{\alpha}(\bw) = 0$ and $\texttt{F\_CVaR}_{\alpha}(\bw) \geq \epsilon$ is lower bounded by
\begin{equation}
        2\inf_{\psi} P_{\text{error}} \geq 1 - \frac{1}{2} \sqrt{e^{ \frac{ 1024 (1 - \alpha) n^2 \epsilon^4}{\alpha^{4} |\gp|  }} - 1},
\end{equation}
then $n$ samples are needed to perform an $\epsilon$-test with with probability of error smaller than $0.01$ where $n$ is such that 
\begin{equation}
        n = \Omega\left( \frac{\alpha^2\sqrt{|\gp|}}{(1 - \alpha)^{1/2} \epsilon^2} \right). 
    \end{equation}
\label{thm:converse_CVaR}
\end{restatable}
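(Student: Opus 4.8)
The plan is to derive the lower bound as a clean application of the mixture (generalized) version of Le Cam's method, using Lemma \ref{lem:main_chi_bound_cvar} as the sole technical input. First I would set up the reduction: fix the null distribution $P_0 \in \hypregions_0$ and the family $\{P_u\}_{u \in \mathcal{U}} \subset \hypregions_1(\epsilon)$ guaranteed by Lemma \ref{lem:main_chi_bound_cvar}, and place the prior $\pi$ on $\mathcal{U}$. Any decision function $\psi_\epsilon$ that reliably performs the $\epsilon$-test must in particular separate $P_0^n$ from every $P_u^n$, since each $P_u \in \hypregions_1(\epsilon)$. Because the worst-case error over $\mathcal{U}$ dominates the Bayes (prior-averaged) error, the generalized Le Cam Lemma \cite{Polyanskiy19} yields
\[
  2\inf_{\psi_\epsilon} P_{\text{error}} \;\geq\; 1 - \text{TV}\!\left(\EE_{u}\left[P_u^n\right],\, P_0^n\right).
\]

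Next I would convert this total variation distance into the $\chi^2$ divergence that Lemma \ref{lem:main_chi_bound_cvar} controls, via the standard Cauchy--Schwarz inequality $\text{TV}(P,Q) \leq \tfrac{1}{2}\sqrt{\chi^2(P\|Q)}$. Substituting the bound from Lemma \ref{lem:main_chi_bound_cvar} gives
\[
  2\inf_{\psi_\epsilon} P_{\text{error}} \;\geq\; 1 - \frac{1}{2}\sqrt{\, e^{\frac{128(1-\alpha)n^2\epsilon^4}{\alpha^4|\gp|}} - 1\,}.
\]
I would then isolate the regime in which this bound keeps the error above the target $0.01$. It suffices to keep the exponent below an absolute constant: if $\frac{128(1-\alpha)n^2\epsilon^4}{\alpha^4|\gp|} \leq c_1$ for a suitable $c_1$ (concretely $c_1 = \ln(4.85) \approx 1.58$ makes the right-hand side exceed $0.02$), then the $\chi^2$ term is at most $e^{c_1}-1$ and $P_{\text{error}} \geq 0.01$. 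Rearranging the constraint $\frac{128(1-\alpha)n^2\epsilon^4}{\alpha^4|\gp|} \leq c_1$ for $n$ produces $n \leq \sqrt{c_1/128}\cdot \frac{\alpha^2\sqrt{|\gp|}}{(1-\alpha)^{1/2}\epsilon^2}$. Contrapositively, any test that drives $P_{\text{error}}$ below $0.01$ must use $n = \Omega\!\left(\frac{\alpha^2\sqrt{|\gp|}}{(1-\alpha)^{1/2}\epsilon^2}\right)$, which is exactly the claimed bound.

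I expect essentially all of the difficulty to reside in Lemma \ref{lem:main_chi_bound_cvar} (the design of the alternative mixture $\{P_u\}$ and the tensorized $\chi^2$ computation), so that the present theorem reduces to bookkeeping. The few points that still need care are: (i) checking that the range restriction $\frac{\alpha|\gp|^{1/3}}{4}\geq \epsilon > 0$ is inherited from the theorem hypotheses so that Lemma \ref{lem:main_chi_bound_cvar} actually applies; (ii) interpreting the composite $H_1$ in \eqref{eq:error_prob} through the prior-averaging step, i.e.\ invoking $\inf_{\psi_\epsilon}\sup_u P_{\text{error}}(u) \geq \inf_{\psi_\epsilon}\EE_u P_{\text{error}}(u)$ to justify the Le Cam inequality; and (iii) tracking the constant $c_1$ and the $\alpha$-, $\epsilon$-dependence carefully through the square root so the final exponents match. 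Finally, I would remark that pairing this $\Omega(\sqrt{|\gp|})$ lower bound with the $O(\sqrt{|\gp|})$ upper bound of Corollary \ref{cor:samplecomplexityalg} for uniform $\bw$ certifies the order-optimality of Algorithm \ref{alg:hypothesis_test} and endows $H_{2/3}$ with its operational meaning.
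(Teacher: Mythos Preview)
Your proposal is correct and follows essentially the same route as the paper: invoke Lemma \ref{lem:main_chi_bound_cvar}, apply the generalized Le Cam inequality, pass from total variation to $\chi^2$ via the $\text{TV}\le \tfrac{1}{2}\sqrt{\chi^2}$ bound, and then rearrange the exponent constraint to extract the $\Omega\!\big(\alpha^2\sqrt{|\gp|}/((1-\alpha)^{1/2}\epsilon^2)\big)$ lower bound. Your additional care in points (i)--(iii) and the closing remark on order-optimality are fine embellishments but do not deviate from the paper's argument.
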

We prove this result in Appendix \ref{sec:converse_CVAR_apx} --- one more time, we provide the dependence on the error probability $\delta$ in the appendix, and the result is shown for $\delta = 0.01$. In Theorem \ref{thm:converse_CVaR}, we showed that the order of the sample complexity of the Algorithm 
\ref{alg:hypothesis_test} is optimal when using weighted sampling and $\bw$ is uniform.
The results in Lemma \ref{lem:main_chi_bound_cvar} and Theorem \ref{thm:converse_CVaR} were given in terms of the equal opportunity fairness metric.
However, our result can be trivially extended to several fairness metrics, such as statistical parity and equalized odds.

We highlight that the result in Theorem \ref{thm:converse_CVaR} together with Corollary \ref{cor:samplecomplexityalg} give an operational meaning for the R\'enyi entropy of order $2/3$ as the log of the number of samples that is necessary to perform an $\epsilon$-test for CVaR reliably.

\revision{
\section{Numerical Experiments}
\label{sec:experiments}

In this section, we empirically evaluate the performance of the proposed hypothesis test methods within a  Bernoulli model. This model allows us to examine test performance by varying three key parameters: (i) the parameter $\alpha$ in Definition~\ref{def:CVaRFairness} (Figure \ref{fig:alpha_dependency}), (ii) the entropy of the group distribution $\bw$ (Figure \ref{fig:FnFPp}), and (iii) the number of samples in the hypothesis test (Figure \ref{fig:AUC_nsamples_p}). 

Additionally, we determine the maximum number of groups for which we can conduct an $\epsilon$-test for both max-gap fairness and CVaR fairness with a probability of error smaller than $45\%$ and a fixed data budget. The maximum number of groups can be derived via the lower bound provided in Proposition \ref{prop:hyptest_lower_bound} and Theorem \ref{thm:converse_CVaR} (Figure \ref{fig:max_groups}).
For practitioners, this result aids in deciding the number of sensitive attributes (generating groups $g \in \gp$) required to ensure reliable auditing for max-gap or CVaR fairness within a fixed data budget. In this section, we take a conservative approach and assume that {\em Reliable auditing} refers to performing an $\epsilon$-test with a probability of error smaller than $45\%$; essentially, the test needs to perform slightly better than a random guess to be considered reliable. Our results can also be applied for a smaller probability of error, say $\delta$ instead of $0.45$, as we show in Appendix \ref{sec:upper_bound_CVaR}. 
This impacts our results by a factor of $\frac{1}{\delta}$ instead of $\frac{1}{0.45}$, decreasing the number of possible groups for reliable testing --- this dependence can be improved to $\log\left(\frac{1}{\delta}\right)$ by using the median of the means technique from ~\cite[Page 8]{suresh_2015}.

\subsection{Experimental Setup}
First, we define the dataset distribution ($\widetilde{P}_{Z}$) and the quality of service we aim to ensure equality across different groups in the population.
Our experiment aims to ensure statistical parity across groups (Example \ref{eg:statistical_parity}) using max-gap and CVaR fairness.

We start this section by defining the protected groups $\gp$.
\paragraph{The Demographic Groups} We use $d \in \mathbb{N}$ binary sensitive attributes --- in this experiment, we don't specify such groups; however, they can encode information about sex, HIV status, or even the occurrence of stroke.
Hence, each demographic group is given by a binary sequence of length $d$; particularly, we define the set of all protected groups as all elements of $\gp(d) = \{0, 1\}^{d}$ --- e.g., when $d = 2$ the protected groups are given by $\gp(2) = \{(0, 0); (1, 0); (0, 1); (1, 1)\}$. We denote the groups by $g \in  \gp(d)  = \{0, 1\}^{d} $ and $g^i$ indicates the i-th entry of the group denoting vector in $\gp(d)$.

We define the probability of occurrence of each group $g \in \gp(d)$ as the probability of the product of $d$ Bernoulli distributions with parameter $p \in [0,1]$ to be equal to the group sensitive attributes. Rigorously, we define $\bw = (w_1, w_2, ..., w_{2^d})$ as:
\begin{equation}
    w_g \triangleq \prod_{i = 1}^{d} \Pr\left(\text{Ber}(p) = g^i \right).
    \label{eq:experimental_w}
\end{equation}

We define the group weights as in \eqref{eq:experimental_w} for the entropy of $\bw$ to be equal to $d$ times the entropy of the Bernoulli random variable, specifically $H_{2/3}(\bw) = d H_{2/3}((p, 1-p))$.
In Figures \ref{fig:FnFPp} and \ref{fig:AUC_nsamples_p}, we vary the parameter $p$ to analyze the impact of the entropy of the group weight vector $\bw$ on the hypothesis test performance.

Now that we have defined the groups considered in our numerical analysis, we introduce the prediction dataset distribution, i.e., the distribution for $z = (g, \hat{y})$  

\paragraph{Data Distribution} Each individual is represented by a vector $z = (g, \hat{y})$ where $g$ is its group membership and $\hat{y} \in \{0, 1\}$ is the outcome we aim to ensure statistical parity.
The outcome distribution is given by
\begin{equation}
    \Pr\left(\hat{y} = 1 | G = g \right) \triangleq q_g, 
\end{equation}
where $q_g \in [0, 1]$ is the per group quality of service parameter and $\hat{y}_{ | G = g}$ is a Bernoulli random variable with parameter $q_g$.
Hence, the dataset distribution of each data point is given by
\begin{equation}
    \widetilde{P}_Z \left((G, \widehat{Y}) = (g, \hat{y})\right) = w_g q_g.
\end{equation}

Now that we have defined the groups, we aim to ensure equal treatment across all elements of $\gp(d)$. We recall the quality service function and distribution, respectively,  $L_{SP}(z)$ and $P_{SP}(z)$, used for measuring the quality of the service in each group and compute the Max-Gap and CVaR fairness for the model.

\paragraph{Max-Gap and CVaR Fairness} The quality of service function and the measure for the quality of service of interest are, respectively, $L_{SP}$ and $P_{SO}$ given by
\begin{align}
    L_{SP}(g, \hat{y}) & = \mathbb{I}_{\hat{y} = 1}(g, \hat{y})\\
    P_{SP}(g, \hat{y}) &  =  \widetilde{P}_Z (g, \hat{y}).
\end{align}

Therefore, the fairness gap per group $g \in \gp$ is given by $\Delta(g)$ as follows
\begin{equation}
    \Delta(g) = \left|  q_g -  \sum_{g \in \gp} q_g w_g  \right|,
\end{equation}

which implies that the Max-Gap and CVaR fairness are given by
\begin{align}
\fairness &= \max_{g \in \gp} \Delta(g),\\
\texttt{F\_CVaR}_{\alpha}(\bw) &= \frac{1}{1 - \alpha}  \max_{\substack{\sum_{g \in Q} w_g \leq 1 - \alpha \\ Q \subset \gp}}\sum_{g \in Q} w_g \Delta(g).
\end{align}

\paragraph{Hypothesis Testing} We aim to test if all groups are treated similarly ($\fairness = 0$ or $\texttt{F\_CVaR}_{\alpha}(\bw) = 0$) or if there is a group that is unfairly treated ($\fairness \geq \epsilon$ or $\texttt{F\_CVaR}_{\alpha}(\bw) \geq \epsilon$), i.e., we aim to perform an $\epsilon$-test.
However, in general, we assume we don't have access to the data distribution; hence, to simulate this scenario, we perform the $\epsilon$-test using samples using (i) weighted sampling and (ii) attribute specific sampling.
We perform the \emph{hypothesis test for CVaR fairness} using the Algorithm \ref{alg:hypothesis_test} that we propose.
We use weighted sampling (Definition \ref{def:weightedsampling}) with $\bv = \bw$ and $\bv = \bw^{2/3}$, and attribute specific sampling.
We adopt a \emph{baseline testing for max-gap fairness} that compares the empirical average performance with the empirical average per group.
Specifically, when $n$ samples are available $\{z_i\}_{i = 1}^n$
each group has $M_g$ samples such that $\sum_g M_g  = n$, the baseline method outputs that $\fairness \geq \epsilon$ when
\begin{equation}
    \max_{\substack{g \in \gp \\ M_g > 0}} \left|   \frac{\sum_{i = 1}^{M_g} L_{SP}(z_i)}{M_g}   -   \frac{\sum_{g \in \gp} \sum_{i = 1}^{M_g} L_{SP}(z_i)}{n} \right| \geq \epsilon,
    \label{eq:maxgaptest}
\end{equation}
and outputs that $\fairness = 0$ otherwise. When performing the baseline test for max-gap fairness, we sample i.i.d. from the data distribution that is equivalent to weighted sampling with $\bv = \bw$.

\paragraph{Disparate Quality of Service} To generate disparate quality of service across groups, i.e., $\Delta(g) > 0$ for some $g \in \gp$ we take the following approach.

\begin{approach}
We set the group quality parameter $q_g = 0.05$ for $20\%$ of groups that we choose uniformly at random and $q_g = 0.5$ for the other groups.
Specifically, we sample $\lfloor 0.2 |\gp| \rfloor$ groups $\{g_1, ..., g_{\lfloor 0.2 |\gp| \rfloor}\}$ and define $q_g = 0.05$ for $g \in \{g_1, ..., g_{\lfloor 0.2 |\gp| \rfloor}\}$ and $q_g = 0.5$ for all other groups.
\label{approach3}
\end{approach}

Now that we have laid the background for our experiments, we will show our numerical results in the next section.

\subsection{Experimental Results}

\paragraph{On the dependence on $\alpha$} Figure \ref{fig:alpha_dependency} shows how the probability of error in the $\epsilon$-test for CVaR fairness (in terms if error probability \eqref{eq:error_prob}) using Algorithm \ref{alg:hypothesis_test} depends on the choice of the $\alpha$ parameter in CVaR fairness Definition \ref{def:CVaRFairness}.
The per group quality of service was defined using Approach \ref{approach3}, i.e.,  we set $20\%$ of all groups chosen uniformly at random to received a quality of service equals to $q_g = 0.05$ while the other groups received $q_g = 0.5$.

Figure \ref{fig:alpha_dependency} indicates that when $\alpha$ increases, the probability of error when performing an $\epsilon$-test also increases.
However, in the low data regime, this behavior changes. 
In this case, the errors that stem from the limited sample size dominate the probability of error, as we can conclude from the unchanged probability of error when $n = 256$ across both weighted sampling strategies and also attribute specific sampling Figures \ref{fig:alpha_dependency} (a), (b), and (c).
The error dominance from the small sample size $n = 256$ is even higher in attribute specific sampling (Figures \ref{fig:alpha_dependency} (c)).

Additionally, we also observe that the dependency on $\alpha$ is more severe in attribute-specific sampling in comparison with weighted sampling using $\bv \propto \bw^{2/3}$ or $\bv = \bw$ --- which was expected from our error probability bound.
Corollary \ref{cor:samplecomplexityalg} shows that the number of samples necessary to reliably perform an $\epsilon$-test for CVaR fairness using weighted sampling depended on $\frac{1}{(1 - \alpha)}$, while Corollary \ref{cor:samplecomplexityalg_att} shows that the number of samples depends on $\frac{1}{(1 - \alpha)^2}$ when using attribute specific sampling, implying that $\alpha$ has a more severe impact on the error probability of attribute specific sampling when compared with weighted sampling.

\begin{figure}[t]
    \centering
    \begin{subfigure}[b]{0.32\linewidth}
        \includegraphics[width=\textwidth]{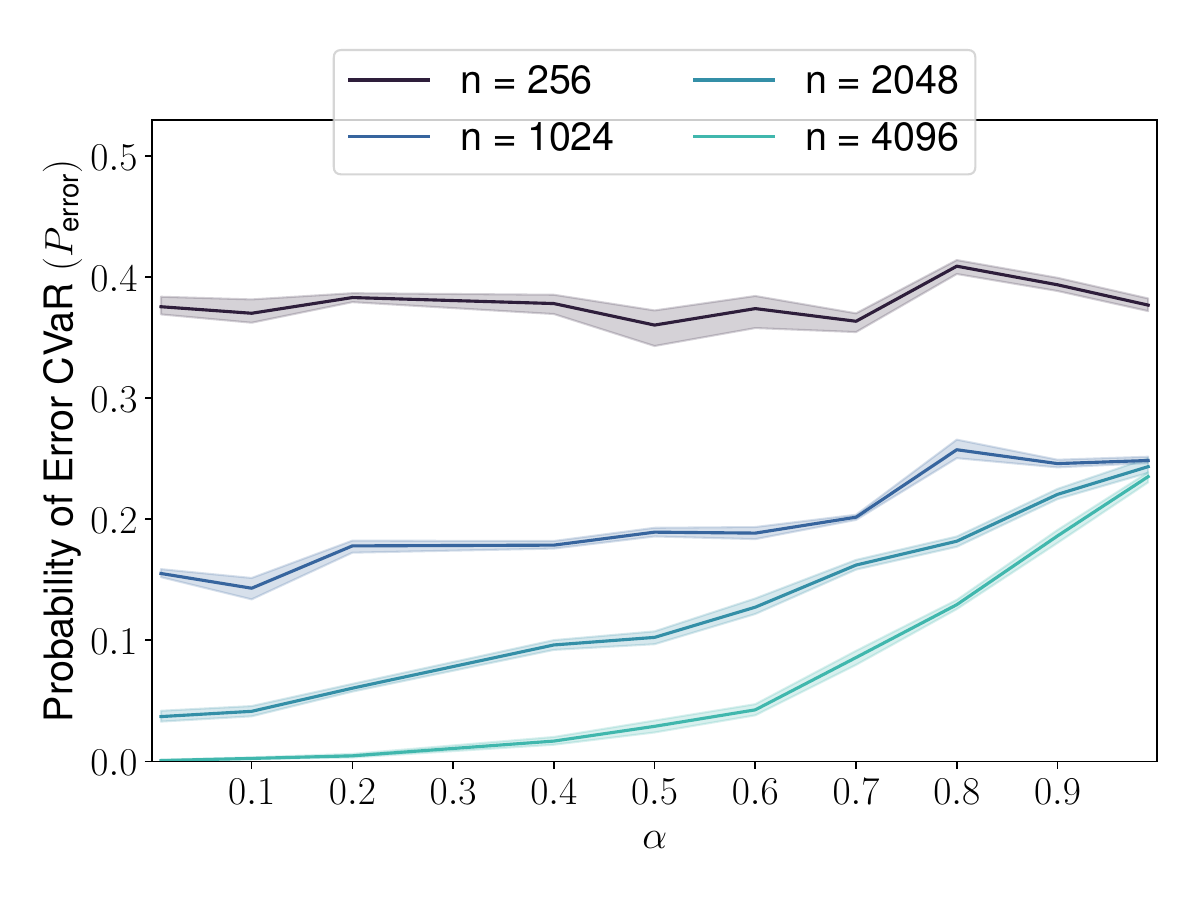}
        \caption{$\bw$ Weighted}
    \end{subfigure}
    \begin{subfigure}[b]{0.32\linewidth}
        \includegraphics[width=\textwidth]{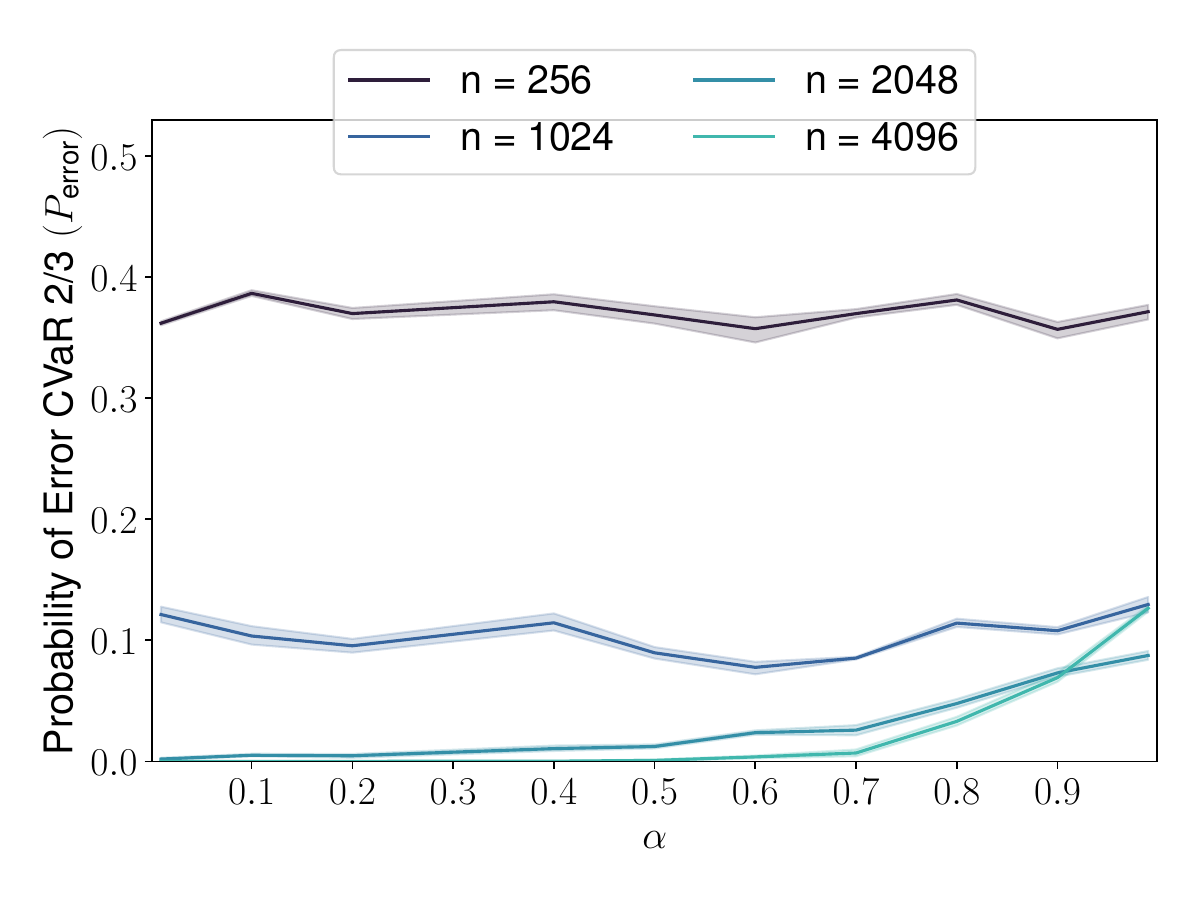}
        \caption{$\bw^{2/3}$ Weighted}
    \end{subfigure}
    \begin{subfigure}[b]{0.32\linewidth}
        \includegraphics[width=\textwidth]{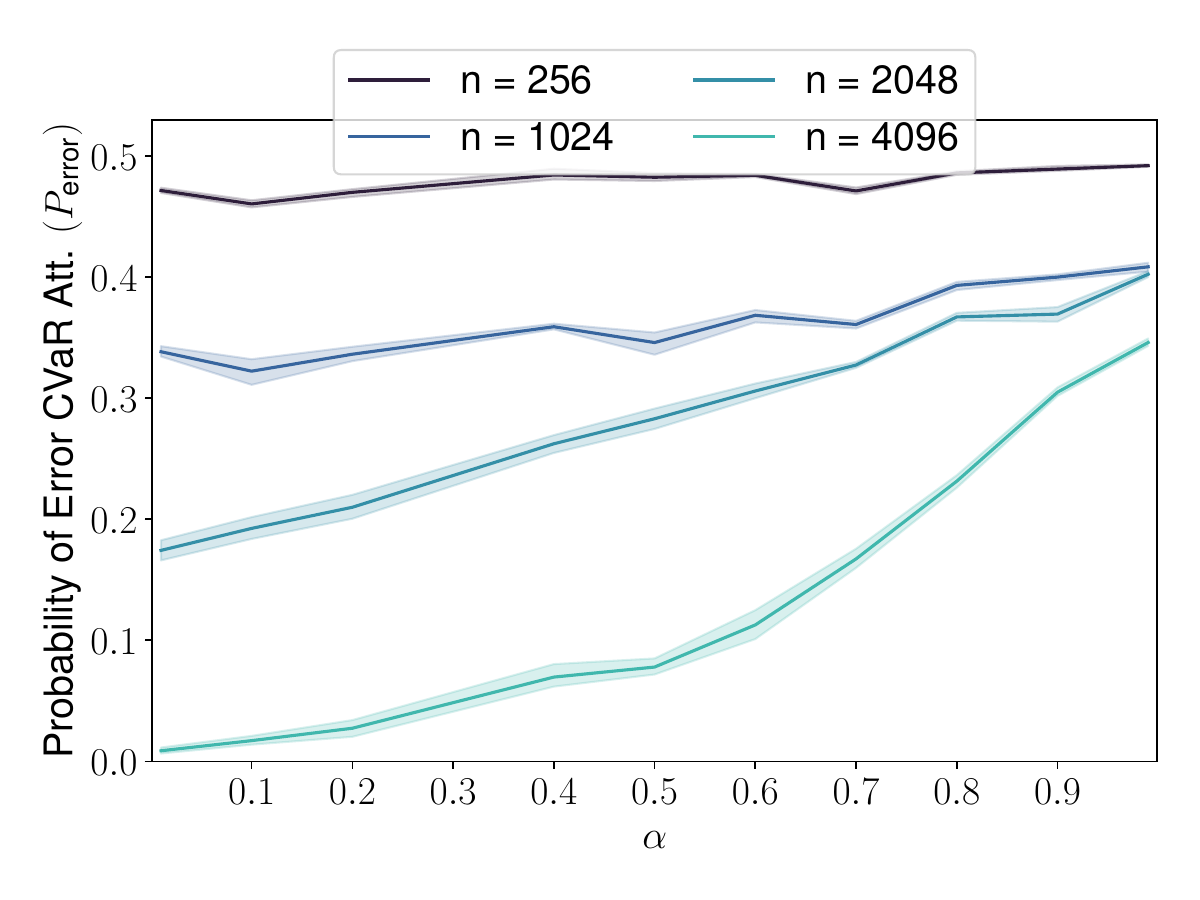}
        \caption{Attribute Specific}
    \end{subfigure}
     \caption{
     \revision{
     Probability of error in  $\epsilon$-test vs. $\alpha$ for different number of sample sizes.
  We use $\epsilon = 0.4 {{\times}} \texttt{F\_CVaR}_{\alpha}(\bw)$ in Algorithm \ref{alg:hypothesis_test} --- we exactly compute CVaR fairness to evaluate the probability of error.
  We use $d = 9$ sensitive attributes leading to $512$ groups, and the group distribution parameter is taken to be $p = 0.2$.
  The probability of error was computed using $1000$ realizations with the Monte Carlo method, and confidence intervals are plotted using Bootstrap from Seaborn \cite{Waskom2021} with $95\%$ confidence.}}
  \label{fig:alpha_dependency}
\end{figure}

\paragraph{False positive vs. false negative rate for different values of entropy}
Figure \ref{fig:FnFPp} shows how the false positive rate depends on the false negative rate for different entropy values achieved by varying the group distribution parameter $p \in \{0.05, 0.1, 0.5\}$).
To compute the false positive rate at a given false negative rate, we tune $\epsilon$ in Algorithm \ref{alg:hypothesis_test} such that the false negative rate is given by the x-axis and use this $\epsilon$ to test if $\texttt{F\_CVaR}_{\alpha}(\bw) \geq \epsilon$.
We select the threshold in the baseline test for max-gap \eqref{eq:maxgaptest} similarly to test if $\fairness \geq \epsilon$.

Figure \ref{fig:FnFPp} indicates that weighted sampling for CVaR testing has a more favorable trade-off between false negatives and positives when compared with max-gap fairness testing, specifically for lower entropy values.
Notably, the attribute specific sampling method (Figure \ref{fig:FnFPp} (d)) exhibits inferior or comparable performance compared to weighted sampling with group weights $\bv = \bw$ (Figure~\ref{fig:FnFPp}~(b)) or $\bv \propto \bw^{2/3}$ (Figure \ref{fig:FnFPp} (c)) across most entropy values obtained by varying the parameter $p$. However, under the condition of uniformly distributed group weights (i.e., $p = 0.5)$, where entropy is maximized, attribute specific sampling yields the best trade-off between false negative and positive rates, indicating that attribute specific sampling has a better performance when the entropy $\bw$ is higher --- this result is also supported by Figure \ref{fig:AUC_nsamples_p}.

Weighted sampling with $\bv \propto \bw^{2/3}$ (Figure \ref{fig:FnFPp} (c)) has the most favorable trade-off between false negative and positive rates for smaller entropies of $\bw$ --- as is suggested by the direct dependence of its sample complexity on the $H_{2/3}(\bw)$ as we shown in Corollary \ref{cor:samplecomplexityalg}. 
At maximum entropy ($p = 0.5$), weighted sampling and the baseline test for max-gap fairness exhibit similar performance, but attribute specific sampling clearly outperforms both weighted sampling and the baseline test for max-gap fairness.

Remarkably, weighted sampling with $\bv \propto \bw^{2/3}$ achieves false positive rates smaller than $10\%$ at all false negative rates larger than $20\%$ when $p = 0.05$.
Therefore, we conclude that weighted sampling and attribute specific sampling, when used for CVaR fairness testing, exhibit a better or comparable false negative vs. positive trade-off than the baseline method for max-gap fairness testing \eqref{eq:maxgaptest} across all tested values for the parameter $p$ that controls the group weights $\bw$ entropy.
Moreover, we observe that weighted sampling with $\bv \propto \bw^{2/3}$ is more flexible, achieving reasonable performance for different $\bw$ entropy values; however, its performance is degraded when entropy is maximized.
Attribute specific sampling complements weighted sampling by achieving a more favorable when the entropy of $\bw$ is maximized.

\begin{figure}[t]
  \centering
  \begin{subfigure}{0.45\linewidth}
    \includegraphics[width=\linewidth]{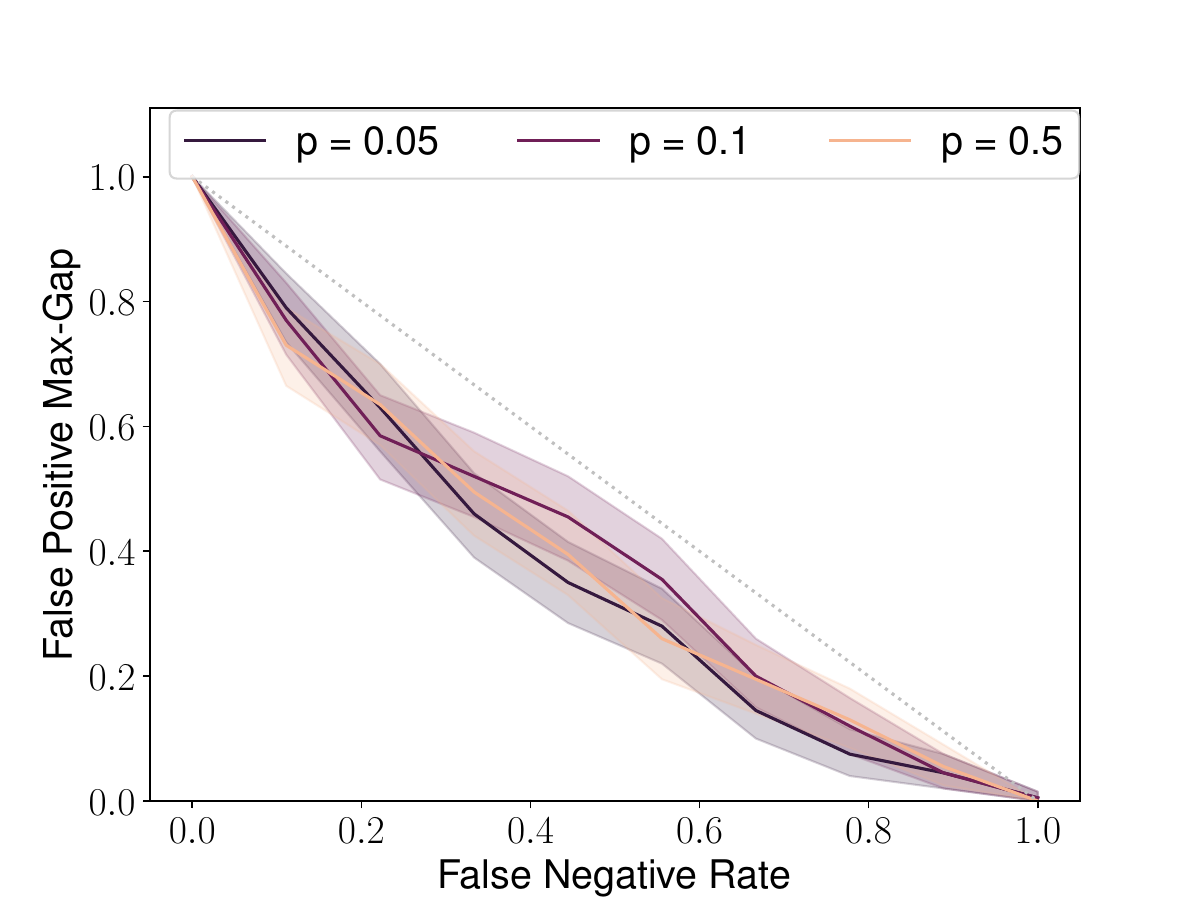}
    \caption{Max-Gap}
  \end{subfigure}
  \begin{subfigure}{0.45\linewidth}
    \includegraphics[width=\linewidth]{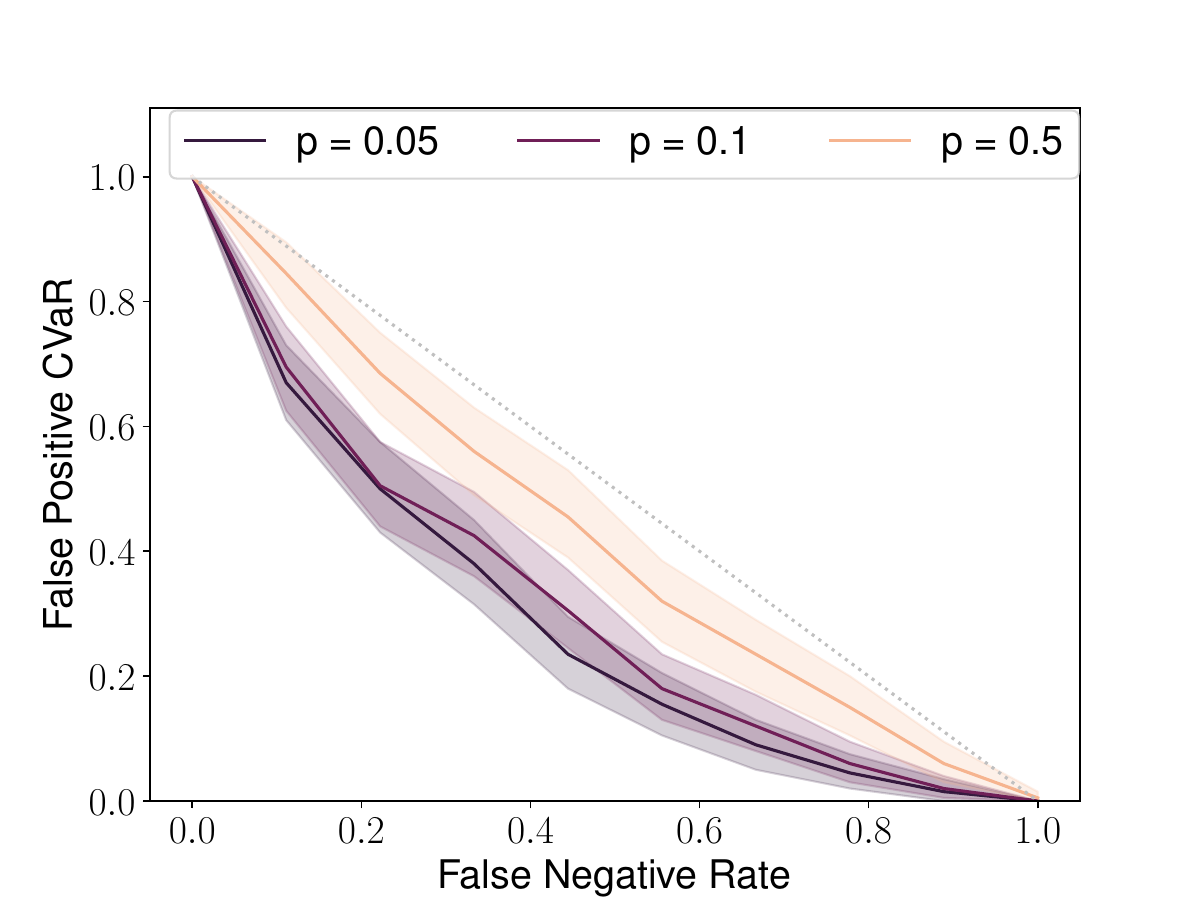}
    \caption{$\bw$ Weighted}
  \end{subfigure}
  \hfill
  \begin{subfigure}{0.45\linewidth}
    \includegraphics[width=\linewidth]{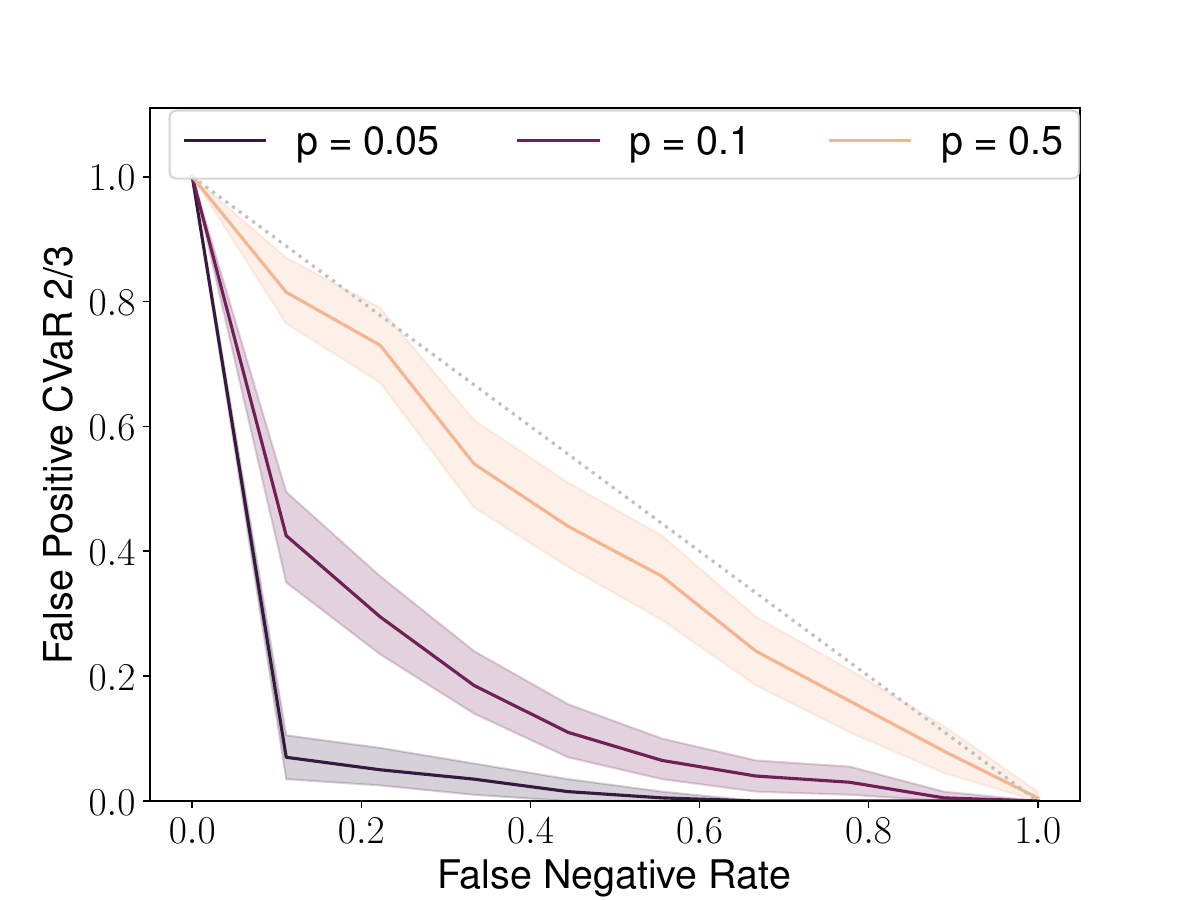}
    \caption{$\bw^{2/3}$ Weighted}
  \end{subfigure}
  \begin{subfigure}{0.45\linewidth}
    \includegraphics[width=\linewidth]{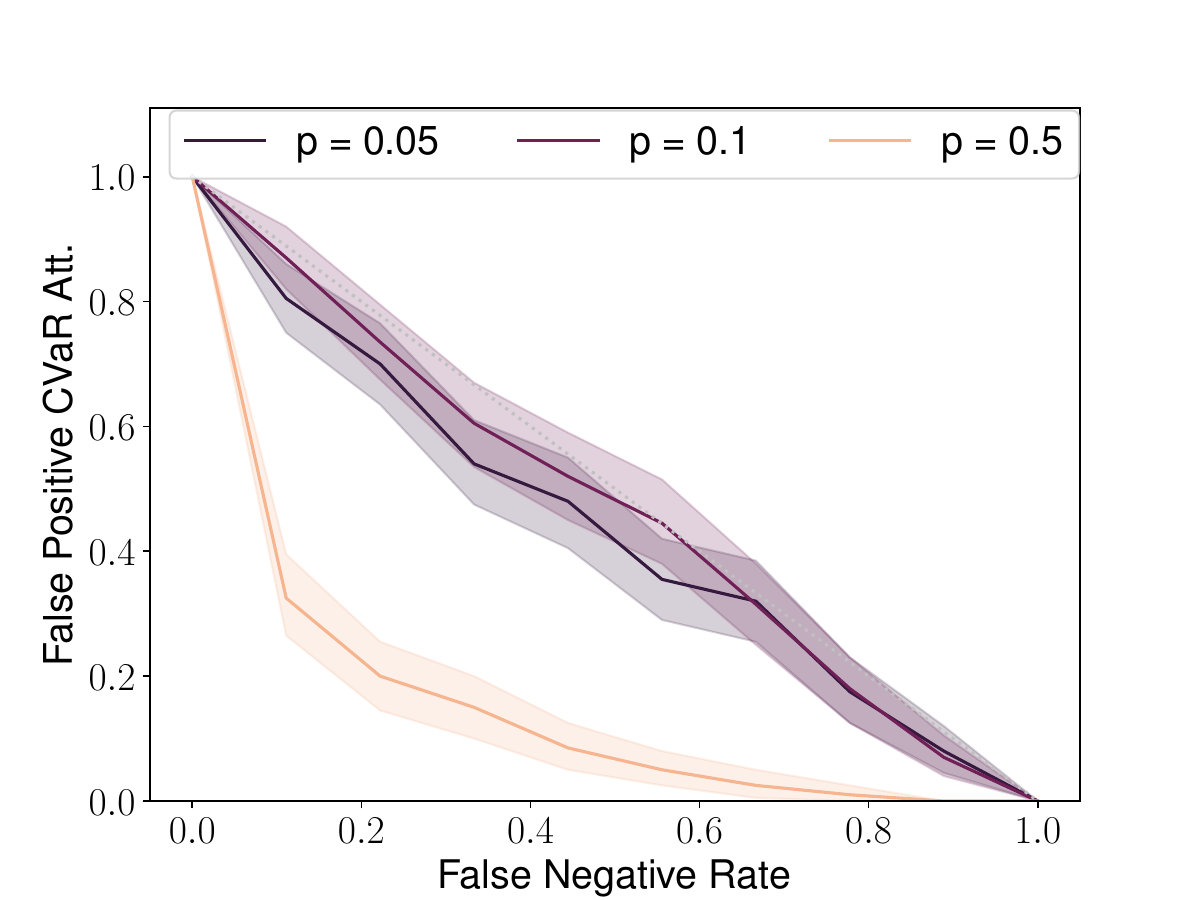}
    \caption{Attribute Specific}
  \end{subfigure}
  \caption{
  \revision{
  False positive rate at a given false negative rate for the hypothesis tests described in the paper. We use $d = 10$ sensitive groups, generating $1024$ groups and under a data budget of $n = 512$ Bernoulli realizations.
  We use Approach \ref{approach3} to distribute the per group quality of services.
  The false positive rate was computed using $1000$ realizations in the Monte Carlo method, and confidence intervals are plotted using Bootstrap from Seaborn \cite{Waskom2021} with $95\%$ confidence.
  }}
  \label{fig:FnFPp}
\end{figure}

\paragraph{The dependence on number of samples}
Figure \ref{fig:AUC_nsamples_p} shows how the area under the false negative versus positive curve (AUC) changes as the number of samples increases for the $\epsilon$-test in two scenarios with varying entropy for $\bw$. 
First, Figure \ref{fig:AUC_nsamples_p} shows the AUC for the baseline method \eqref{eq:maxgaptest} when performing an $\epsilon$-test for max-gap fairness (Figure \ref{fig:AUC_nsamples_p} (a)).
Second, it shows the AUC when performing an $\epsilon$-test for CVaR fairness using Algorithm \ref{alg:hypothesis_test} with weighted sampling with $\bv = \bw$ (Figure \ref{fig:AUC_nsamples_p} (b)), weighted sampling with $\bv \propto \bw^{2/3}$ (Figure \ref{fig:AUC_nsamples_p} (c)), and attribute-specific sampling (Figure \ref{fig:AUC_nsamples_p} (d)).
The per group quality of service is distributed using Approach \ref{approach3}.

First, we note that Figure \ref{fig:AUC_nsamples_p} shows that the performance of the baseline method for max-gap testing doesn't achieve reasonable performance across all entropy values and data budgets tested, i.e., AUC is always larger than $0.3$ for max-gap testing.
Meanwhile, when using Algorithm \ref{alg:hypothesis_test} with any of the proposed sampling methods for CVaR testing, we observe that their AUC is smaller than $0.2$ with just $n = 300$ samples, indicating the clear advantage of CVaR fairness testing in comparison with max-gap fairness.

Additionally, Figure \ref{fig:AUC_nsamples_p} shows that AUC decreases faster for CVaR fairness testing when using weighted sampling in comparison with the attribute specific sampling or the baseline method for max-gap fairness testing in most of the tested scenarios.
As we also observed in Figure~\ref{fig:alpha_dependency}, weighted sampling with $\bv \propto \bw^{2/3}$ has a flexible performance, i.e., it performs well under differentvalues of the entropy of $\bw$ and data budgets ($n$).

Figure \ref{fig:AUC_nsamples_p} also shows that using Algorithm \ref{alg:hypothesis_test} with attribute specific sampling has almost perfect performance ($\text{AUC} \approx 0$) when the groups are uniformly distributed, i.e., $p = 0.5$, even with $n = 100$ samples while testing for CVaR fairness across $1024$ groups.
When the groups are uniformly distributed, no other tested method achieves such AUC value even with $ n = 1500$ samples.
As we also observed from Figure \ref{fig:alpha_dependency}, these results indicates that when $\bw$ is close to uniform CVaR testing using attribute specific sampling offers the best performance across all tested groups while weighted sampling with $\bv \propto \bw^{2/3}$ is the best-performing sampling method when the entropy of $\bw$ is small.
In all cases, the baseline method \eqref{eq:maxgaptest} for max-gap fairness testing doesn't achieve reasonable performance.

\begin{figure}[t]
  \centering
  \begin{subfigure}{0.45\linewidth}
    \includegraphics[width=\linewidth]{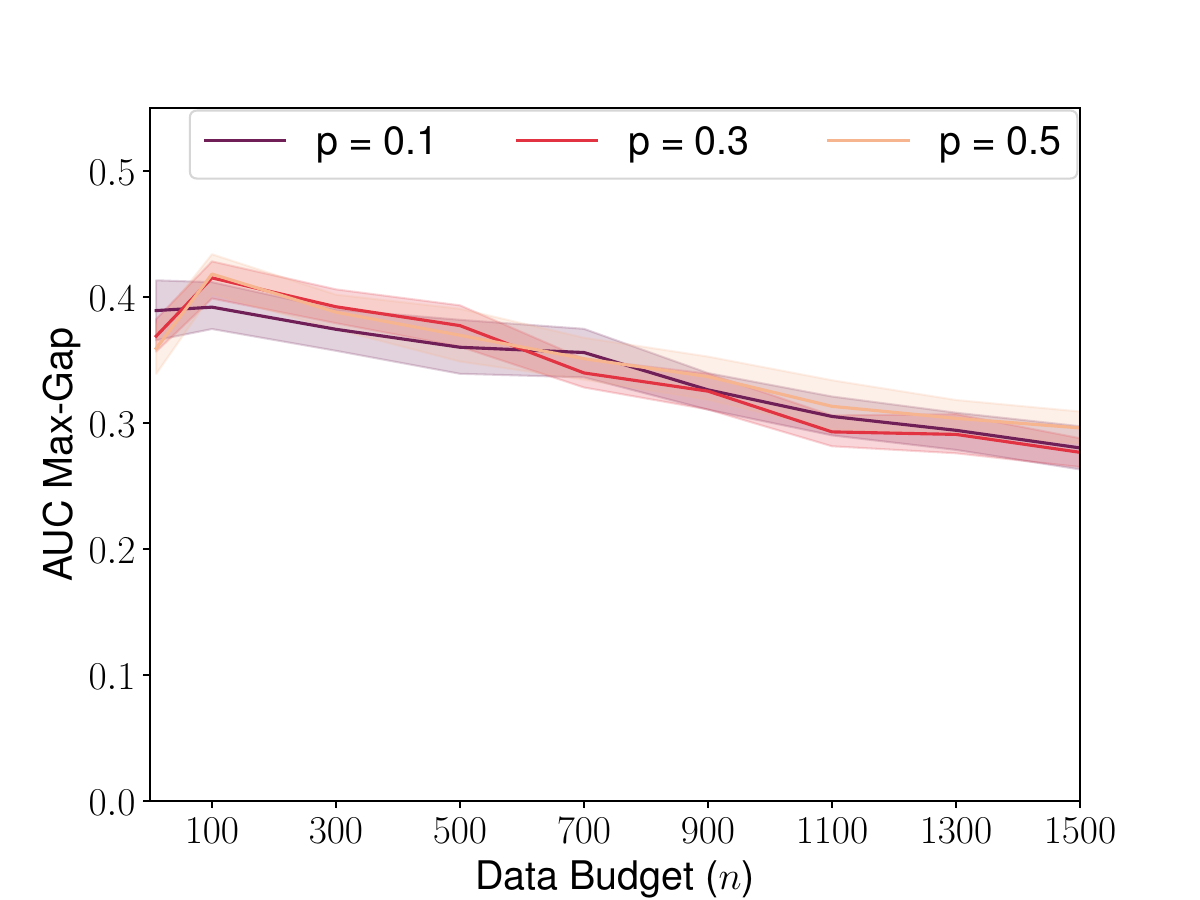}
    \caption{Max-Gap}
  \end{subfigure}
  \begin{subfigure}{0.45\linewidth}
    \includegraphics[width=\linewidth]{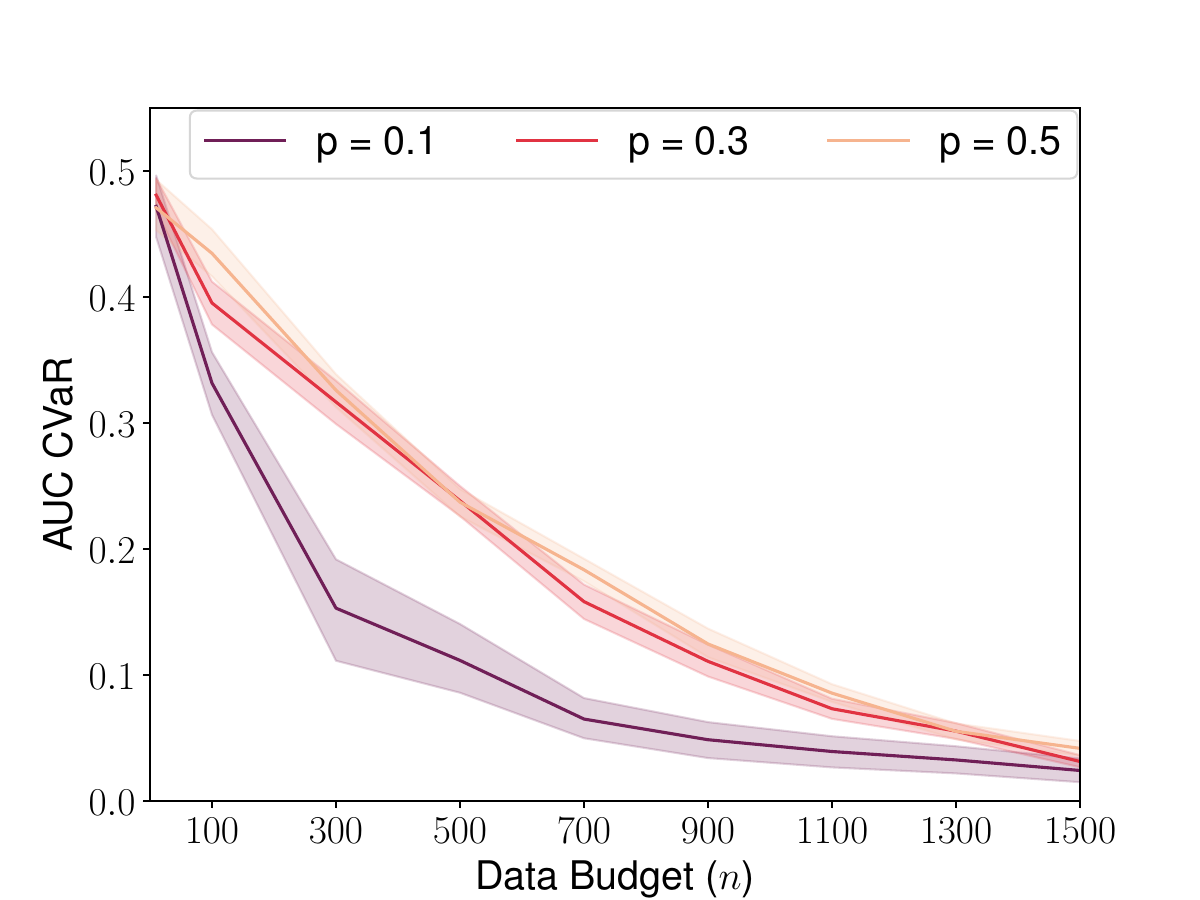}
    \caption{$\bw$ Weighted}
  \end{subfigure}
  \begin{subfigure}{0.45\linewidth}
    \includegraphics[width=\linewidth]{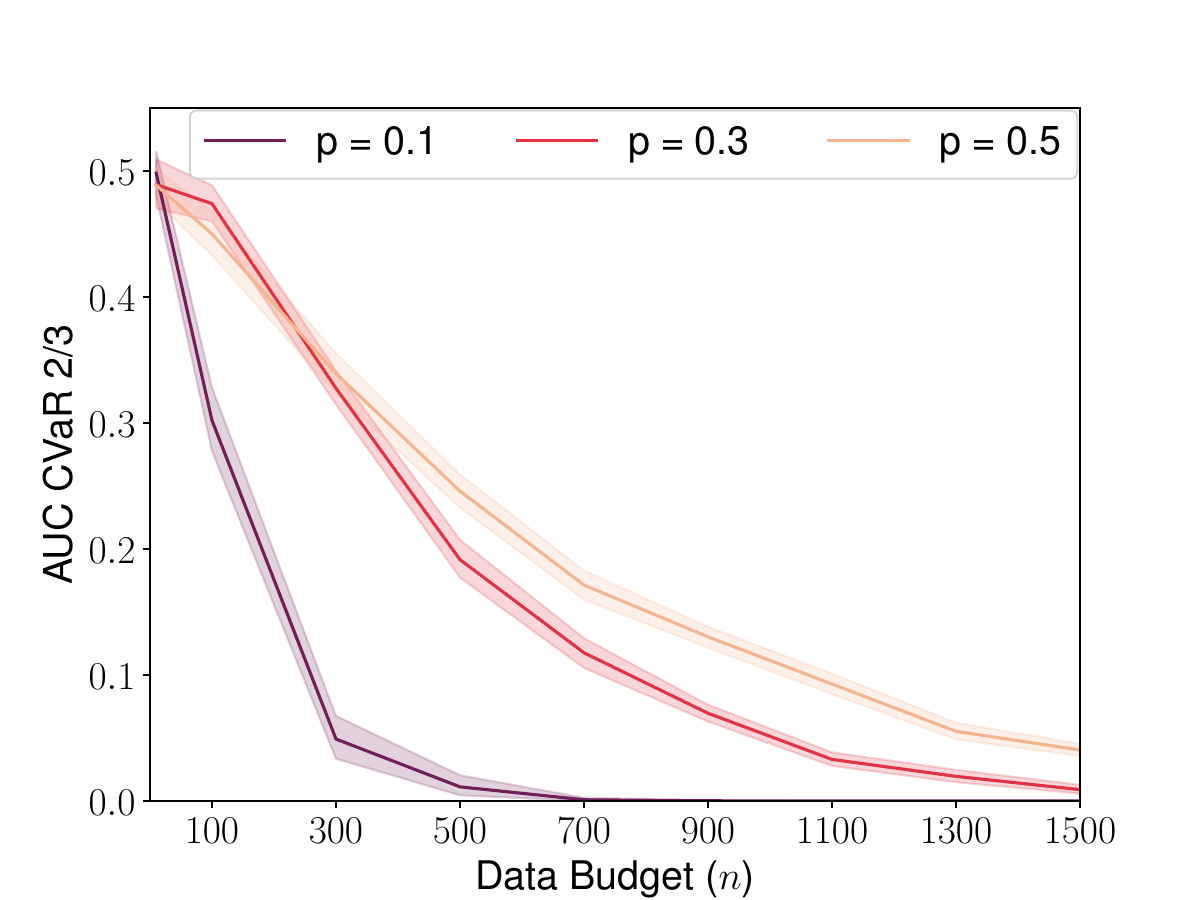}
    \caption{$\bw^{2/3}$ Weighted}
  \end{subfigure}
  \begin{subfigure}{0.45\linewidth}
    \includegraphics[width=\linewidth]{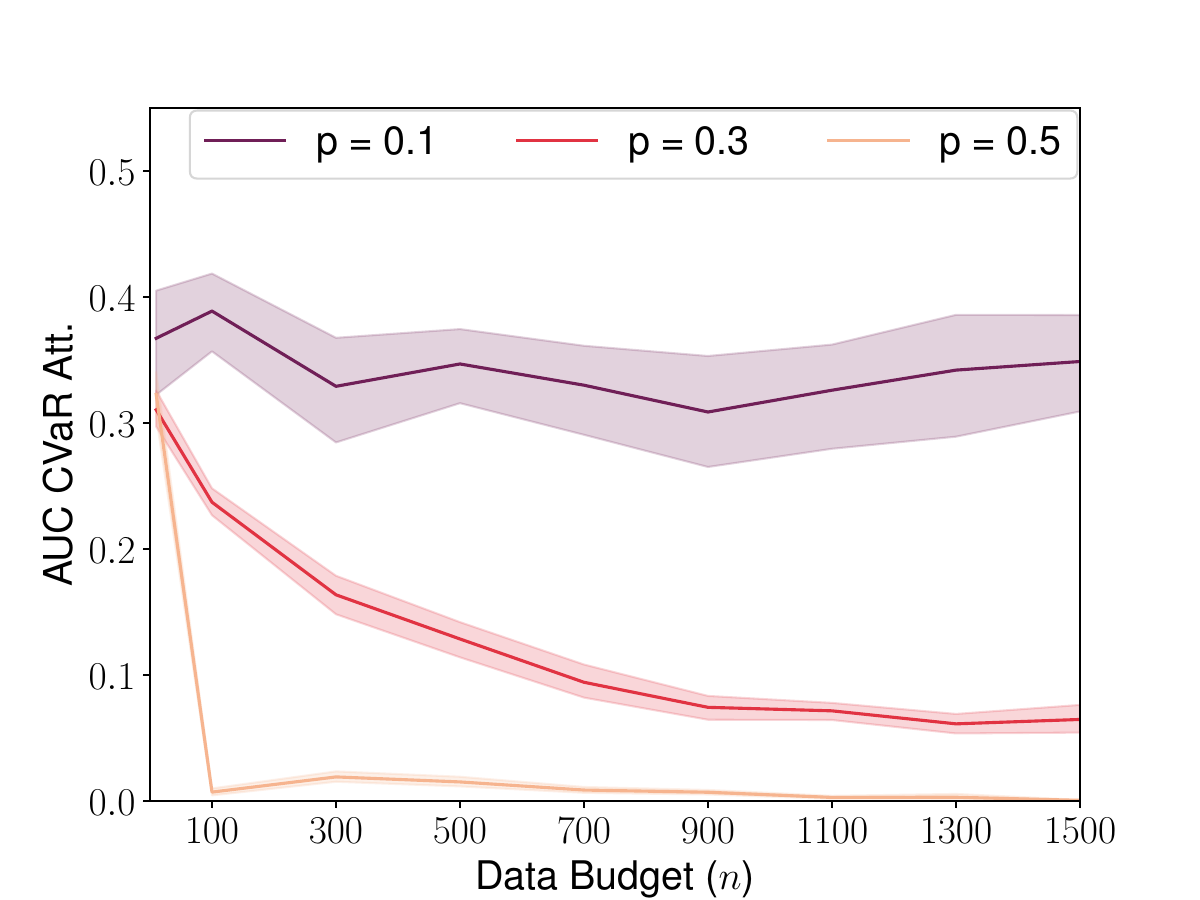}
    \caption{Attribute Specific}
  \end{subfigure}

  \caption{
  \revision{
  Area under the false negative vs. false positive curve (AUC) versus the number of samples used to perform the $\epsilon$-test.
  We vary the entropy by changing the group distribution Bernoulli parameter $p$ with a fixed number of sensitive attributes $d = 10$, generating $1024$ groups and data budget in $x$-axis.
  We use $100$ samples in the Monte Carlo method and repeat this process $20$ times to estimate AUC and plot confidence intervals using Bootstrap from Seaborn \cite{Waskom2021} with $95\%$ confidence.
  }}
  \label{fig:AUC_nsamples_p}
\end{figure}

\paragraph{Maximum $\gp$ for Reliably Auditing} Figure \ref{fig:max_groups} uses the lower bound on the error probability from Proposition \ref{prop:hyptest_lower_bound} and Theorem \ref{thm:converse_CVaR}.
In both cases, we first showed a lower bound for the minimum probability of error in a $\epsilon$-test that uses i.i.d. samples from the data distribution; then we computed the maximum number of samples to make the lower bound smaller than a fixed error probability. 
Here, we take a different approach, fixed an $\epsilon$ threshold and the number of samples used to perform the $\epsilon$-test; we found the maximum number of groups that can be used while ensuring that there exists a decision function that achieves at least an error probability of $45\%$ (almost a random guess).
Specifically, we denote the maximum number of groups that can be used for reliably (error probability smaller than $45\%$) test for max-gap fairness as $G_{\texttt{max-gap}}(n, \epsilon)$ and for CVaR fairness as $G_{\texttt{CVaR}}(n, \epsilon, \alpha)$ and define then next.

In Proposition \ref{prop:hyptest_lower_bound}, we showed that the minimum probability of error is lower bounded by \eqref{eq:prob_error_max_gap}, and if we take it to be $0.45$ we have that the number of groups $|\gp|$ is such that
\begin{equation}
    2\inf_{\psi_{\epsilon}} P_{\text{error}} = 0.9 \geq 1 - \left[2\left(1 - \left(1 - \frac{2\epsilon^2}{|\gp|}\right)^{n}\right)\right]^{1/2}.
    \label{eq:prob_error_max_gap}
\end{equation}
Then, the maximum number of groups we can have while ensuring that the probability of error in testing for max-gap fairness is at most $45\%$ is given by
\begin{equation}
    G_{\texttt{max-gap}}(n, \epsilon) \triangleq \argmax_{|\gp|} \left\{ |\gp| \in \mathbb{N} \Big| 0.9 \geq 1 - \left[2\left(1 - \left(1 - \frac{2\epsilon^2}{|\gp|}\right)^{n}\right)\right]^{1/2} \right\}.
    \label{eq:max_gap_max_groups}
\end{equation}

Similarly, using the lower bound for the probability of error when performing an $\epsilon$-test for CVaR fairness proved in Theorem \ref{thm:converse_CVaR}, the maximum number of groups to ensure that the probability of error is at most $45\%$ is given by
\begin{equation}
        G_{\texttt{CVaR}}(n, \epsilon, \alpha) \triangleq \argmax_{|\gp|} \left\{ |\gp| \in \mathbb{N} \Big| 0.9 \geq 1 - \frac{1}{2} \sqrt{e^{ \frac{ 1024 (1 - \alpha) n^2 \epsilon^4}{\alpha^{4} |\gp|  }} - 1} \right\}.
        \label{eq:cvar_max_groups}
\end{equation}

We show $G_{\texttt{max-gap}}(n, \epsilon)$ and $G_{\texttt{CVaR}}(n, \epsilon, \alpha)$ in Figure \ref{fig:max_groups}.

Remarkably, when $50k$ samples are available, the maximum number of groups a model can have while it is still possible to perform an $0.1$-test using max-gap fairness with a probability of error of at most $45\%$ (almost a random guess) is $ \lfloor 2^{17.6} \rfloor$ (Figure \ref{fig:max_groups} (a)) while, under the same scenario, it is possible to perform an $0.1$-test using CVaR fairness with $ \lfloor 2^{25.2} \rfloor$ using $\alpha = 0.9$ (Figure \ref{fig:max_groups} (b)).
In practice, this result implies that it is impossible (without further assumption on the dataset distribution) to determine if a machine learning model that was deployed using $18$ binary sensitive attributes has a max-gap fairness bigger than $0.1$.
On the other hand, using CVaR fairness testing, it is possible to reliably test if a machine-learning model that was deployed using $25$ binary sensitive attributes has CVaR fairness bigger than $0.1$ when we set $\alpha = 0.9$ (the number of sensitive attributes grows with $\alpha$).
Therefore, our bounds indicate that CVaR fairness allows practitioners to deploy machine learning models with more sensitive attributes while reliably ensuring that the model is fair according to CVaR fairness.

\begin{figure}[t]
  \centering
  \begin{subfigure}{0.45\textwidth}
    \includegraphics[width=\linewidth]{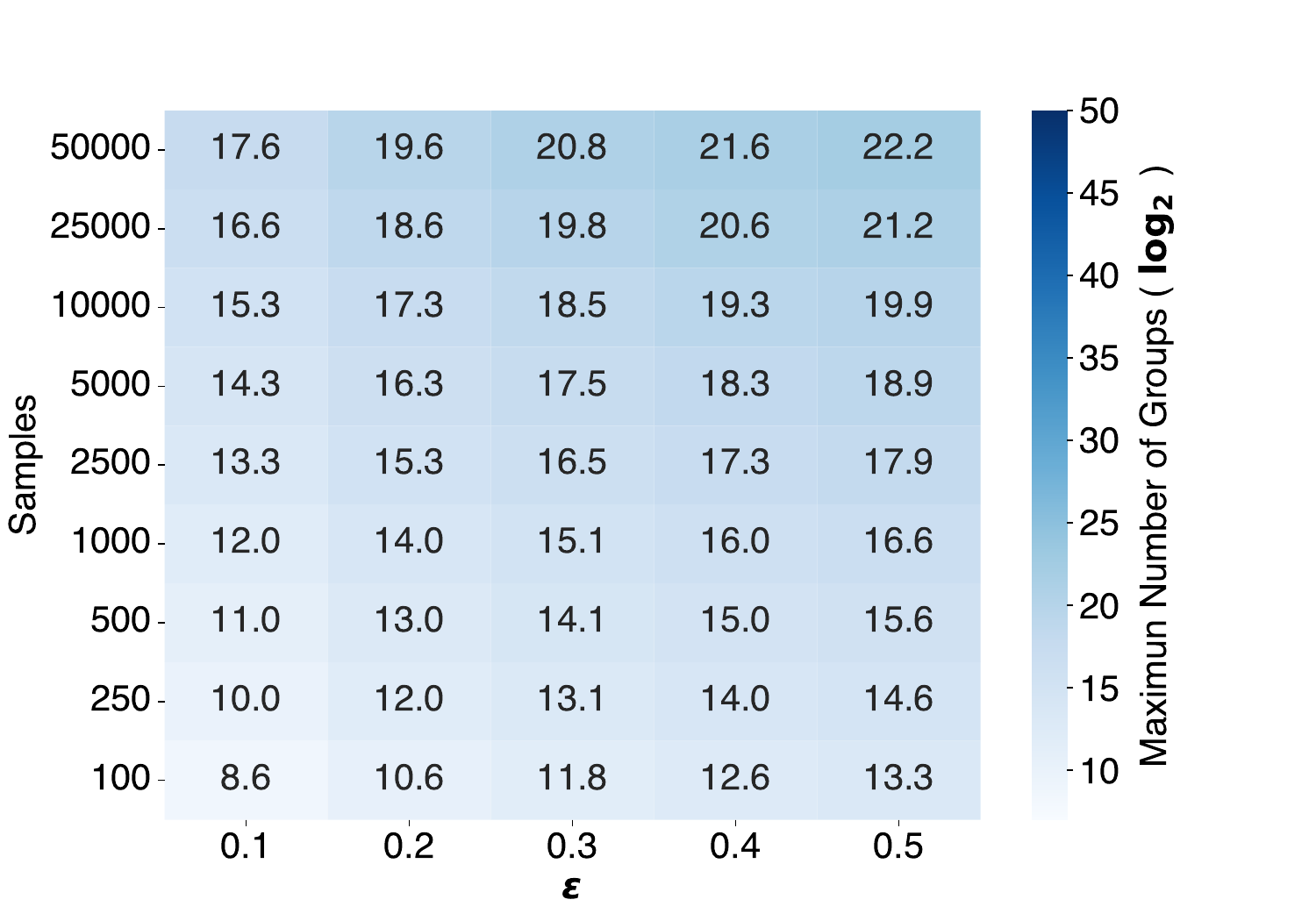}
    \caption{Max-Gap}
  \end{subfigure}
  \begin{subfigure}{0.45\textwidth}
    \includegraphics[width=\linewidth]{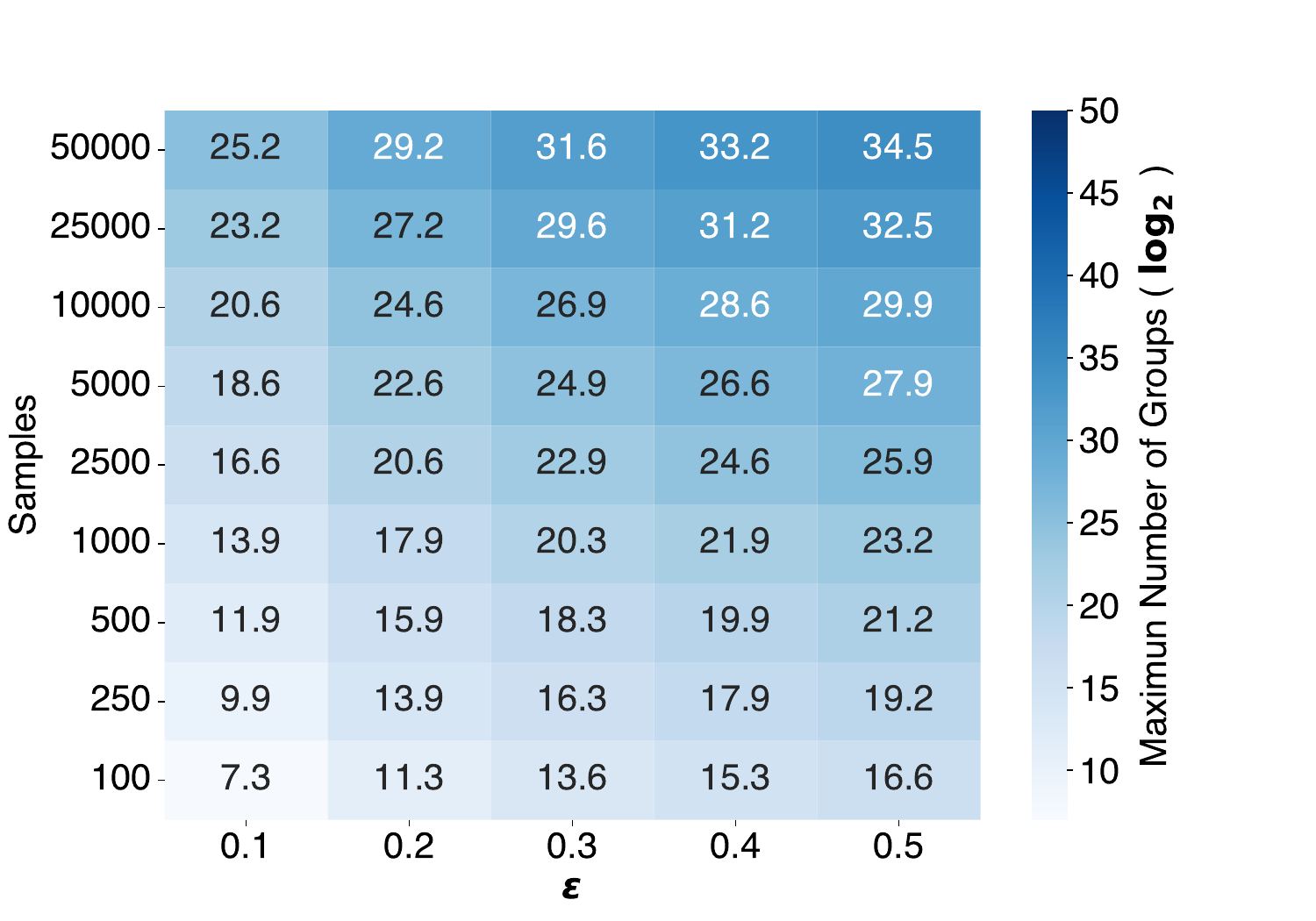}
    \caption{CVaR ($\alpha = 0.9$)}
  \end{subfigure}
  \caption{
  \revision{
  The maximum number of groups $\gp$ (z-axis) for each choice of threshold $\epsilon$ (x-axis) and sample size (y-axis) to ensure that the probability of error in the hypothesis test from Definition \ref{def:Hypothesis_Test} is smaller than $45\%$.
  Figure (a) shows the maximum number of groups when testing if max-gap fairness is bigger than $\epsilon$.
  Figures (b) the maximum number of groups when testing if CVaR fairness is bigger than $\epsilon$ using an $\alpha$ value of $0.9$.
  The maximum number of groups was computed using \eqref{eq:max_gap_max_groups} for max-gap and \eqref{eq:cvar_max_groups} for CVaR fairness.
  }}
  \label{fig:max_groups}
\end{figure}

\paragraph{Concluding remarks from experiments}
The experiments indicate that testing for CVaR fairness has a more favorable trade-off between sample complexity and the number of demographic groups in comparison with max-gap fairness, as our theoretical results demonstrate.
Therefore, when the data budget to audit a model for fairness is not large enough (smaller than the number of groups) for max-gap fairness testing, we advocate for practitioners to use CVaR fairness with the largest possible $\alpha$ that still maintains the sample complexity under their data budget --- result showed in Figure \ref{fig:max_groups}.
Moreover, when the group weight vector $\bw$ has high R\'enyi entropy, our results indicate that attribute specific sampling is more efficient in testing for CVaR fairness --- as our theoretical results show given that the sample complexity of attribute specific does not increase with the entropy of $\bw$.
When the entropy of $\bw$ is not too large, using weighted sampling with $\bv \propto \bw^{2/3}$ yields an accurate $\epsilon$-test even under a small data budget as shown in Figure \ref{fig:AUC_nsamples_p} --- this is also expected from our theoretical results in Theorem \ref{thm:weighted_upper_general}.
Finally, when only i.i.d. sampling from the data distribution is available, i.e., weighted sampling with $\bv = \bw$, our numerical results also indicate that CVaR fairness testing has a more favorable sample complexity than the baseline max-gap fairness testing.

}

\section{Concluding remarks}
\emph{Conclusion.} This paper proposes a multi-group fairness metric called conditional value-at-risk Fairness (CVaR fairness) inspired by conditional value-at-risk in finance.
In fair machine learning, we are interested in testing if a given fairness metric is greater than a fixed threshold or equal to zero.
We show that the usual multi-group fairness metrics (we call it max-gap fairness) have a high sample complexity, making testing for multi-group fairness infeasible.
This motivates CVaR fairness, a metric that allows practitioners to relax fairness guarantees to decrease the sample complexity for reliably testing for multi-group fairness.
We prove that it is possible to recover max-gap fairness from CVaR fairness by controlling one parameter in CVaR fairness.
Next, we propose algorithms that test if the CVaR fairness is bigger than a threshold or equal to zero.
We show that the proposed algorithms can reliably perform the hypothesis test with a more favorable sample complexity using an i.i.d. sample strategy (weighted sampling), improving the previous sample complexity in at least a square root factor.
Moreover, we provided a non-i.i.d. sampling strategy (attribute specific sampling) that, paired with our proposed algorithm, can test for CVaR fairness with a test dataset size independent of the number of groups.
Finally, we also provided a converse result on testing for CVaR fairness using weighted sampling, showing that the order of the sample complexity of the proposed algorithm is optimal under some assumptions.
Our results show that by using CVaR fairness and the proposed algorithms, ML practitioners can reliably test for multi-group fairness by trading off fairness guarantees for a more favorable sample complexity.

\secondrev{\emph{Ethical Considerations.} We propose CVaR fairness as a relaxation of max-gap fairness. CVaR fairness achieves a more favorable sample complexity relative to max-gap fairness by providing a weaker fairness guarantee. We highlight that the parameter $\alpha$ captures this trade-off and can be tuned to fit both a fairness requirement and a sample size constraint. We recommend that practitioners choose this parameter responsibly and be mindful that small values of $\alpha$ may decrease fairness guarantees.}

\emph{Future Work.} We hope that future work explores the empirical implications of our testing proposal to demonstrate the trade-off between sample complexity and fairness guarantees from CVaR fairness.
Additionally, by shifting from max-gap fairness to CVaR fairness, we are able to gradually improve sample complexity but this could come at the cost of possibly overlooking performance gaps for uniquely rare and unsampled groups.
While we believe this should be rare in practice and our framework is tunable, it would be valuable for future work to empirically examine this on multiple real-world use cases.
Moreover, our results only hold for binary group-specific losses, i.e., $L \in \{0, 1\}$. 
Hence, developing theoretical guarantees for multi-group fairness notions with non-binary $L$, such as multicalibration is another direction for future investigation.
\revision{Finally, we point out that our results only hold for a finite (although as large as desired) number of groups. 
However, providing theoretical guarantees for testing for multi-group fairness in the presence of uncountable continuous groups (e.g., human height) is also of interest in the community \cite{Johnson18-multicalibration, kearns2018preventing}.
We highlight that CVaR fairness (Definition \ref{def:CVaRFairness}) can be straightforwardly generalized for continuous groups. However, the theoretical guarantees we provide don't automatically extend to handle such a definition.
For this reason, we leave this contribution as future work.
}

\section{Acknowledgetments}

The authors thank Flavien Prost (Google Research) and Alexandru Tifrea (ETHZ) for their helpful comments and discussions on an earlier version of this paper. We also thank the anonymous reviewers for their careful contribution.

\bibliographystyle{IEEEtran}
\bibliography{citations.bib}
\clearpage

\appendices

 \secondrev{
\section{Table of Notation}

\begin{table}[htb!]
\begin{center}
\caption{\secondrev{Table of Notation}}
\label{table}
\secondrev{
\setlength{\tabcolsep}{3pt}
\begin{tabular}{c|p{200pt}}
\hline
\hline 
\textbf{Symbol}& \textbf{Meaning} \\
\hline
\hline
$\Delta^{d}$ & $d$-dimensional simplex\\
$ x $&  feature vector \\
$ y $&  true label \\
$ \hat{y} $ &  predicted label \\
$ \gp $ &  set of all protected groups\\
$ g $&  specific protected group\\
$ z $ &  tuple $z = (x, g, y, \hat{y})$ \\
$ \widetilde{P}_Z $ &  distribution of samples $z$\\
$ n $& data budget \\
$ \mathcal{D}_{\text{audit}} $ &  audit data $\mathcal{D}_{\text{audit}} = \{z_i\}_{i= 0}^{n}$ \\
$ \bw = (w_1, ..., w_{|\gp|})$ &  stochastic vector of group distribution (also called group weight vector)\\
$ \bv = (v_1, ..., v_{|\gp|})$ &  stochastic vector that represents group marginal from the audit dataset\\
$ \epsilon $ & hypothesis test threshold \\
$ P_{\text{error}} $ & probability of error in the hypothesis test\\
$ \gtg{L} $ & quality of service function \\
$ \averagegtg{L} $ & average quality of service \\
$\Delta(g; P, L, \bw)$ & per group fairness gap \\
$\fairness$ & max-gap fairness \\
$\alpha$ & group percentile threshold \\
$\texttt{F\_CVaR}_{\alpha}$ & CVaR fairness \\
$\widehat{F}(\bw)$ & CVaR fairness estimator \\
\hline
\hline
\end{tabular}
}
\label{tab1}
\end{center}
\end{table}

}
\section{ Lower Bounds for Hypothesis Testing for Max-Gap Fairness}
\label{sec:lower_bd_max_gap}

This section shows the lower bounds for the hypothesis test using the standard max-gap fairness metrics in Definition \ref{def:multi_group_fairness}.
Particularly, we focus on Equal Opportunity fairness. The results here were discussed in Section \ref{sec:max-gap_fairness}.

Recall that the Hellinger divergence between two distributions $P, Q$ on $\cZ$ is given by 
    \begin{align*}
     \textsc{H}^2(P || Q)  &= \sum_{z \in \cZ} \left(  \sqrt{p(z)} - \sqrt{q(z)}\right)^2.
    \end{align*}
\begin{lemma}[Close Distributions] 
    For max-gap fairness metrics $\fairness$ (e.g., equal opportunity in Example \ref{eg:equal_opportunity} and statistical parity in Example \ref{eg:statistical_parity}) with quality of service function $L$ and measuring the average quality of service using the distribution $P$ such that $(L, P) \in \{(L_{EO}, P_{EO}), (L_{SP}, P_{SP})\}$.
    If $\epsilon \leq 0.5$ there exists a distribution $P_0 \in \hypregions_0$ and $P_1 \in \hypregions_1(\epsilon)$ such that:
    \begin{equation}
        \text{H}^2(P_0 || P_1) \leq 2 - 2\left(1 - \frac{2\epsilon^2}{|\gp|}\right).
    \end{equation}
    \label{lem:CloseDistributions_apx}
\end{lemma}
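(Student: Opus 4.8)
The plan is to exhibit an explicit pair $(P_0, P_1)$ that differs only in the prediction behaviour of a single group, and then exploit the additivity of the squared Hellinger distance across groups. I will write the argument for equal opportunity (the case $(L,P)=(L_{EO},P_{EO})$); statistical parity is identical after replacing conditioning on $Y=0$ by the unconditional event. The first observation is that, for uniform weights $w_g = 1/|\gp|$, both $\fairness$ and membership in $\hypregions_0,\hypregions_1(\epsilon)$ depend on $\widetilde P_Z$ only through the per-group rates $p_g \triangleq \Pr(\widehat Y = 1\mid Y=0, G=g)$, since $\averagegtg{L}=\frac{1}{|\gp|}\sum_g p_g$ and $\Delta(g)=|p_g-\averagegtg{L}|$. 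I would therefore construct $\widetilde P_Z$ so that, conditioned on $Y=0$, the group label $G$ is uniform (matching $w_g=P(G=g)=1/|\gp|$), $X$ carries no information beyond $G$, and $\widehat Y\mid G=g\sim\mathrm{Ber}(p_g)$. Because $P_0$ and $P_1$ will share the group marginal and differ only in the conditionals $\widehat Y\mid G$, the Hellinger affinity factorizes and gives
\begin{equation}
    \text{H}^2(P_0\|P_1) = \frac{1}{|\gp|}\sum_{g\in\gp}\text{H}^2\!\left(\mathrm{Ber}(p_{g,0})\,\big\|\,\mathrm{Ber}(p_{g,1})\right),
\end{equation}
where $p_{g,i}$ is the rate of group $g$ under $P_i$. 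This reduces the whole problem to controlling a single Bernoulli-versus-Bernoulli Hellinger distance.

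Next I would choose the rates. Set a common base rate $a=\tfrac12-\tfrac{\delta}{2}$ and let $P_0$ assign $p_{g,0}=a$ to \emph{every} group, so $\fairness(P_0)=0$ and $P_0\in\hypregions_0$. Let $P_1$ be identical except that one distinguished group $g^\ast$ is moved to $b=\tfrac12+\tfrac{\delta}{2}$, keeping all other groups at $a$. Then in $P_1$ the average is $\averagegtg{L}=a+\delta/|\gp|$ and the gap of $g^\ast$ is $|b-\averagegtg{L}|=\frac{|\gp|-1}{|\gp|}\,\delta$; I would pick $\delta=\frac{\epsilon\,|\gp|}{|\gp|-1}$ so that this gap equals exactly $\epsilon$, giving $P_1\in\hypregions_1(\epsilon)$. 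The constraint $\epsilon\le 0.5$ is used precisely here: it guarantees $\delta\le 1$ (as $1-1/|\gp|\ge 1/2$), hence $a,b\in[0,1]$ and the construction is a valid distribution. Since $P_0$ and $P_1$ agree on all groups except $g^\ast$, only that term survives in the sum above.

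Finally I would bound the surviving term. By symmetry of the two rates about $1/2$,
\begin{equation}
    \text{H}^2\!\left(\mathrm{Ber}(\tfrac12-\tfrac{\delta}{2})\,\big\|\,\mathrm{Ber}(\tfrac12+\tfrac{\delta}{2})\right)=2-2\sqrt{1-\delta^2}\le 2\delta^2,
\end{equation}
using $1-\sqrt{1-x}\le x$ for $x\in[0,1]$. Substituting back yields $\text{H}^2(P_0\|P_1)\le \frac{2\delta^2}{|\gp|}=\frac{2\epsilon^2}{|\gp|}\cdot\frac{|\gp|^2}{(|\gp|-1)^2}$, which for the intersectional regime of interest (many groups) is at most $\frac{4\epsilon^2}{|\gp|}=2-2\bigl(1-\tfrac{2\epsilon^2}{|\gp|}\bigr)$, the claimed bound. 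I expect the conceptual crux to be the quadratic-in-$\epsilon$ scaling: the reason we center the perturbation at $1/2$ rather than moving a group away from $0$ is that $\sqrt{\cdot}$ is smooth at $1/2$, so a displacement $\delta$ costs only $O(\delta^2)$ in Hellinger, whereas a displacement from the boundary costs $O(\delta)$ and would yield the much weaker (and, for Le Cam, useless) bound $O(\epsilon/|\gp|)$. The only bookkeeping subtlety is the self-referential average, which contributes the benign $\tfrac{|\gp|-1}{|\gp|}$ factor folded into $\delta$; everything else is a direct computation.
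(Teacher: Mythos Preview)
Your proposal is correct and follows essentially the same route as the paper: construct $P_0$ with all groups at a common rate near $1/2$, perturb a single group in $P_1$ by $\delta=\epsilon\,|\gp|/(|\gp|-1)$, and reduce the Hellinger computation to a single Bernoulli pair. The only cosmetic difference is that the paper keeps $P_0$ at exactly $1/2$ and moves the distinguished group one-sidedly to $1/2+\delta$, whereas you shift $P_0$ to $1/2-\delta/2$ so the perturbation is symmetric about $1/2$; your version makes the Bernoulli Hellinger collapse to the closed form $2-2\sqrt{1-\delta^2}$, which is slightly cleaner. Both arguments land on the same bound up to the $\bigl(|\gp|/(|\gp|-1)\bigr)^2$ factor you flag, and your restriction to ``many groups'' ($|\gp|\ge 4$ suffices) is exactly the slack the paper also absorbs at this step.
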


\begin{proof}
We will prove the desired result for equal opportunity described in Example \ref{eg:equal_opportunity}.
The result for other fairness metrics, such as statistical parity in Example \ref{eg:statistical_parity}, and equal odds \cite{hardt2016equality}, is analogous.
We start by assuming that $$P_0(Z|(\hat{Y}, Y, G)) = P_0(X|(\hat{Y}, Y, G)) = P_1(Z|(\hat{Y}, Y, G)) = P_1(X|(\hat{Y}, Y, G)),$$ i.e., the conditional distribution of $X$ from $P_0$ and $P_1$ are equal
\begin{equation}
    P_0(X|(\hat{Y}, Y, G)) = P_1(X|(\hat{Y}, Y, G)).
    \label{eq:bound_1_in_lem_1}
\end{equation}
Now, from \eqref{eq:bound_1_in_lem_1}, we conclude that it is only necessary to bound the Hellinger divergence for the marginal distributions of $(\hat{Y}, Y, G)$, because
\begin{align*}
     \textsc{H}^2(P_0 || P_1)  & = \EE_Q\left[\left(  \sqrt{\frac{P_1(Z)}{P_0(Z)}} - 1\right)^2\right] = \EE\left[\left(  \sqrt{\frac{P_1(\hat{Y}, Y, G)}{P_0(\hat{Y}, Y, G)}} - 1\right)^2\right]. 
\end{align*}
Next, we compute the divergence of the marginals. 
\begin{align}
    & \ \ \textsc{H}^2(P_0 || P_1)\\
     &= \sum_{ \substack{ g \in \gp\\ y \in \{0, 1\} \\ \hat{y} \in \{0, 1\}}} \biggl(  \sqrt{P_0(\hat{Y} = \hat{y}, Y = y ,G = g)}   - \sqrt{P_1(\hat{Y} = \hat{y}, Y = y ,G = g)} \biggl)^2 \\
     \nonumber
     & = \sum_{ \substack{ g \in \gp\\ y \in \{0, 1\} \\ \hat{y} \in \{0, 1\}}} \biggl(  \sqrt{P_0(\hat{Y} = \hat{y} \mid Y = y ,G = g)P_0(Y = y \mid G = g)P_0(G = g)}  \\
     & - \sqrt{P_1(\hat{Y} = \hat{y} \mid Y = y , G = g)P_1(Y = y \mid G = g)P_1(G = g)} \biggl)^2 \\
     \nonumber
    & = \sum_{ \substack{ g \in \gp\\ \hat{y} \in \{0, 1\}}} \biggl(  \sqrt{P_0(\hat{Y} = \hat{y} \mid Y = 1 , G = g)P_0(G = g)}  \\
    &- \sqrt{P_1(\hat{Y} = \hat{y} \mid Y = 1 , G = g)P_1(G = g)} \biggl)^2 \\
    \nonumber
    & = \sum_{ \substack{ g \in \gp\\ \hat{y} \in \{0, 1\}}}   P_0(\hat{Y} = \hat{y} \mid Y = 1 , G = g)w_g + P_1(\hat{Y} = \hat{y} \mid Y = 1 , G = g){w}_g\\ 
    & - 2w_g\sqrt{P_1(\hat{Y} = \hat{y} \mid Y = 1 , G = g)P_0(\hat{Y} = \hat{y} \mid Y = 1 , G = g)}  \\
    \nonumber
    &= 2  - 2\sum_{ \substack{ g \in \gp}} w_g\biggl(\sqrt{P_1(\hat{Y} = 1 \mid Y = 1 , G = g)P_0(\hat{Y} = 1 \mid Y = 1 , G = g)} \\
    \label{eq:bound_2_in_lem_1}
    & + \sqrt{P_1(\hat{Y} = 0 \mid Y = 1 , G = g)P_0(\hat{Y} = 0 \mid Y = 1 , G = g)}\biggl)
\end{align}

Now, take $w_g = \frac{1}{|\gp|}$. Also, let $P_0$ and $P_1$ be :
\begin{equation}
    P_0(\hat{Y} = \hat{y} \mid Y = 1 , G = g) = \frac{1}{2} \ \ \forall g \in \gp.
    \label{eq:bound_3_in_lem_1}
\end{equation}
\begin{align}
    P_1(\hat{Y} &= \hat{y} \mid Y = 1 , G = g) = \frac{1}{2} \ \ \forall g \in \gp \setminus \{g_1\}.\\
    P_1(\hat{Y} &= \hat{y} \mid Y = 1 , G = g_1) = \frac{1}{2} + \epsilon \frac{|\gp|}{|\gp|-1}.
    \label{eq:bound_4_in_lem_1}
\end{align}
Note that $P_0 \in \hypregions_0$ and $P_1 \in \hypregions_1(\epsilon)$.

Therefore, by plug in the distributions in \eqref{eq:bound_3_in_lem_1} and \eqref{eq:bound_4_in_lem_1} in the equality from \eqref{eq:bound_2_in_lem_1}, we have that
\begin{align*}
  \twocol{  & \ \  }\textsc{H}^2(P_0 || P_1) \twocol{\\ }
      &= 2  - 2\biggl( 1 + \frac{1}{|\gp|}\biggl(\sqrt{\frac{1}{2}\left(\frac{1}{2} + \frac{|\gp|}{|\gp|-1}\epsilon\right)} + \sqrt{\frac{1}{2}\left(\frac{1}{2} - \frac{|\gp|}{|\gp|-1}\epsilon\right)} - 1\biggl)\biggl) \\
    & \leq 2  - 2\biggl( 1 + \frac{1}{|\gp|}\biggl(\sqrt{\frac{1}{2}\left(\frac{1}{2} + \epsilon\right)} + \sqrt{\frac{1}{2}\left(\frac{1}{2} - \epsilon\right)} - 1\biggl)\biggl) \\
    &\leq 2 - 2 \left( 1 - \frac{2\epsilon^2}{|\gp|}\right),
\end{align*}
which completes the proof.
\end{proof}

Recall that the total variation divergence $\textsc{TV}( . || .)$ takes a pair of distributions $P$ and $Q$ and returns
\begin{equation}
    \textsc{TV}( P || Q) \triangleq \frac{1}{2} \sum_{z \in \cZ} \left| P(z) - Q(z)\right|.
\end{equation}

\HypTestLowerBound*

\begin{proof}
We will prove the desired result for equal opportunity described in Example \ref{eg:equal_opportunity}.
The result for other fairness metrics, such as statistical parity in Example \ref{eg:statistical_parity}, and equal odds \cite{hardt2016equality}, is analogous. 

We start with $n$ samples from two distributions $P_0 \in \hypregions_0$ and $P_1 \in \hypregions_1(\epsilon)$. 
Recall that the probability of error is given by
\begin{equation}
    P_{\text{error}}(\psi) = \frac{1}{2}(P_0 (\psi = 1) + P_1 (\psi = 0) ).
\end{equation}

Therefore, by choosing the best decision function $\psi$, by the Le Cam lemma \cite{LeCam1973}, and the fact that for all distributions $P$ and $Q$
\begin{equation*}
    \textsc{TV}(P || Q) \leq \textsc{H}(P || Q),
\end{equation*}
we conclude that
    \begin{align*}
         \inf_{\psi}  2 P_{\text{error}}(\psi) &=  1 - \textsc{TV}(P_0^n || P_1^n) \\
        & \geq  1 - \sqrt{\textsc{H}^2(P_0^n || P_1^n)}\\
        & \geq  1 - \biggl(2 \left(1 - \left(1 - \frac{1}{2}\textsc{H}^2(P_0 || P_1)\right)^n\right)\biggl)^{1/2}
    \end{align*}
    From Lemma ~\ref{lem:CloseDistributions_apx} we have that there exists $P_0 \in \mathcal{P}_0$ and $P_1 \in \mathcal{P}_1(\epsilon)$ such that 
    \begin{equation}
        \text{H}^2(P_0 || P_1) \leq 2 - 2\left(1 - \frac{2\epsilon^2}{|\gp|}\right), 
    \end{equation}
    and $\bw$ uniform.
    
    Taking the $P_0 \in \mathcal{P}_0$ and $P_1 \in \mathcal{P}_1(\epsilon)$ from Lemma ~\ref{lem:CloseDistributions_apx}, we have that
    \begin{align}
    \twocol{& \ \ } \inf_{\psi}  2 P_{\text{error}}(\psi) \twocol{\\}
        &\geq 1 - \biggl(2 \left(1 - \left(1 - \frac{1}{2}\textsc{H}^2(P_0 || P_1)\right)^n\right)\biggl)^{1/2} \nonumber \\
        & \geq 1 - \biggl(2 \left(1 - \left(1 - \frac{2\epsilon^2}{|\gp|}\right)^n\right)\biggl)^{1/2}   \label{eq:bound_2_in_cor_1}
    \end{align}
We now provide the lower bound on sample complexity. We start to differentiating between the distributions $P_0 \in \mathcal{P}_0$ and $P_1 \in \mathcal{P}_1(\epsilon)$ from Lemma ~\ref{lem:CloseDistributions_apx}.
We want to ensure that 
\begin{equation}
    \delta \geq  \inf_{\psi}  P_{e}(\psi).
    \label{eq:bound_1_in_cor_1}
\end{equation}

Next, we compute how large $n$ needs to be to ensure that \eqref{eq:bound_2_in_cor_1} holds. 
    \begin{align}
       &2\delta \geq 1 - \biggl(2 \left(1 - \left(1 - \frac{2\epsilon^2}{|\gp|}\right)^n\right)\biggl)^{1/2}\\
        &\Rightarrow 1 - \frac{(1 - 2\delta)^2}{2} \geq \left(1 - \frac{2\epsilon^2}{|\gp|}\right)^n\\
        &\Rightarrow \ln{ \left(1 - \frac{(1 - 2\delta)^2}{2}\right)} \geq n \ln\left(1 - \frac{2\epsilon^2}{|\gp|}\right)\\
        &\Rightarrow \ln{ \left(1 - \frac{(1 - 2\delta)^2}{2}\right)} \geq n \frac{-2\epsilon^2}{-2\epsilon^2 + |\gp|}\\
        &\Rightarrow n  \geq  \ln{ \left(1 - \frac{(1 - 2\delta)^2}{2}\right)}\frac{-2\epsilon^2 + |\gp|}{-2\epsilon^2}\\
        &\Rightarrow n  =  \Omega\left(\frac{|\gp|}{\epsilon^2}\right)
    \end{align}

Hence, we have the desired result. 
\end{proof}

\section{Properties of CVaR Fairness}
\label{sec:properties_CVaR}
In this section, we prove the properties from CVaR fairness discussed in Section \ref{sec:CVaR}.

\begin{lemma}[$\fairness$ is bounded] 
For all distributions $P$ with support on $\cZ$, if $L: \cZ \rightarrow \{0, 1\}$ then
\begin{equation}
    0 \leq \fairness (P, L, \averagegtg{L}) \leq 1.
\end{equation}
\label{lem:apx_bounded_CVaR}
\end{lemma}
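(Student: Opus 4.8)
The plan is to bound each ingredient of the max-gap metric separately, leveraging only the fact that $L$ takes values in $\{0,1\}$ and that $\bw$ is a stochastic vector. First I would observe that since $L: \cZ \to \{0,1\}$, every conditional expectation $\EE_P[L(Z) \mid G = g]$ is an average of values in $\{0,1\}$ and therefore lies in $[0,1]$. Consequently the average quality of service $\averagegtg{L} = \sum_{g \in \gp} w_g \EE_P[L(Z) \mid G = g]$ is a convex combination of numbers in $[0,1]$ (because $\bw \in \Delta^{|\gp|}$, so the weights are nonnegative and sum to one), which forces $\averagegtg{L} \in [0,1]$ as well.

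For the lower bound, I would simply note that each per-group gap $\Delta(g; P, L, \bw) = \left| \EE_P[L(Z) \mid G = g] - \averagegtg{L} \right|$ is an absolute value and hence nonnegative; taking a maximum over $g \in \gp$ of nonnegative quantities preserves nonnegativity, so $\fairness(P, L, \averagegtg{L}) \geq 0$.

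For the upper bound, the key elementary fact is that the absolute difference of two numbers both lying in $[0,1]$ is at most $1$. Applying this with the two numbers $\EE_P[L(Z) \mid G = g] \in [0,1]$ and $\averagegtg{L} \in [0,1]$ yields $\Delta(g; P, L, \bw) \leq 1$ for every $g \in \gp$, and taking the maximum over $g$ gives $\fairness(P, L, \averagegtg{L}) \leq 1$. Combining the two bounds completes the argument.

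Honestly there is no real obstacle here; the statement is a direct consequence of boundedness of $L$ and the convexity of the weighting. The only point requiring a moment of care is making explicit that $\averagegtg{L}$ inherits membership in $[0,1]$ from being a convex combination (rather than an arbitrary weighted sum), since that is exactly what keeps both endpoints of the difference inside the unit interval and guarantees the gap cannot exceed $1$.
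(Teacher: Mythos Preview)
Your proposal is correct and follows essentially the same approach as the paper: both arguments note that $\EE_P[L(Z)\mid G=g]\in[0,1]$ and $\averagegtg{L}\in[0,1]$, conclude $\Delta(g)\leq 1$, and take the maximum. Your write-up is simply more explicit about why each quantity lies in $[0,1]$ and about the (trivial) lower bound, which the paper leaves implicit.
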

\begin{proof}
    Recall that
    \begin{align}
        \Delta(g) &= \left| \EE_P[L(Z) | G = g] - \averagegtg{L} \right|\\
        &= \left| \EE_P[L(Z) | G = g] - \sum_{g \in \gp} w_g \EE_P[L(Z) | G = g] \right|.
    \end{align}
    Where $\averagegtg{L} \in [0, 1]$ and $\EE_P[L(Z) | G = g] \in [0, 1]$.
    Therefore, 
    \begin{align}
        \Delta(g) \leq 1 \Rightarrow \fairness = \max_{g} \Delta(g) &\leq 1.
        \label{eq:bound_1_in_prop_3}
    \end{align}
\end{proof}

\CVarLowerbound*

\begin{proof}
    From the definition of CVaR fairness we know that:
    \begin{align*}
        \texttt{F\_CVaR}_{\alpha}(\bw) &= \frac{1}{(1 - \alpha)}\max_{\sum_{g \in Q \subset \gp} w_g \leq (1 - \alpha)} \sum_{g \in Q} w_g \Delta(g) \geq 0
    \end{align*}
    because from the definition of $\Delta(g)$, $\Delta(g) \geq 0$ for all $g \in \gp$ and $w_g \geq 0$ for all $g \in \gp$.

    Moreover, we have that:
    \begin{align*}
         \texttt{F\_CVaR}_{\alpha}(\bw) &= \frac{1}{(1 - \alpha)} \max_{\sum_{g \in Q \subset \gp} w_g \leq (1 - \alpha)} \sum_{g \in Q} w_g \Delta(g)\\
         & \leq\frac{1}{(1 - \alpha)} \max_{\sum_{g \in Q \subset \gp} w_g \leq (1 - \alpha)}\sum_{g \in Q} w_g \max_{g \in \gp}\Delta(g)\\
         &= \max_{g \in \gp}\Delta(g) \max_{\sum_{g \in Q \subset \gp} w_g \leq (1 - \alpha)}\frac{1}{(1 - \alpha)} \sum_{g \in Q} w_g\\
         & \leq \max_{g \in \gp}\Delta(g)\\
         & = \fairness(P, L, \averagegtg{L}).
    \end{align*}
    
\end{proof}

\RecoverFairnessKnwingG*

\begin{proof}
    From proposition \ref{prop:CVaRLowerBound}, we already know that 
    \begin{equation*}
        \texttt{F\_CVaR}_{\alpha}(\bw) \leq \fairness(L, \averagegtg{L}).
    \end{equation*}
    However, by taking $\alpha = 1 - w_{g^*}$, the set $Q = \{ w_{g^*}\}$ for some $g^* \in \argmax_g \Delta(g)$ achieves the maximum
    \begin{align*}
        \frac{1}{(1 - \alpha)} \sum_{g \in Q} w_g \Delta(g) &= \frac{1}{w_{g^*}} w_{g^*} \Delta(g^*)\\
        &=\Delta(g^*)\\
        &= \fairness(L, \averagegtg{L}).
    \end{align*}
    Moreover, $\sum_{g \in Q \subset \gp} w_g  = w_{g^*} \leq w_{g^*} = 1 - \alpha$. Hence, the desired result follows from the fact that $Q$ achieves the maximum because
    \begin{align}
        \fairness(L, \averagegtg{L}) &= \frac{1}{(1 - \alpha)} \sum_{g \in Q} w_g \Delta(g) \\
        &\leq \frac{1}{(1 - \alpha)}  \max_{\sum_{g \in Q \subset \gp} w_g  
         \leq  (1 - \alpha)} \sum_{g \in Q} w_g \Delta(g) \\
         & = \texttt{F\_CVaR}_{\alpha}(\bw) \leq \fairness(L, \averagegtg{L}).
    \end{align}

    Note that if $\bw = \left( \frac{1}{|\gp|}, ..., \frac{1}{|\gp|}\right)$, then $\alpha = 1 - w_{g^*} = 1 - \frac{1}{|\gp|}$.
    
\end{proof}

\section{Upper Bound for CVaR Testing}
\label{sec:upper_bound_CVaR}

This section proves the results in Section \ref{sec::testing}.

\begin{lemma}
\label{lem:apx_weight_bounds}
Under weighted sampling, if $nv_g \leq 1$ and $n \geq 2$,
\[
\Pr(M_{ {g}} \geq 1) \geq  n v_g/e,
\]
and 
\[
\Pr(M_{ {g}} \geq 2) \geq n^2 v^2_g / 4e.
\]
\end{lemma}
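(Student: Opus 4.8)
The plan is to exploit the fact that under weighted sampling $M_g \sim \mathrm{Bin}(n, v_g)$, so the tail probabilities can be written explicitly, and to lower bound each tail by a single, carefully chosen term of the binomial mass function. For the first inequality I would discard all terms except the one-success term, giving $\Pr(M_g \ge 1) \ge \Pr(M_g = 1) = n v_g (1-v_g)^{n-1}$. It then only remains to bound $(1-v_g)^{n-1}$ from below. Since the hypothesis $n v_g \le 1$ gives $v_g \le 1/n$, monotonicity yields $(1-v_g)^{n-1} \ge (1-1/n)^{n-1}$, and invoking the elementary inequality $(1-1/n)^{n-1} \ge e^{-1}$ concludes $\Pr(M_g \ge 1) \ge n v_g / e$.

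For the second inequality I would proceed in exact parallel, this time keeping the two-success term: $\Pr(M_g \ge 2) \ge \Pr(M_g = 2) = \binom{n}{2} v_g^2 (1-v_g)^{n-2}$. The surviving power of $(1-v_g)$ is handled the same way, using $v_g \le 1/n$ together with $(1-1/n)^{n-2} \ge (1-1/n)^{n-1} \ge e^{-1}$, where the first step holds because $1-1/n \le 1$ so raising it to a smaller power only increases it. Finally the binomial coefficient is bounded crudely by $\binom{n}{2} = \tfrac{n(n-1)}{2} \ge \tfrac{n^2}{4}$, which is exactly where the assumption $n \ge 2$ is used (it guarantees $n-1 \ge n/2$). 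Multiplying these bounds gives $\Pr(M_g \ge 2) \ge \tfrac{n^2 v_g^2}{4e}$.

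The only nontrivial ingredient is the inequality $(1-1/n)^{n-1} \ge e^{-1}$, which I expect to be the main (and essentially only) obstacle; everything else is monotonicity and the crude bound on $\binom{n}{2}$. I would justify it by setting $x = 1/n \in (0,1]$ and showing $h(x) := (1-x)\ln(1-x) + x \ge 0$, which follows from $h(0)=0$ and $h'(x) = -\ln(1-x) \ge 0$ on $[0,1)$; exponentiating the resulting bound $(n-1)\ln(1-1/n) \ge -1$ gives the claim. Since this single fact powers both halves of the lemma, once it is established the two stated bounds follow immediately.
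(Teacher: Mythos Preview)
Your proposal is correct and follows essentially the same approach as the paper: both bound $\Pr(M_g \ge k)$ below by the single term $\Pr(M_g = k)$ for $k=1,2$, then invoke $nv_g \le 1$ to control the power of $(1-v_g)$ and use $n \ge 2$ for the crude bound $\binom{n}{2} \ge n^2/4$. You supply more detail than the paper does, in particular an explicit justification of $(1-1/n)^{n-1} \ge e^{-1}$, which the paper leaves implicit.
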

\begin{proof}
Since  $nv_g \leq 1$,
\[
\Pr(M_{ {s}} \geq 1) \geq \Pr(M_{ {s}} = 1)=  n v_g (1-v_g)^{n-1} \geq n v_g/e,
\]
Similarly $nv_g \leq 1$,
\[
\Pr(M_{ {s}} \geq 2) \geq \Pr(M_{ {s}} = 2)=  n(n-1)/2 (v_g)^2 (1-v_g)^{n-2} \geq n^2 v^2_s / 4e,
\]
where the last inequality follows from $n \geq 2$.
\end{proof}

\begin{lemma} [Computing Average of First Order Estimation] Define $\frac{\sum_{i = 1}^{M_g} \gtg{L}(z_i)}{M_g} = 0$ if $M_g = 0$ and recall that $z = (x, g, y)$.
If $L: \cZ \rightarrow \{0,1\}$, then
\begin{equation}
    \EE \left[ \frac{\sum_{i = 1}^{M_g} \gtg{L}(z_i)}{M_g}\right] = \EE [\gtg{L}(z) \mid G = g] \Pr(M_g \geq 1)
\end{equation}
\label{lem::apx_avg-first-order}
\end{lemma}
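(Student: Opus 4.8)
The plan is to condition on the realized sample count $M_g=m$ and apply the tower property of expectation. First I would observe that under weighted sampling one draws $M_g$ first and then collects $M_g$ i.i.d.\ points from $P(Z\mid G=g)$; hence, conditioned on $M_g=m$ with $m\ge 1$, the samples $z_1,\dots,z_m$ are i.i.d.\ with common conditional mean $\EE[L(Z)\mid G=g]$. Since $L$ takes values in $\{0,1\}$ these means are finite, so by linearity of expectation
\[
\EE\left[\left.\frac{\sum_{i=1}^{m}L(z_i)}{m}\,\right|\,M_g=m\right]=\frac{1}{m}\sum_{i=1}^{m}\EE[L(z_i)\mid G=g]=\EE[L(Z)\mid G=g],
\]
where the crucial point is that the normalization $1/m$ exactly cancels the $m$ identical terms, so the conditional mean does not depend on $m$.

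Next I would handle the degenerate case $m=0$, where the estimator is set to $0$ by the convention in the statement and therefore contributes nothing. Splitting the outer expectation over the values of $M_g$ and isolating the $m=0$ term gives
\[
\EE\left[\frac{\sum_{i=1}^{M_g}L(z_i)}{M_g}\right]=\sum_{m\ge 1}\Pr(M_g=m)\,\EE[L(Z)\mid G=g]=\EE[L(Z)\mid G=g]\,\Pr(M_g\ge 1),
\]
which is exactly the claimed identity.

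The only step requiring care --- and the main (if mild) obstacle --- is justifying that conditioning on the value of $M_g$ does not distort the sampling distribution, i.e.\ that the $z_i$ remain i.i.d.\ draws from $P(Z\mid G=g)$ given $M_g=m$. This is immediate from the data collection process in Definition~\ref{def:weightedsampling}, since the group count is generated independently of which points are subsequently sampled; stating this independence explicitly is what legitimizes interchanging the inner expectation with the randomness in the sample size. I would also remark that the binary assumption $L\in\{0,1\}$ is not essential to this identity (any integrable $L$ would suffice); it is invoked only to keep the quantities bounded for the variance estimates used elsewhere in Appendix~\ref{sec:upper_bound_CVaR}.
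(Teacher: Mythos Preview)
Your proposal is correct and follows essentially the same approach as the paper: condition on $M_g=m$, compute the inner expectation as $\EE[L(Z)\mid G=g]$ for $m\ge 1$ and $0$ for $m=0$, then take the outer expectation over $M_g$ to obtain $\EE[L(Z)\mid G=g]\Pr(M_g\ge 1)$. Your added remarks on the independence of $M_g$ from the sampled points and on the inessentiality of the binary assumption are accurate but not present in the paper's proof.
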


\begin{proof}
By the conditional law of expectation,
\begin{align*}
\EE\left[\frac{\sum_{i = 1}^{M_g} \gtg{L}(z_i)}{M_{ {g}}}\right] &=  \EE\left[\left.\EE\left[\frac{\sum_{i = 1}^{M_g} \gtg{L}(z_i)}{M_{ {g}}}\right|M_{ {g}} = m_g \right]\right].
\end{align*}

If $m_{ {g}} = 0$, by definition,
\begin{equation}
    \left.\EE\left[\frac{\sum_{i = 1}^{M_g} \gtg{L}(z_i)}{M_{ {g}}}\right|M_{ {g}} = m_{ {g}}\right] =0.
\end{equation}

If $m_{ {g}} \geq 1$, then
\begin{align}
\twocol{& }\EE\left[\left.\frac{\sum_{i = 1}^{M_g} \gtg{L}(z_i)}{M_{ {g}}}\right|M_{ {g}} = m_{ {g}} \right] \twocol{\\}
= & \sum_{i = 1}^{m_g} \frac{\EE\left[\left. \gtg{L}(z_i)\right|M_{ {g}} = m_{ {g}}\right]}{m_{ {g}}} \\
= & \sum_{i = 1}^{m_g} \frac{\EE\left[\left. \gtg{L}(z)\right| G = g\right]}{m_{ {g}}}\\
= & \EE\left[\left. \gtg{L}(z)\right| G = g\right]
\end{align}

Combining the above two equations yields
\begin{equation*}
\left.\EE\left[\frac{\sum_{i = 1}^{M_g} \gtg{L}(z_i)}{M_{ {g}}}\right|M_{ {g}} = m_g \right] = \EE\left[\left. \gtg{L}(z)\right| G = g\right] \mathbb{I}[M_{ {g}} \geq 1].
\end{equation*}
Taking the expectation on both sides w.r.t. $M_{{g}}$ yields the lemma.
\end{proof}

\begin{lemma} [Computing Variance of First Order Estimation] Define $\frac{\sum_{i = 1}^{M_g} \gtg{L}(z_i)}{M_g} = 0$ for $M_g = 0$ and recall that $z = (x, g, y)$. 
If $L: \cZ \rightarrow \{0,1\}$, then
\begin{align*}
    \text{Var} \left[ \frac{\sum_{i = 1}^{M_g} \gtg{L}(z_i)}{M_g} \right] \leq 2\Pr(M_g \geq 1)
\end{align*}

\label{lem::apx_var-first-order}
\end{lemma}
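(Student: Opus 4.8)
The plan is to control the variance through the second moment and then condition on the random sample count $M_g$. Write $S \triangleq \frac{\sum_{i=1}^{M_g} L(z_i)}{M_g}$, adopting the stated convention $S = 0$ when $M_g = 0$. Since $\mathrm{Var}(S) \le \EE[S^2]$, it suffices to upper bound the second moment, which I would do via the tower rule $\EE[S^2] = \EE\!\left[\EE[S^2 \mid M_g]\right]$.

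The key computation is the conditional second moment. The $M_g = 0$ term contributes nothing by the convention $S=0$. For $M_g = m_g \ge 1$, the samples $z_1,\dots,z_{m_g}$ are i.i.d.\ from $P(\,\cdot \mid G=g)$ (this is exactly the structure produced by weighted sampling, where conditioning on the count leaves the drawn points i.i.d.). Setting $p \triangleq \EE[L(z)\mid G=g]$ and using that $L \in \{0,1\}$ gives $L(z_i)^2 = L(z_i)$, I would expand $\big(\sum_i L(z_i)\big)^2 = \sum_i L(z_i) + \sum_{i\neq j} L(z_i)L(z_j)$ and take expectations to obtain $\EE[S^2 \mid M_g = m_g] = \frac{p(1-p)}{m_g} + p^2$.

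Finally I would bound this conditional moment crudely by $2$: since $m_g \ge 1$ and $p \in [0,1]$ we have $\frac{p(1-p)}{m_g} \le 1$ and $p^2 \le 1$. Taking expectation over $M_g$, only terms with $M_g \ge 1$ survive, so $\EE[S^2] \le 2\sum_{m_g \ge 1}\Pr(M_g = m_g) = 2\Pr(M_g \ge 1)$, and hence $\mathrm{Var}(S) \le 2\Pr(M_g \ge 1)$, which is the claim. (In fact the sharper bound $\frac{p(1-p)}{m_g} + p^2 \le p \le 1$ would even yield $\Pr(M_g \ge 1)$, but the factor of $2$ is all that is needed downstream.)

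There is no substantial obstacle here; this is a routine second-moment estimate. The only points requiring care are the degenerate $M_g = 0$ case, which is neutralized by the stated convention, and the fact that conditionally on $M_g$ the drawn points are i.i.d., which is precisely guaranteed by the sampling model of Definition~\ref{def:weightedsampling}. Recognizing $L^2 = L$ for binary $L$ is what makes the conditional moment collapse to the clean form $\frac{p(1-p)}{m_g}+p^2$.
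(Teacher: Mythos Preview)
Your argument is correct and in fact cleaner than the paper's. The paper proceeds via the law of total variance,
\[
\text{Var}(S) = \EE\bigl[\text{Var}(S\mid M_g)\bigr] + \text{Var}\bigl(\EE[S\mid M_g]\bigr),
\]
bounds the second term by $\Pr(M_g\ge 1)$ using the earlier lemma $\EE[S\mid M_g]=\EE[L(z)\mid G=g]\,\mathbb{I}[M_g\ge 1]$, and bounds the first term by $\tfrac14\Pr(M_g\ge 1)$ via Popoviciu's inequality on the conditional variance, arriving at $\tfrac54\Pr(M_g\ge 1)\le 2\Pr(M_g\ge 1)$. Your route instead drops the mean entirely, $\text{Var}(S)\le \EE[S^2]$, and computes $\EE[S^2\mid M_g=m_g]$ explicitly using $L^2=L$; this avoids Popoviciu and, as you note, actually yields the sharper constant $1$ rather than $5/4$. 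What the paper's decomposition buys is that it never needs the exact conditional second moment (Popoviciu is cruder but model-free), whereas your computation exploits the binary structure of $L$ more directly.
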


\begin{proof}
    By the law of total variance,
\begin{align}
   \twocol{\quad &=} \text{Var}\left[\frac{\sum_{i = 1}^{M_g} \gtg{L}(z_i)}{M_g} \right] \twocol{\\
    &} =\EE\left[ \text{Var} \left[ \left.  \frac{\sum_{i = 1}^{M_g} \gtg{L}(z_i)}{M_g} \right| M_{ {g}} \right] \right] \twocol{\\
    &} + \text{Var}\left[\left.\EE\left[\frac{\sum_{i = 1}^{M_g} \gtg{L}(z_i)}{M_g} \right| M_{ {g}} \right] \right].  
   \label{eq:var00}
\end{align}

We first compute the second term in the RHS of the above equation ~\eqref{eq:var00}.
By Lemma \ref{lem::apx_avg-first-order} we have that:
\begin{align}
\twocol{\nonumber
    &}
    \text{Var}\left[\left.\EE\left[\frac{\sum_{i = 1}^{M_g} \gtg{L}(z_i)}{M_g} \right| M_{ {g}} \right] \right] \twocol{\\}
    &= \text{Var}\left[  \EE\left[\left. \gtg{L}(z)\right| G = g\right] \mathbb{I}[M_{ {g}} \geq 1] \right] \\
    &=  \EE\left[\left. \gtg{L}(z)\right| G = g\right]^2\text{Var}\left[ \mathbb{I}[M_{ {g}} \geq 1] \right] \\
    & \leq  \EE\left[\left. \gtg{L}(z)\right| G = g\right]^2\Pr\left(M_{ {g}} \geq 1\right)\\
    \label{eq:var01}
    & \leq  \Pr\left( M_{ {g}} \geq 1 \right)
\end{align}

Now, let's bound the first term in the first term in the RHS of ~\eqref{eq:var00}. Let's start by bounding $ \text{Var} \left[ \left.  \frac{\sum_{i = 1}^{M_g} \gtg{L}(z_i)}{M_g} \right| M_{ {g}} = m_g \right]$.

If $m_g = 0$ then:
\begin{equation*}
     \text{Var} \left[ \left.  \frac{\sum_{i = 1}^{M_g} \gtg{L}(z_i)}{M_g} \right| M_{ {g}} \right] = 0
\end{equation*}

If $m_g \geq 1$, then by Popoviciu's inequality on variances:

\begin{align*}
  \twocol{  &} \text{Var} \left[ \left.  \frac{\sum_{i = 1}^{M_g} \gtg{L}(z_i)}{M_g} \right| M_{ {g}} \right] \twocol{\\}
    & \leq \frac{1}{4} \Bigg( \max_{z_i} \left( \frac{\sum_{i = 1}^{M_g} \gtg{L}(z_i)}{M_g} \right)  - \min_{z_i} \left( \frac{\sum_{i = 1}^{M_g} \gtg{L}(z_i)}{M_g} \right) \Bigg)^2 \\
    & \leq \frac{1}{4}( 0 - (-1)1)^2 = \frac{1}{4},
\end{align*}
where the maximum is achieved when all $z_i$ are such that $\gtg{L}(z_i) = 1$ and the minimum is achieved when $\gtg{L}(z_i) = 0$ --- note that the bound holds when $M_g \neq 0$.

By the two last equations, we conclude that:
\begin{align}
    & \text{Var} \left[ \left.  \frac{\sum_{i = 1}^{M_g} \gtg{L}(z_i)}{M_g} \right| M_{ {g}} = m_g\right] \leq \frac{1}{4} \mathbb{I}[m_{ {g}} \geq 1]
\end{align}

and by taking expectations on both sides:
\begin{align}
    \EE\left[\text{Var} \left[ \left.  \frac{\sum_{i = 1}^{M_g} \gtg{L}(z_i)}{M_g} \right| M_{ {g}} = m_g\right]\right] \leq \frac{1}{4} \Pr(m_{ {g}} \geq 1)
    \label{eq:var02}
\end{align}

Combining equation \ref{eq:var00}, \ref{eq:var01}, and \ref{eq:var02} we conclude that:
\begin{align}
    \text{Var}\left[\frac{\sum_{i = 1}^{M_g} \gtg{L}(z_i)}{M_g} \right] \leq \left(1 + \frac{1}{4}\right) \Pr(m_{ {g}} \geq 1) \leq 2 \Pr(m_{ {g}} \geq 1)
\end{align}

\end{proof}

\begin{lemma}[Computing Average of Second Order Estimation]  Define $\frac{\sum_{i = 1}^{M_g} L(z_i)}{M_g}\frac{\sum_{i = 1}^{M_g} L(z_i)-1}{M_g-1} = 0$ if $M_g  = 0, 1$, and recall that $z = (x, g, y)$. If $L: \cZ \rightarrow \{0,1\}$, then
\begin{align*}
    \twocol{\quad&}\EE \left[ \frac{\sum_{i = 1}^{M_g} L(z_i)}{M_g}\frac{\sum_{i = 1}^{M_g} L(z_i)-1}{M_g-1}\right] \twocol{\\} 
    = & \EE [L(z) \mid G = g]^2 \Pr(M_g \geq 2)
\end{align*}
\label{lem::apx_avg-second-order}
\end{lemma}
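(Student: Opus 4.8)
The plan is to mirror the conditioning argument used in Lemma~\ref{lem::apx_avg-first-order}. Writing $p \triangleq \EE[L(z) \mid G = g]$ and $S \triangleq \sum_{i=1}^{M_g} L(z_i)$ for the number of sampled points in group $g$ with $L = 1$, the statistic of interest is $\frac{S(S-1)}{M_g(M_g-1)}$ whenever $M_g \geq 2$ and zero otherwise. First I would apply the law of total expectation by conditioning on $M_g = m_g$, which separates the computation into the two boundary cases $m_g \in \{0,1\}$ --- where the statistic is zero by definition and thus contributes nothing --- and the main case $m_g \geq 2$.

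For $m_g \geq 2$, the key step is to recognize $\frac{S(S-1)}{m_g(m_g-1)}$ as an unbiased estimator of $p^2$. I would establish this through the combinatorial identity $S(S-1) = \sum_{i \neq j} L(z_i) L(z_j)$, where the sum ranges over ordered pairs of distinct indices. This identity holds because $L \in \{0,1\}$ forces $L(z_i)^2 = L(z_i)$, so that $S^2 = \sum_i L(z_i)^2 + \sum_{i \neq j} L(z_i)L(z_j) = S + \sum_{i \neq j} L(z_i)L(z_j)$. Taking conditional expectations and using that the $z_i$ are i.i.d. draws from $P(\cdot \mid G = g)$, each of the $m_g(m_g-1)$ terms satisfies $\EE[L(z_i)L(z_j) \mid M_g = m_g] = p^2$ by independence, which gives $\EE\bigl[\frac{S(S-1)}{m_g(m_g-1)} \mid M_g = m_g\bigr] = p^2$.

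Combining the cases, the conditional expectation equals $p^2 \, \mathbb{I}[M_g \geq 2]$, and taking the expectation over $M_g$ yields $\EE\bigl[\frac{S(S-1)}{M_g(M_g-1)}\bigr] = p^2 \Pr(M_g \geq 2) = \EE[L(z) \mid G = g]^2 \Pr(M_g \geq 2)$, as claimed. I do not anticipate any genuine obstacle here; the only points requiring care are the correct handling of the $m_g \in \{0,1\}$ boundary terms via the stated convention, and the elementary $\{0,1\}$-valued expansion of $S(S-1)$ into distinct ordered pairs that underlies the unbiasedness for $p^2$.
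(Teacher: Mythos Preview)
Your proposal is correct and follows essentially the same approach as the paper: both condition on $M_g = m_g$, handle $m_g \in \{0,1\}$ via the stated convention, and for $m_g \geq 2$ expand $S(S-1)$ using $L(z_i)^2 = L(z_i)$ together with i.i.d.\ sampling to obtain the conditional expectation $\EE[L(z)\mid G=g]^2$. The only cosmetic difference is that you invoke the identity $S(S-1) = \sum_{i \neq j} L(z_i)L(z_j)$ directly, whereas the paper writes $S^2 - S = \sum_i (L(z_i)^2 - L(z_i)) + \sum_{i\neq j} L(z_i)L(z_j)$ and then observes $\EE[L^2 - L\mid G=g]=0$; these are the same computation.
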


\begin{proof}
    By the conditional law of expectation, 
    \begin{align}
       \twocol{ \quad&}\EE \left[ \frac{\sum_{i = 1}^{M_g} L(z_i)}{M_g}\frac{\sum_{i = 1}^{M_g} L(z_i)-1}{M_g-1}\right] \twocol{\\ }
    =& \EE \left[ \EE \left[ \left. \frac{\sum_{i = 1}^{M_g} L(z_i)}{M_g}\frac{\sum_{i = 1}^{M_g} L(z_i)-1}{M_g-1} \right| M_g = m_g\right]\right] \\
    \end{align}

    If $m_g \in \{0, 1\}$ then: 
    \begin{equation}
        \EE \left[ \left. \frac{\sum_{i = 1}^{M_g} L(z_i)}{M_g}\frac{\sum_{i = 1}^{M_g} L(z_i)-1}{M_g-1} \right| M_g = m_g\right] = 0
        \label{eq:avg_estimator_first_eq}
    \end{equation}

    If $m_g \geq 2$ then: 
    \begin{align}
        \quad & \EE \left[ \left. \frac{\sum_{i = 1}^{M_g} L(z_i)}{M_g}\frac{\sum_{i = 1}^{M_g} L(z_i)-1}{M_g-1} \right| M_g = m_g\right] \\
        =&  \EE \left[ \left. \frac{\sum_{i = 1}^{m_g} L(z_i)}{m_g}\frac{\sum_{i = 1}^{m_g} L(z_i)-1}{m_g-1} \right| M_g = m_g\right] \\ 
        =&  \EE \left[ \left. \frac{ \left(\sum_{i = 1}^{m_g} L(z_i) \right)^2 - \sum_{i = 1}^{m_g} L(z_i)}{m_g(m_g-1)} \right| M_g = m_g\right] \\
        =&  \EE \left[ \left. \frac{ \sum_{i = 1}^{m_g} L(z_i)^2 + \sum_{i \ne j} L(z_i)L(z_j)  - \sum_{i = 1}^{m_g} L(z_i)}{m_g(m_g-1)} \right| M_g = m_g\right] \\
        =&   \frac{ m_g \EE \left[L(z)^2  - L(z)|G = g\right] +m_g(m_g - 1)  \EE \left[L(z)|G = g\right]^2  }{m_g(m_g-1)}  \\
        \label{eq:avg_estimator_second_eq}
        = &  \EE \left[L(z)|G = g\right]^2,
    \end{align}
    where the last inequality comes from the fact that $\EE \left[L(z)^2  - L(z)|G = g\right] = 0$ because $L(z) \in \{0, 1\}$ therefore $L(z)^2 - L(z) = 0$ for all $z$.

    Combining \eqref{eq:avg_estimator_first_eq} and \eqref{eq:avg_estimator_second_eq} yields
    
    \begin{align}
      \twocol{  \quad & }\EE \left[ \left. \frac{\sum_{i = 1}^{M_g} L(z_i)}{M_g}\frac{\sum_{i = 1}^{M_g} L(z_i)-1}{M_g-1} \right| M_g = m_g\right]\twocol{ \\}
        = & \EE \left[L(z)|G = g\right]^2 \mathbb{I}[M_{ {g}} \geq 2].
    \end{align}
    Taking the expectation on both sides w.r.t. $M_{{g}}$ give us the lemma.
\end{proof}

\begin{lemma}[Computing Variance of Second Order Estimation]  Define $\frac{\sum_{i = 1}^{M_g} L(z_i)}{M_g}\frac{\sum_{i = 1}^{M_g} L(z_i)-1}{M_g-1} = 0$ if $M_g  = 0, 1$, and recall that $z = (x, g, y)$. If $L: \cZ \rightarrow \{0, 1\}$, then
    \begin{align*}
    \text{Var} \left[ \frac{\sum_{i = 1}^{M_g} L(z_i)}{M_g}\frac{\sum_{i = 1}^{M_g} L(z_i)-1}{M_g-1}\right] \leq & 2 \Pr(M_g \geq 2)
    \end{align*}
\label{lem::apx_var-second-order}
\end{lemma}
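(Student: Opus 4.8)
The plan is to mirror exactly the law-of-total-variance argument used for the first-order estimator. Writing $X \triangleq \frac{\sum_{i=1}^{M_g} L(z_i)}{M_g}\frac{\sum_{i=1}^{M_g} L(z_i)-1}{M_g-1}$, I would start from the decomposition
\begin{equation*}
\text{Var}[X] = \EE\left[\text{Var}[X \mid M_g]\right] + \text{Var}\left[\EE[X \mid M_g]\right]
\end{equation*}
and bound each of the two terms against $\Pr(M_g \geq 2)$ separately, just as was done for the first-order estimator.

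For the outer-variance term, I would invoke the preceding lemma on the mean of the second-order estimator (Computing Average of Second Order Estimation), which already gives $\EE[X \mid M_g] = \EE[L(z) \mid G=g]^2\, \mathbb{I}[M_g \geq 2]$. Since $\EE[L(z)\mid G=g]^2$ is a deterministic scalar in $[0,1]$, this term equals $\EE[L(z)\mid G=g]^4\,\text{Var}\left[\mathbb{I}[M_g \geq 2]\right]$, and using $\text{Var}[\mathbb{I}[M_g \geq 2]] \leq \Pr(M_g \geq 2)$ together with $\EE[L(z)\mid G=g]^4 \leq 1$ bounds it by $\Pr(M_g \geq 2)$.

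For the inner-variance term, I would condition on $M_g = m_g$. When $m_g \in \{0,1\}$ the estimator is $0$ by definition and its conditional variance vanishes. When $m_g \geq 2$, the key observation is that, writing $S = \sum_{i=1}^{m_g} L(z_i) \in \{0,\dots,m_g\}$ (using $L \in \{0,1\}$), the estimator equals $\frac{S(S-1)}{m_g(m_g-1)}$, which lies in $[0,1]$: it is $0$ for $S \in \{0,1\}$ and rises to exactly $1$ at $S = m_g$. Popoviciu's inequality on variances then gives $\text{Var}[X \mid M_g = m_g] \leq \frac{1}{4}(1-0)^2 = \frac{1}{4}$ whenever $m_g \geq 2$, i.e. $\text{Var}[X \mid M_g] \leq \frac{1}{4}\mathbb{I}[M_g \geq 2]$; taking the expectation over $M_g$ yields $\EE[\text{Var}[X \mid M_g]] \leq \frac{1}{4}\Pr(M_g \geq 2)$. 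Adding the two bounds produces $\text{Var}[X] \leq \left(1 + \frac{1}{4}\right)\Pr(M_g \geq 2) = \frac{5}{4}\Pr(M_g \geq 2) \leq 2\Pr(M_g \geq 2)$, as claimed.

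I expect the only delicate step to be verifying that the second-order estimator stays inside $[0,1]$ so that Popoviciu's inequality applies with range length $1$; this is precisely where the binary assumption $L \in \{0,1\}$ is essential, exactly as it was when computing the conditional mean. Everything else is a routine transcription of the first-order variance proof.
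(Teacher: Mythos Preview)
Your proposal is correct and follows essentially the same approach as the paper: law of total variance, bound the outer-variance term via the preceding mean lemma and $\text{Var}[\mathbb{I}[M_g\geq 2]]\leq \Pr(M_g\geq 2)$, bound the inner-variance term via Popoviciu's inequality with range $[0,1]$, and combine to obtain $\frac{5}{4}\Pr(M_g\geq 2)\leq 2\Pr(M_g\geq 2)$. Your justification that $S(S-1)/(m_g(m_g-1))\in[0,1]$ is in fact a bit more explicit than the paper's, which simply asserts the range bound when applying Popoviciu.
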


\begin{proof}
    By the law of total variance,
\begin{align}
 \twocol{   \nonumber
   \quad &=} \text{Var}\left[\frac{\sum_{i = 1}^{M_g} L(z_i)}{M_g}\frac{\sum_{i = 1}^{M_g} L(z_i)-1}{M_g-1}\right] \twocol{\\}  
    & =\EE\left[ \text{Var} \left[ \left.  \frac{\sum_{i = 1}^{M_g} L(z_i)}{M_g}\frac{\sum_{i = 1}^{M_g} L(z_i)-1}{M_g-1} \right| M_{ {g}} \right] \right]  \nonumber \\
    & + \text{Var}\left[\left.\EE\left[\frac{\sum_{i = 1}^{M_g} L(z_i)}{M_g}\frac{\sum_{i = 1}^{M_g} L(z_i)-1}{M_g-1} \right| M_{ {g}} \right] \right].
     \label{eq:bdvar-1}
\end{align}

We first compute the second term in the RHS of the above equation ~\eqref{eq:bdvar-1}.
By Lemma \ref{lem::apx_avg-second-order} we have that:
\begin{align}
\nonumber
    &\text{Var}\left[\left.\EE\left[\frac{\sum_{i = 1}^{M_g} L(z_i)}{M_g}\frac{\sum_{i = 1}^{M_g} L(z_i)-1}{M_g-1} \right| M_{ {g}} \right] \right] \\
    \label{eq:bdVar0}
    &= \text{Var}\Bigg[ \Bigg( \frac{\EE \left[L(z)^2  - L(z)|G = g\right]}{(m_g - 1)} + \EE \left[L(z)|G = g\right]^2  \Bigg)\mathbb{I}[m_{ {g}} \geq 2] \Bigg] 
\end{align}

Note that $\EE \left[L(z)^2  - L(z)|G = g\right] = 0$.

Now, we can bound Eq. \ref{eq:bdVar0} by
\begin{align}
\nonumber
       &= \text{Var}\Bigg[ \Bigg( \frac{\EE \left[L(z)^2  - L(z)|G = g\right]}{(m_g - 1)}          \twocol{\\
    &} + \EE \left[L(z)|G = g\right]^2  \Bigg)\mathbb{I}[m_{ {g}} \geq 2] \Bigg] \\
    &  = \text{Var}\left[ \left(  \EE \left[L(z)|G = g\right]^2 \right)\mathbb{I}[m_{ {g}} \geq 2]\right]\\
    & = \EE\left[ \left( \EE \left[L(z)|G = g\right]^2 \right)^2\mathbb{I}[m_{ {g}} \geq 2]\right] \twocol{ \\
    \nonumber
    &} - \EE\left[ \left( \EE \left[L(z)|G = g\right]^2 \right)\mathbb{I}[m_{ {g}} \geq 2]\right]^2\\
    & \leq \EE\left[ \left( \EE \left[L(z)|G = g\right]^2 \right)^2\mathbb{I}[m_{ {g}} \geq 2]\right]\\
    \nonumber
     & \leq \norm{\left( \EE \left[L(z)|G = g\right]^2 \right)^2}_{\infty}\EE\left[\mathbb{I}[m_{ {g}} \geq 2]\right]\\
     \label{eq:bdVar3}
     & \leq \Pr(m_{ {g}} \geq 2)
\end{align}
Where the last inequality comes from the fact that $L \in \{0, 1\}$.

Now, let's bound the first term in the first term in the RHS of ~\eqref{eq:bdvar-1}. Let's start by bounding $ \text{Var} \left[ \left.  \frac{\sum_{i = 1}^{M_g} L(z_i)}{M_g}\frac{\sum_{i = 1}^{M_g} L(z_i)-1}{M_g-1} \right| M_{ {g}} = m_g \right]$.

If $m_g \in \{0, 1\}$ then:
\begin{equation*}
     \text{Var} \left[ \left.  \frac{\sum_{i = 1}^{M_g} L(z_i)}{M_g}\frac{\sum_{i = 1}^{M_g} L(z_i)-1}{M_g-1} \right| M_{ {g}} = m_g\right] = 0
\end{equation*}

If $m_g \geq 2$ then, by Popoviciu's inequality on variances:
\begin{align*}
    \twocol{&} \text{Var} \left[ \left.  \frac{\sum_{i = 1}^{M_g} L(z_i)}{M_g}\frac{\sum_{i = 1}^{M_g} L(z_i)-1}{M_g-1} \right| M_{ {g}} = m_g\right]\twocol{\\}
    & \leq \frac{1}{4} \Bigg( \max_{z_i} \left(\frac{\sum_{i = 1}^{M_g} L(z_i)}{M_g} \frac{\sum_{i = 1}^{M_g} L(z_i)-1}{M_g-1}\right) \\
    & - \min_{z_i} \left(\frac{\sum_{i = 1}^{M_g} L(z_i)}{M_g} \frac{\sum_{i = 1}^{M_g} L(z_i)-1}{M_g-1}\right) \Bigg)^2 \\
    & \leq \frac{1}{4}( 1 - 0)^2 = \frac{1}{4}
\end{align*}

By the two last equations, we conclude that:
\begin{align}
    & \text{Var} \left[ \left.  \frac{\sum_{i = 1}^{M_g} L(z_i)}{M_g}\frac{\sum_{i = 1}^{M_g} L(z_i)-1}{M_g-1} \right| M_{ {g}} = m_g\right] \twocol{\\
    &} \leq \frac{1}{4} \mathbb{I}[m_{ {g}} \geq 2]
\end{align}
By taking averages on both sides, we get:
\begin{align}
    \twocol{\nonumber
&} \EE\left[ \text{Var} \left[ \left.  \frac{\sum_{i = 1}^{M_g} L(z_i)}{M_g}\frac{\sum_{i = 1}^{M_g} L(z_i)-1}{M_g-1} \right| M_{ {g}} \right] \right] \twocol{\\
    & }\leq \frac{1}{4} \Pr(M_{ {g}} \geq 2).    \label{eq::bdVar2}
\end{align}

Bounding the RHS of \eqref{eq:bdvar-1} by using the result in \eqref{eq:bdVar3} and \eqref{eq::bdVar2} we conclude that
\begin{align}
 \twocol{  &} \text{Var}\left[\frac{\sum_{i = 1}^{M_g} L(z_i)}{M_g}\frac{\sum_{i = 1}^{M_g} L(z_i)-1}{M_g-1}\right] \twocol{\\}
    & \leq \frac{5}{4}\Pr(M_{ {g}} \geq 2) \\
    & \leq 2\Pr(M_{ {g}} \geq 2).
\end{align}
\end{proof}

\begin{lemma}
Denote $E_g = \EE \left[L(z)|G = g\right]$, if 
$L: \cZ \rightarrow \{0, 1\}$, then
\begin{align}
\sum_{g \in \gp} w_g E_g^2 - \averagegtg{L}^2  \geq (1 - \alpha) (\texttt{F\_CVaR}_{\alpha}(w))^2. \label{ps-bound-quant}
\end{align}
\label{lem:apx_CVaRUpperBound}
\end{lemma}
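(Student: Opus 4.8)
The plan is to first recognize the left-hand side as a $\bw$-weighted variance, and then control it by a single Jensen-type (equivalently Cauchy--Schwarz) inequality restricted to the CVaR-maximizing set. Since $\bw \in \Delta^{|\gp|}$ and $\averagegtg{L} = \sum_{g \in \gp} w_g E_g$, expanding the square and using $\sum_{g \in \gp} w_g = 1$ gives
\[
\sum_{g \in \gp} w_g E_g^2 - \averagegtg{L}^2 = \sum_{g \in \gp} w_g \left(E_g - \averagegtg{L}\right)^2 = \sum_{g \in \gp} w_g \,\Delta(g)^2 ,
\]
where the last equality is just the definition $\Delta(g) = |E_g - \averagegtg{L}|$. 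This reduces the claim to proving $\sum_{g \in \gp} w_g \Delta(g)^2 \geq (1-\alpha)\,(\texttt{F\_CVaR}_{\alpha}(\bw))^2$.

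Next I would restrict attention to the optimal set $Q_{\alpha}$ from Definition \ref{def:CVaRFairness}, writing $W \triangleq \sum_{g \in Q_{\alpha}} w_g \leq 1-\alpha$. Dropping the nonnegative terms for $g \notin Q_{\alpha}$ gives $\sum_{g \in \gp} w_g \Delta(g)^2 \geq \sum_{g \in Q_{\alpha}} w_g \Delta(g)^2$. Applying Cauchy--Schwarz (or Jensen's inequality with the convex map $x \mapsto x^2$ on the normalized weights $w_g/W$ over $Q_{\alpha}$) yields
\[
\left(\sum_{g \in Q_{\alpha}} w_g \Delta(g)\right)^2 \leq \left(\sum_{g \in Q_{\alpha}} w_g\right)\left(\sum_{g \in Q_{\alpha}} w_g \Delta(g)^2\right) = W \sum_{g \in Q_{\alpha}} w_g \Delta(g)^2 ,
\]
so that $\sum_{g \in Q_{\alpha}} w_g \Delta(g)^2 \geq \frac{1}{W}\left(\sum_{g \in Q_{\alpha}} w_g \Delta(g)\right)^2$.

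Finally, using $W \leq 1-\alpha$ so that $1/W \geq 1/(1-\alpha)$, and recalling that $\texttt{F\_CVaR}_{\alpha}(\bw) = \frac{1}{1-\alpha}\sum_{g \in Q_{\alpha}} w_g \Delta(g)$, I would chain the inequalities to obtain
\[
\sum_{g \in \gp} w_g \Delta(g)^2 \geq \frac{1}{1-\alpha}\left(\sum_{g \in Q_{\alpha}} w_g \Delta(g)\right)^2 = (1-\alpha)\,(\texttt{F\_CVaR}_{\alpha}(\bw))^2 ,
\]
which is exactly \eqref{ps-bound-quant}.

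The argument is essentially routine, so there is no deep obstacle; the one place to be careful is the direction of the step $1/W \geq 1/(1-\alpha)$, which relies crucially on the budget constraint $W \leq 1-\alpha$ being an upper bound combined with the squared term being nonnegative. I would also sanity-check the degenerate case $W = 0$ (where $\Delta(g) = 0$ for all $g \in Q_{\alpha}$, so both sides vanish and the bound holds trivially), to confirm no division-by-zero issue arises in the Cauchy--Schwarz step.
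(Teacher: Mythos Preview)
Your proof is correct and follows essentially the same route as the paper: both arguments use the variance identity $\sum_{g} w_g E_g^2 - \averagegtg{L}^2 = \sum_g w_g \Delta(g)^2$, restrict to the optimal set $Q_{\alpha}$, and apply Cauchy--Schwarz together with the budget constraint $\sum_{g\in Q_{\alpha}} w_g \leq 1-\alpha$. The only cosmetic difference is direction: the paper starts from $(1-\alpha)(\texttt{F\_CVaR}_{\alpha})^2 + \averagegtg{L}^2$ and bounds upward to $\sum_g w_g E_g^2$, whereas you start from the variance and bound downward.
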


\begin{proof}
The proof follows from algebraic manipulation as follows.
\begin{align}
\twocol{\nonumber
     &} (1 - \alpha) \left(\texttt{F\_CVaR}_{\alpha}(w)\right)^2  + \averagegtg{L}^2 \twocol{\\
     \nonumber}
     = & (1 - \alpha)  \left(\max_{\sum w_g \leq 1-\alpha} \sum \frac{ w_g}{(1 - \alpha)} \Delta(g)\right)^2  + \averagegtg{L}^2\\
     \nonumber
     \leq & (1 - \alpha)  \left(\sum_{g \in Q_{\alpha}} \frac{\sqrt{w_g}}{\sqrt{(1 - \alpha)}} \frac{\sqrt{w_g}\Delta(g)}{\sqrt{(1 - \alpha)}}\right)^2  + \averagegtg{L}^2\\
     \label{eq:cauchy}
     \leq & (1 - \alpha)  \left(\sum_{g \in Q_{\alpha}} \frac{w_g}{{(1 - \alpha)}}\right) \left( \sum_{g \in Q_{\alpha}} \frac{{w_g}\Delta(g)^2}{{(1 - \alpha)}}\right)  + \averagegtg{L}^2\\
     \nonumber
     \leq &  \sum_{g \in Q_{\alpha}} {{w_g}\Delta(g)^2} + \averagegtg{L}^2\\
     \nonumber
     = & \sum_{g \in Q_{\alpha}} {{w_g}(E_g - \averagegtg{L})^2} + \averagegtg{L}^2\\
     \nonumber
     \leq & \sum_{g \in \gp} {{w_g}(E_g - \averagegtg{L})^2} + \averagegtg{L}^2\\
     \nonumber
     = & \sum_{g \in \gp} {w_g}E_g^2 - 2\averagegtg{L}\left(\sum_{g \in \gp} {w_g}E_g - \averagegtg{L}\right)\\
     \nonumber
     = & \sum_{g \in \gp} {w_g}E_g^2,
\end{align}
where the inequality in \ref{eq:cauchy} follows by invoking Cauchy-Schwartz inequality.
\end{proof}

\begin{lemma}
\label{lem:main_upper_general} If $L: \cZ \rightarrow \{0, 1\}$ is the quality of service function and $\averagegtg{L}$ be the average quality of service and $w_g \leq 1 - \alpha$ for all $g \in \gp$, then

(a) If $\texttt{F\_CVaR}_{\alpha}(w) = 0$, then 
\[
 \Pr \left(\widehat{F} > (1 - \alpha) \epsilon^2/2 \right)  \leq  \sum_g \frac{64 w_g^2}{(1 - \alpha)^2 \epsilon^4P\left[M_{ {g}} \geq 2\right]} + \frac{128}{(1 - \alpha)^2 \epsilon^4 } \sum_g \frac{w^2_g}{P\left[M_{ {g}} \geq 1\right]}.
\]

Similarly, 

(b) if $\texttt{F\_CVaR}_{\alpha}(w) \geq  \epsilon$, then 
\[
 \Pr\left(\widehat{F} < (1 - \alpha) \epsilon^2/2 \right)  \leq  \sum_g \frac{64 w_g^2}{(1 - \alpha)^2 \epsilon^4P\left[M_{ {g}} \geq 2\right]} + \frac{128}{(1 - \alpha)^2 \epsilon^4 } \sum_g \frac{w^2_g}{P\left[M_{ {g}} \geq 1\right]}.
\]
\end{lemma}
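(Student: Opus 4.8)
The plan is to track the estimator through its two pieces $\widehat{F}_1$ and $\widehat{F}_2$ separately, rather than computing the variance of $\widehat{F}=\widehat{F}_1-\widehat{F}_2^2$ directly. First I would record the first moments. By Lemma~\ref{lem::apx_avg-second-order} the factor $1/\Pr[M_g\ge 2]$ exactly cancels the $\Pr[M_g\ge 2]$ in the expectation of each second-order term, so $\EE[\widehat{F}_1]=\sum_g w_g E_g^2$ with $E_g=\EE_P[L(Z)\mid G=g]$; by Lemma~\ref{lem::apx_avg-first-order}, $\EE[\widehat{F}_2]=\averagegtg{L}$. Using $\sum_g w_g E_g=\averagegtg{L}$, the quantity the test cares about is $V\triangleq\sum_g w_g E_g^2-\averagegtg{L}^2=\sum_g w_g\Delta(g)^2\ge 0$.

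Next I would bound the two variances. Since under the sampling model the counts $M_g$ (and the samples inside distinct groups) are independent across $g$, each of $\widehat{F}_1$ and $\widehat{F}_2$ is a sum of independent terms, so its variance splits into per-group variances. Weighting the bounds $\mathrm{Var}(A_g)\le 2\Pr[M_g\ge 2]$ and $\mathrm{Var}(B_g)\le 2\Pr[M_g\ge 1]$ from Lemma~\ref{lem::apx_var-second-order} and Lemma~\ref{lem::apx_var-first-order} by $w_g/\Pr[M_g\ge 2]$ and $w_g/\Pr[M_g\ge 1]$ respectively gives $\mathrm{Var}(\widehat{F}_1)\le 2S_1$ and $\mathrm{Var}(\widehat{F}_2)\le 2S_2$, where $S_1\triangleq\sum_g w_g^2/\Pr[M_g\ge 2]$ and $S_2\triangleq\sum_g w_g^2/\Pr[M_g\ge 1]$ are precisely the two sums in the claimed bound.

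The key step is a two-sided confinement on a good event. Let $t\triangleq(1-\alpha)\epsilon^2/2$ be the threshold and set $a\triangleq t/(2\sqrt2)$, $b\triangleq t/4$, with good events $G_1=\{|\widehat{F}_1-\EE\widehat{F}_1|\le a\}$ and $G_2=\{|\widehat{F}_2-\averagegtg{L}|\le b\}$. Chebyshev gives $\Pr(G_1^c)\le 2S_1/a^2=16S_1/t^2$ and $\Pr(G_2^c)\le 2S_2/b^2=32S_2/t^2$, and substituting $1/t^2=4/((1-\alpha)^2\epsilon^4)$ turns $\Pr(G_1^c)+\Pr(G_2^c)$ into exactly the right-hand side of the lemma. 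On $G_1\cap G_2$ I would use $\widehat{F}_2\ge 0$ and $\averagegtg{L}\in[0,1]$ to sandwich $\widehat{F}_2^2$: from $\widehat{F}_2\ge\averagegtg{L}-b$ one gets $\widehat{F}_2^2\ge\averagegtg{L}^2-2b$, and from $\widehat{F}_2\le\averagegtg{L}+b$ one gets $\widehat{F}_2^2\le\averagegtg{L}^2+2b+b^2$. In case (a), $\texttt{F\_CVaR}_{\alpha}(\bw)=0$ forces every $\Delta(g)=0$ (the weighted average over the worst $(1-\alpha)$-tail of the nonnegative $\Delta(g)$ vanishes), so $V=0$ and $\EE\widehat{F}_1=\averagegtg{L}^2$; hence on $G_1\cap G_2$, $\widehat{F}\le(\averagegtg{L}^2+a)-(\averagegtg{L}^2-2b)=a+2b<t$, giving $\{\widehat{F}>t\}\subseteq G_1^c\cup G_2^c$. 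In case (b), Lemma~\ref{lem:apx_CVaRUpperBound} gives $V\ge(1-\alpha)\epsilon^2=2t$, so on $G_1\cap G_2$, $\widehat{F}\ge(\EE\widehat{F}_1-a)-(\averagegtg{L}^2+2b+b^2)=V-a-2b-b^2\ge 2t-t=t$ (using $t\le 1/2$ to absorb $b^2$), giving $\{\widehat{F}<t\}\subseteq G_1^c\cup G_2^c$. A union bound finishes both cases with the same bound.

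The main obstacle, and the reason for the event-based route, is the term $\widehat{F}_2^2$: a direct Chebyshev bound on $\widehat{F}$ would require $\mathrm{Var}(\widehat{F}_2^2)$, i.e.\ fourth moments of $\widehat{F}_2$, and would also have to handle the bias $\EE[\widehat{F}_2^2]=\averagegtg{L}^2+\mathrm{Var}(\widehat{F}_2)$, so $\widehat{F}$ is a \emph{biased} estimator of $V$. Confining $\widehat{F}_1$ and $\widehat{F}_2$ individually sidesteps both issues, since on the good event the two $\averagegtg{L}^2$ contributions cancel and only the controlled slacks $a,b$ survive. The only other delicate point is the cross-group independence used in the variance split; this holds for the weighted and attribute-specific schemes (independent per-group counts), and at worst is replaced by negative association, which only strengthens the inequality.
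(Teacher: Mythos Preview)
Your proposal is correct and follows essentially the same approach as the paper: split $\widehat{F}$ into $\widehat{F}_1$ and $\widehat{F}_2$, invoke the auxiliary mean/variance lemmas, linearize the square $\widehat{F}_2^2$ using $\averagegtg{L}\in[0,1]$, and finish with Chebyshev plus a union bound. The only differences are cosmetic packaging---the paper uses one-sided deviations with thresholds $t/2$ and $t/4$ and the implication $a^2-b^2>c\Rightarrow b<a-c/2$, whereas you frame it via two-sided good events with thresholds $t/(2\sqrt{2})$ and $t/4$ and expand $(\averagegtg{L}\pm b)^2$ directly---and both routes land on the same constants $64$ and $128$.
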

\begin{proof}

Denote $E_g = \EE \left[L(z)|G = g\right]$.

Recall that the estimator is given by
\begin{align}
    \widehat{F} &= \sum_{ {g} \in \gp} \frac{w_g}{P\left[M_{ {s}} \geq 2\right]} \frac{\sum_{i = 1}^{M_g} L(z_i)}{M_g}\frac{(\sum_{i = 1}^{M_g} L(z_i)-1)}{M_g-1} - \left( \sum_{ {g} \in \gp} \frac{w_g}{P\left[M_{ {s}} \geq 1\right]} \frac{\sum_{j = 1}^{M_g} \gtg{L}(z_j)}{M_{ {g}}} \right)^2
\end{align}

We first show the result under the null hypothesis, \revision{i.e., when $\texttt{F\_CVaR}_{\alpha}(w) = 0$. Note that this condition implies that $\left(\sum_{g \in \gp} w_g E_g^2 - \averagegtg{L}^2 \right) = 0$, because as all $w_g \leq 1-\alpha$ we have that if $E_g \neq \averagegtg{L}$ for some $g \in \gp$, then $\Delta(g) > 0$ and $\texttt{F\_CVaR}_{\alpha}(w) \geq \frac{w_g \Delta(g)}{1 - \alpha} > 0$.
}
\begin{align}
&\Pr(\widehat{F} > (1 - \alpha) \epsilon^2/2 ) \\
& = \Pr\left(\widehat{F} - \left(\sum_{g \in \gp} w_g E_g^2 - \averagegtg{L}^2 \right) > (1 - \alpha) \epsilon^2/2 \right) \\
& \leq  \Pr\Bigg(\sum_{ {g} \in \gp} \frac{w_g \sum_{i = 1}^{M_g} L(z_i) (\sum_{i = 1}^{M_g} L(z_i)-1)}{P\left[M_{ {s}} \geq 2\right] M_g (M_g - 1)}  - \sum_{g \in \gp} w_g E_g^2> \frac{(1 - \alpha) \epsilon^2}{4} \Bigg) \nonumber\\
& + \Pr\left(\averagegtg{L}^2 - \left( \sum_{ {g} \in \gp} \frac{w_g}{P\left[M_{ {s}} \geq 1\right]} \frac{\sum_{j = 1}^{M_g} \gtg{L}(z_j)}{M_{ {g}}} \right)^2 > \frac{(1 - \alpha) \epsilon^2}{4}  \right) \\
& \leq  \Pr\Bigg(\sum_{ {g} \in \gp} \frac{w_g \sum_{i = 1}^{M_g} L(z_i) (\sum_{i = 1}^{M_g} L(z_i)-1)}{P\left[M_{ {s}} \geq 2\right] M_g (M_g - 1)} - \sum_{g \in \gp} w_g E_g^2> \frac{(1 - \alpha) \epsilon^2}{4} \Bigg) \nonumber\\
& + \Pr\left(\left( \sum_{ {g} \in \gp} \frac{w_g}{P\left[M_{ {s}} \geq 1\right]} \frac{\sum_{j = 1}^{M_g} \gtg{L}(z_j)}{M_{ {g}}}  \right) < \averagegtg{L} - \frac{(1 - \alpha) \epsilon^2}{8} \right) ,
\end{align}
\revision{
where the last inequality follows by observing that $a^2 - b^2 > c$ implies that $a^2 - c>  b^2$ which implies that $\sqrt{a^2 - c}>  b$ because $b$ is positive.
Then, we conclude by showing that
\begin{align}
     & \sqrt{a^2 - c} \leq a - \frac{c}{2}\\
   \iff &  a^2 - c \leq \left(a - \frac{c}{2}\right)^2 \\
  \iff   & a^2 - c \leq a^2 - ac + \frac{c^4}{4} \\
  \iff   & - c \leq - ac + \frac{c^4}{4} \\
  \iff   & c(a - 1)  \leq \frac{c^4}{4},
\end{align}
the last inequality holds because $a \leq 1$ and $c > 0$.
}

Next, we bound each of the terms on the right-hand side of the above equation using Chebyshev's inequality. 

First, define $\widehat{E}_1$ such that:
\begin{equation}
    \widehat{E}_1 = \sum_{ {g} \in \gp} \frac{w_g \sum_{i = 1}^{M_g} L(z_i) (\sum_{i = 1}^{M_g} L(z_i)-1)}{P\left[M_{ {g}} \geq 2\right] M_g (M_g - 1)} - \sum_{g \in \gp} w_g E_g^2
\end{equation}

By Lemma~\ref{lem::apx_avg-second-order},
\[
\EE[\widehat{E}_1] = 0,
\]

and by Lemma~\ref{lem::apx_var-second-order}
\begin{align*}
\text{Var}(\widehat{E}_1)
& \leq \sum_g  \frac{2 w_g^2}{P\left[M_{ {g}} \geq 2\right]}.
\end{align*}
Hence by Chebyshev's inequality,
\begin{align}
  \twocol{&} \Pr\left(\widehat{E}_1 >  (1 - \alpha) \epsilon^2/4\right) \twocol{\\}
    & \leq \Pr\left(\widehat{E}_1 - \EE[\widehat{E}_1]>  (1 - \alpha) \epsilon^2/4\right)\\
    & \leq \Pr\left(\left| \widehat{E}_1 - \EE[\widehat{E}_1] \right|>  (1 - \alpha) \epsilon^2/4\right)\\
    & \leq \sum_g \frac{32 w_g^2}{(1 - \alpha)^2 \epsilon^4 P\left[M_{ {g}} \geq 2\right]}
\end{align}

Additionally, denote $\widehat{E}_2$ by:
\begin{equation}
    \widehat{E}_2 = \sum_{ {g} \in \gp} \frac{w_g}{P\left[M_{ {g}} \geq 1\right]} \frac{\sum_{j = 1}^{M_g} \gtg{L}(z_j)}{M_{ {g}}}  - \averagegtg{L}
\end{equation}
Similar to the previous argument, we conclude that:
\[
\EE[\widehat{E}_2] = 0,
\]
and by Lemma~\ref{lem::apx_var-first-order}
\begin{align*}
\text{Var}(\widehat{E}_1)
& \leq \sum_g  \frac{2w_g^2}{P\left[M_{ {g}} \geq 1\right]}.
\end{align*}

Hence, by applying Chebyshev's inequality,
\begin{align*}
    \twocol{&}\Pr\left(\widehat{E}_2 < - (1 - \alpha) \epsilon^2/8\right) \twocol{\\}
=&   \Pr\left(\EE[\widehat{E}_2] - \widehat{E}_2 > (1 - \alpha) \epsilon^2/8\right)\\
\leq &   \Pr\left(\left|\EE[\widehat{E}_2] - \widehat{E}_2 \right|> (1 - \alpha) \epsilon^2/8\right)\\
\leq & \frac{128}{(1 - \alpha)^2 \epsilon^4 } \sum_g \frac{w^2_g}{P\left[M_{ {g}} \geq 1\right]}
\end{align*}

Hence,
\begin{align*}
\Pr(\widehat{F} > (1 - \alpha) \epsilon^2/2 ) 
& \leq \sum_g \frac{32 w_g^2}{(1 - \alpha)^2 \epsilon^4P\left[M_{ {g}} \geq 2\right]} + \frac{128}{(1 - \alpha)^2 \epsilon^4 } \sum_g \frac{w^2_g}{P\left[M_{ {g}} \geq 1\right]},
\end{align*}
completing the proof.

Next, we show the result under the hypothesis $\hypothesis_1$, i.e., when $\texttt{F\_CVaR}_{\alpha} \geq \epsilon$.
We are interested in bounding:
\begin{align}
& \Pr(\widehat{F} < (1 - \alpha) \epsilon^2/2 )
\end{align}
From Lemma \ref{lem:apx_CVaRUpperBound}, we have that
\begin{align}
\nonumber
\twocol{&}\Pr(\widehat{F} < (1 - \alpha) \epsilon^2/2 )\twocol{\\
\nonumber}
& \leq \Pr\left(\widehat{F} - \left(\sum_{g \in \gp} w_g E_g^2 - \averagegtg{L}^2 \right) < - (1 - \alpha)\epsilon^2 + (1 - \alpha) \epsilon^2/2 \right) \nonumber\\
\nonumber
&\leq \Pr\left(\widehat{F} - \left(\sum_{g \in \gp} w_g E_g^2 - \averagegtg{L}^2 \right) < - (1 - \alpha)\epsilon^2/2 \right) \\
\label{eq:lem_8_eq_1}
&\leq \Pr\left(-\widehat{F} + \left(\sum_{g \in \gp} w_g E_g^2 - \averagegtg{L}^2 \right) >  (1 - \alpha)\epsilon^2/2 \right)
\end{align}

Using the bound in \eqref{eq:lem_8_eq_1}, and the same procedure we used in for bounding the probability of error when  $\texttt{F\_CVaR}_{\alpha} = 0$, we can show that under $\texttt{F\_CVaR}_{\alpha} \geq \epsilon$
\begin{align*}
\Pr(\widehat{F} > (1 - \alpha) \epsilon^2/2 ) 
& \leq \sum_g \frac{32 w_g^2}{(1 - \alpha)^2 \epsilon^4P\left[M_{ {g}} \geq 2\right]} + \frac{128}{(1 - \alpha)^2 \epsilon^4 } \sum_g \frac{w^2_g}{P\left[M_{ {g}} \geq 1\right]},
\end{align*}

\end{proof}

\WeightedUpperGeneral*

\begin{proof}
    Using Algorithm \ref{alg:hypothesis_test}, we have that:
    \begin{align}
        P_{\text{error}} &= \Pr\left( \left.\widehat{F} \geq \frac{\alpha \epsilon^2}{2} \right| \texttt{F\_CVaR}_{\alpha}(\bw) = 0 \right) \Pr\left( \texttt{F\_CVaR}_{\alpha}(\bw) = 0 \right) \\ 
        &+ \Pr\left( \left.\widehat{F} < \frac{\alpha \epsilon^2}{2} \right| \texttt{F\_CVaR}_{\alpha}(\bw) \geq \epsilon \right)\Pr\left( \texttt{F\_CVaR}_{\alpha}(\bw)\geq \epsilon\right)
        \label{eq:thm_1_bd_1}
    \end{align}
    
    then, by Lemma \ref{lem:main_upper_general}, we can bound the RHS \eqref{eq:thm_1_bd_1} and conclude that
    \begin{align}
        P_{\text{error}} \leq \sum_g \frac{32 w_g^2}{(1 - \alpha)^2 \epsilon^4P\left[M_{ {g}} \geq 2\right]} + \frac{128}{(1 - \alpha)^2 \epsilon^4 } \sum_g \frac{w^2_g}{P\left[M_{ {g}} \geq 1\right]}
        \label{eq:thm_1_bd_2}
    \end{align}

    By using weighted sampling we can lower bound the probabilities $P\left[M_{{g}} \geq 2\right]$ and $P\left[M_{{g}} \geq 1\right]$ as in Lemma \ref{lem:apx_weight_bounds}. 
    Using this lower bound, we can upper bound the RHS of \eqref{eq:thm_1_bd_2} and conclude that
    \begin{align}
        P_{\text{error}} &\leq \sum_g \frac{128 e w_g^2}{(1 - \alpha)^2 \epsilon^4n^2 (v_g)^2} + \frac{128e}{(1 - \alpha)^2 \epsilon^4 } \sum_g \frac{w^2_g}{n v_g}
\end{align}

Hence, we have the desired result
\begin{equation}
     P_{\text{error}}  =  O\left(\sum_g \frac{  w_g^2}{(1 - \alpha)^2 \epsilon^4n^2 (v_g)^2} + \frac{1}{(1 - \alpha)^2 \epsilon^4 } \sum_g \frac{w^2_g}{n v_g}\right).
\end{equation}
\end{proof}

\SamplecomplexityAlgWeighted*
\begin{proof}
The bound on $n$ for general $\bv$ and $\bw$ follows from the error probability in Theorem~\ref{thm:weighted_upper_general}. We now prove the result when $v_g \propto w^{2/3}_g$. From Theorem \ref{thm:weighted_upper_general} we have that 
    \begin{align}
        P_{\text{error}} &\leq \sum_g \frac{128 e w_g^2}{(1 - \alpha)^2 \epsilon^4n^2 (v_g)^2} + \frac{128e}{(1 - \alpha)^2 \epsilon^4 } \sum_g \frac{w^2_g}{n v_g}\\
 & = 128 e \frac{\left(\sum_g w_g^{2/3}\right)^3}{(1 - \alpha)^2 \epsilon^4n^2} + \frac{128e}{(1 - \alpha)^2 \epsilon^4 } \frac{ (\sum_g w_g^{2/3}) \sum_g w^{4/3}_g}{n}\\
 & \leq 128 e \frac{\left(\sum_g w_g^{2/3}\right)^3}{(1 - \alpha)^2 \epsilon^4n^2} + \frac{128e}{(1 - \alpha)^2 \epsilon^4 } \frac{ (\sum_g w_g^{2/3})}{n}
\end{align}

only need to ensure that
\begin{equation}
    \delta \geq 128 e \frac{\left(\sum_g w_g^{2/3}\right)^3}{(1 - \alpha)^2 \epsilon^4n^2} + \frac{128e}{(1 - \alpha)^2 \epsilon^4 } \frac{ (\sum_g w_g^{2/3})}{n}
\end{equation}

Therefore, it is only necessary to use $n$ samples to test for CvaR fairness reliably, where $n$ is such that
\begin{align}
    n &= O \left(\frac{ \left(\sum_g w_g^{2/3}\right)^{3/2}}{{\delta}\epsilon^2(1 - \alpha)} + \frac{ \left(\sum_g w_g^{2/3}\right)}{{\delta}\epsilon^4(1 - \alpha)^2}  \right) \\
    \iff n &= O\left( \frac{ 2^{\frac{3}{2} \log(\sum_g w_g^{2/3})} }{{\delta}\epsilon^2(1 - \alpha)} +  \frac{2^{\log(\sum_g w_g^{2/3})} }{{\delta}\epsilon^4(1 - \alpha)^2}
    \right)\\
    \iff n &= O\left( \frac{ 2^{\frac{1}{2} H_{2/3}(w)} }{{\delta}\epsilon^2(1 - \alpha)} +\frac{ 2^{\frac{1}{3} H_{2/3}(w)} }{{\delta}\epsilon^4(1 - \alpha)^2} \right)
\end{align}
Setting $\delta = 0.01$ yields the result.

\end{proof}

\AttUpperNew*

\begin{proof}
Let $\delta = 0.01$.
    As in the proof of Theorem \ref{thm:weighted_upper_general}, we have that
    \begin{align}
        P_{\text{error}} \leq \sum_g \frac{128 w_g^2}{(1 - \alpha)^2 \epsilon^4P\left[M_{ {g}} \geq 2\right]} + \frac{128}{(1 - \alpha)^2 \epsilon^4 } \sum_g \frac{w^2_g}{P\left[M_{ {g}} \geq 1\right]}
        \label{eq:thm_2_bd_1}
    \end{align}
    Using attribute specific sampling with $\gamma = \frac{n}{2}$ we conclude that $P\left[M_{ {g}} \geq 1\right] = P\left[M_{ {g}} \geq 2\right] = \min(1, \gamma w_g)$.
    Therefore, we can bound the probability of error in \eqref{eq:thm_2_bd_1} as:

    \begin{align}
        P_{\text{error}} & \leq \sum_g \frac{128 w_g^2}{(1 - \alpha)^2 \epsilon^4 \min(1, \gamma w_g)} + \frac{128}{(1 - \alpha)^2 \epsilon^4 } \sum_g \frac{w^2_g}{ \min(1, \gamma w_g)} \\
        &= \sum_g \frac{256 w_g}{(1 - \alpha)^2 \epsilon^4 n} + \frac{256}{(1 - \alpha)^2 \epsilon^4 } \sum_g \frac{w_g}{ n} \\ 
        & \leq \frac{256}{(1 - \alpha)^2 \epsilon^4 n} 
    \end{align}
    Therefore, we conclude that
    \begin{equation}
        P_{\text{error}} = O \left( \frac{1}{\epsilon^4 (1 - \alpha)^2 n}\right).
    \end{equation}
\end{proof}

\SamplecomplexityalgAtt*
\begin{proof}
Let $\delta = 0.01$.    From Theorem \ref{thm:att_upper_new}, we just need to ensure that:
    \begin{equation}
         P_{\text{error}} \leq \frac{256}{(1 - \alpha)^2 \epsilon^4 n} \leq \delta
    \end{equation}
    Therefore, 
    \begin{equation}
         \frac{256}{(1 - \alpha)^2 \epsilon^4 \delta} \leq n
    \end{equation}
Concluding the proof.     
\end{proof}

\section{Converse of Testing for CVaR under weighted sampling}
\label{sec:converse_CVAR_apx}
\MainChiBoundCvar*

\begin{proof}
We will prove the desired result for equal opportunity described in Example \ref{eg:equal_opportunity}.
The result for other fairness metrics, such as statistical parity in Example \ref{eg:statistical_parity}, and equal odds \cite{hardt2016equality}, is analogous.

Let $\bw$ be the uniform distribution over all groups. Note that under this definition $\bv = \bw$ for any $\eta$. 
Let $Q \subset \gp$ be such that $|Q| = \lfloor(1-\alpha)|\gp|\rfloor$.
We define the set $\mathcal{U}(Q)$ to be
\begin{equation}
    \mathcal{U}(Q) \triangleq \{-1, 1\}^{|Q|}.
\end{equation}
$\mathcal{U}(Q)$ is the set of vector with entries equal to $+1$ or $-1$. 
We will use $\mathcal{U}(Q)$ to perturb the distributions of groups $g \in Q$. Denote the group distribution vector $\bw = (w_1, ..., w_{|\gp|})$.
And define
\begin{align}
    P_0(G = g) &= w_g,\\
    P_u(G = g) &= w_g \ \ \forall u \in \mathcal{U}(Q).
\end{align}

Hence, we define the conditional distribution of $\hat{Y}$ given $G$ accordingly with $P_0$ and $P_u$ to be
\begin{align}
\label{eq:null_hyp_dist}
    P_0(\hat{Y} = 1  | G = g, Y = 0) &= \frac{1}{2} \ \ & \ \ \forall g \in \gp \\
\label{eq:hyp_1_dist_nQ}
    P_u(\hat{Y} = 1  | G = g, Y = 0) &= \frac{1}{2} \ \ & \forall g \notin Q\\
\label{eq:hyp_1_dist_Q}
    P_u(\hat{Y} = 1  | G = g, Y = 0) &= \frac{1}{2} + \frac{\tau \epsilon_g u_g}{w_g} & \ \ \forall g \in Q.
\end{align}
Where $u = (u_1, ..., u_{|Q|})$, $\epsilon_g$ is such that
\begin{equation}
    \epsilon_g \triangleq \frac{\epsilon w_g^{2/3}}{ \left( \sum_{g \in Q} w_g^{2/3} \right)},
    \label{eq:def_epsilon_g}
\end{equation}
and $\tau$ is given by
\begin{equation}
    \tau \triangleq \frac{1 - \alpha}{\alpha}.
\end{equation}
Notice that we need to show that the probability distributions are well defined.
To do so, it is only necessary to ensure that $\frac{\tau \epsilon_g}{w_g} \leq \frac{1}{2}$. 
\begin{equation}
    \tau \frac{\epsilon_g}{w_g} = \tau \frac{\epsilon w_g^{-1/3}}{ \left( \sum_{g' \in Q} w^{2/3}_{g'} \right)} = \tau \frac{\epsilon |\gp|^{1/3}}{\lfloor(1-\alpha)|\gp|\rfloor \frac{1}{|\gp|^{1/3}} }  \leq  \tau \frac{2\epsilon}{(1-\alpha)|\gp|^{1/3}} \leq   \frac{2\epsilon}{\alpha|\gp|^{1/3}}\leq \frac{1}{2}.
    \label{eq:prob_well_def_1}
\end{equation}

Once we have shown that $P_0$ and $P_u$ are probability distributions for all $u \in \mathcal{U}(Q)$, let's prove that 
\begin{enumerate}
    \item $\texttt{F\_CVaR}_{\alpha}(P_0, \bw) = 0$
    \item $\texttt{F\_CVaR}_{\alpha}(P_u, \bw) \geq \epsilon$ for all $u \in \mathcal{U}(Q)$.
\end{enumerate}

Recall that CVaR fairness is given by
\begin{equation}
    \texttt{F\_CVaR}_{\alpha}(P, \bw) =   \frac{1}{1 - \alpha}  \max_{\sum_{g \in A \subset \gp} w_g \leq 1 - \alpha} \sum_{g \in A} w_g \Delta(P, g), \\
\end{equation}
where $\Delta(g, P)$ is such that
\begin{equation}
    \Delta(P, g)= \left| P(\hat{Y} = 1 | G = g, Y = 0)  - P(\hat{Y} = 1 | Y = 0)\right|.
\end{equation}

\textbf{Item 1)}: Let's show that $\texttt{F\_CVaR}_{\alpha}(P_0, \bw) = 0$.
Note that for all $g \in \gp$
\begin{equation}
    P_0(\hat{Y} = 1 | Y = 0) = \sum_{g \in \gp} w_g P_0(\hat{Y} = 1 | G = g, Y = 0) = \frac{1}{2} = P_0(\hat{Y} = 1 | G = g, Y = 0),
\end{equation}
therefore, 
\begin{equation}
    \Delta(P_0, g) = 0 \ \ \forall g \in \gp,
\end{equation}
concluding that 
\begin{equation}
    \texttt{F\_CVaR}_{\alpha}(P_0, \bw) =   \frac{1}{1 - \alpha}  \max_{\sum_{g \in A \subset \gp} w_g \leq 1 - \alpha} \sum_{g \in A} w_g \Delta(P_0, g) = 0.
\end{equation}

\textbf{Item 2)}: Let's show that $\texttt{F\_CVaR}_{\alpha}(P_u, \bw) \geq \epsilon$ for all $u \in \mathcal{U}(Q)$.

Note that
\begin{equation}
     P_u(\hat{Y} = 1 | Y = 0) = \sum_{g \in \gp} w_g P_u(\hat{Y} = 1 | G = g, Y = 0) = \frac{1}{2} + \sum_{g \in Q} \tau \epsilon_g u_g.
    \label{eq:no_group_pu}
\end{equation}

Hence, we show that $\Delta(P_u, g')$ for $g' \in Q$ is
\begin{equation}
    \Delta(P_u, g') = \tau \left|  \frac{\epsilon_{g'} u_{g'}}{w_{g'}}  - \sum_{g \in Q}  \epsilon_g u_g \right|.
    \label{eq:delta_per_group_pu}
\end{equation}

Hence we can lower bound the CVaR fairness gap using \eqref{eq:delta_per_group_pu}.

\begin{align}
    \texttt{F\_CVaR}_{\alpha}(P_u, \bw) &=   \frac{1}{1 - \alpha}  \max_{\sum_{g \in A \subset \gp} w_g \leq 1 - \alpha} \sum_{g' \in A} w_{g'} \Delta(P_u, g') \\
    &\geq   \frac{1}{1 - \alpha} \sum_{g \in Q} w_g \Delta(g) \\
    &=   \frac{1}{1 - \alpha} \sum_{g' \in Q} w_{g'} \left|\frac{\epsilon_{g'} \tau u_{g'}}{w_{g'}} - \sum_{g \in Q} \epsilon_g \tau u_g \right|\\
    &=   \frac{\tau }{1 - \alpha} \sum_{{g'} \in Q}\left|{\epsilon_{g'} u_{g'}} - w_{g'}\sum_{g \in Q} \epsilon_g  u_g \right|\\
    & \geq  \frac{\tau }{1 - \alpha} \sum_{{g'} \in Q}{|\epsilon_{g'}u_{g'}|} - w_{g'}\left|\sum_{g \in Q} \epsilon_g  u_g\right| \\
    & \geq  \frac{\tau }{1 - \alpha} \sum_{{g'} \in Q}{\epsilon_{g'} } - w_{g'}\sum_{g \in Q} \epsilon_g  |u_g| \\
    & =  \frac{\tau }{1 - \alpha} \sum_{{g'} \in Q}{\epsilon_{g'}} - w_{g'}\sum_{g \in Q} \epsilon_g \\
    & =  \frac{\tau }{1 - \alpha} \sum_{{g'} \in Q}{\epsilon_{g'}} - w_{g'}\epsilon \\
    & =  \frac{\tau }{1 - \alpha} \epsilon\left(1 - \sum_{{g'} \in Q} w_{g'}\right)\\
    & \geq  \frac{\tau }{1 - \alpha} \epsilon \alpha = \epsilon.
\end{align}

We then conclude that $\texttt{F\_CVaR}_{\alpha}(P_u, \bw) \geq \epsilon$ for all $u \in \mathcal{U}(Q)$ and from item 1 we had that $\texttt{F\_CVaR}_{\alpha}(P_0, \bw) = 0$.

Now, let's show that, using these distributions, we can make the $\chi^2$ distribution be small considering the mixture of distributions.
Our main objective is to upper bound ${\chi}^2\left( \EE_{u}\left[ P_{u}^n \right]  \biggl| \biggl|  P_0^n  \right)$ by a quantity that (i) goes to zero when the number of samples $n$ increase, and (ii) increases when the number of groups $|\gp|$ increases.

To bound $\chi^2$ of a mixture, we will use the Ingster–Suslina method \cite{Ingster03}.
Hence, we assume that each $u$ is chosen at random from a distribution $\pi$.
The method ensures that
\begin{equation}
    {\chi}^2\left( \EE_{u}\left[ P_{u}^n \right]  \biggl| \biggl|  P_0^n  \right) = \EE_{u, u' \sim \pi} \left[ \prod_{i = 1}^{n} \sum_{z_i \in \cZ} \frac{P_u(z_i) P_{u'}(z_i)}{P_0(z_i)} \right] - 1.
    \label{eq:Ingster_inequality}
\end{equation}

Hence, we just need to compute $\sum_{z_i \in \cZ} \frac{P_v(z_i) P_{v'}(z_i)}{P_0(z_i)}$.

\begin{align}
    &\sum_{z_i \in \cZ} \frac{P_u(z_i) P_{u'}(z_i)}{P_0(z_i)} \\
    &= \sum_{\hat{y}, g} \frac{P_u(\hat{Y} = \hat{y}, G = g  | Y = 0 )P_{u'}(\hat{Y} = \hat{y}, G = g  | Y = 0 )}{P_0(\hat{Y} = \hat{y}, G = g  | Y = 0 )}\\
    &= \sum_{\hat{y}, g \in Q} \frac{P_u(\hat{Y} = \hat{y}, G = g  | Y = 1 )P_{u'}(\hat{Y} = \hat{y}, G = g  | Y = 0 )}{P_0(\hat{Y} = \hat{y}, G = g  | Y = 0 )}\\
    &+ \sum_{\hat{y}, g \notin Q} \frac{P_u(\hat{Y} = \hat{y}, G = g  | Y = 0 )P_{u'}(\hat{Y} = \hat{y}, G = g  | Y = 0)}{P_0(\hat{Y} = \hat{y}, G = g  | Y = 0 )}\\
    &= \sum_{g \in Q} \frac{ \left(\frac{w_g}{2} +  \tau u_g \epsilon_g\right)\left(\frac{w_g}{2} + \tau u_g^{'} \epsilon_g\right) + \left(\frac{w_g}{2} - \tau u_g \epsilon_g\right)\left(\frac{w_g}{2} -  \tau u_g^{'}\epsilon_g \right)}{\frac{w_g}{2}} + \sum_{g \notin Q} \frac{2\frac{w_g}{2}\frac{w_g}{2}}{\frac{w_g}{2}}\\
    &= \sum_{g \in Q} \frac{ \left(\frac{w_g^2}{2} + 2\tau^2 \epsilon_g^2 u_g u_g^{'} \right)}{\frac{w_g}{2}} + \sum_{g \notin Q} \frac{2\frac{w_g}{2}\frac{w_g}{2}}{\frac{w_g}{2}}\\
    \label{eq:first_step_chi_div}
    &= \sum_{g \in Q} \frac{ 4\tau^2 \epsilon_g^2 u_g u_g^{'} }{w_g} + \sum_{g \in Q} {w_g} + \sum_{g \notin Q} {w_g}{}\\
    &= \sum_{g \in Q} \frac{ 4\tau^2 \epsilon_g^2 u_g u_g^{'} }{w_g} + 1.
\end{align}

Now, using Ingster–Suslina inequality \eqref{eq:Ingster_inequality} and the bound in \eqref{eq:first_step_chi_div}, we have
\begin{align}
     {\chi}^2\left( \EE_{u}\left[ P_{u}^n \right]  \biggl| \biggl|  P_0^n  \right)  &= \EE_{u, u' \sim \pi} \left[ \prod_{i = 1}^{n} \sum_{z_i \in \cZ} \frac{P_u(z_i) P_{u'}(z_i)}{P_0(z_i)} \right] - 1\\
    &= \EE_{u, u' \sim \pi} \left[ \left(\sum_{g \in Q} \frac{ 4\tau^2 \epsilon_g^2 u_g u_g^{'} }{w_g} + 1 \right)^n \right] - 1 \\
    &\leq \EE_{u, u' \sim \pi} \left[ e^{n\sum_{g \in Q} \frac{ 4\tau^2 \epsilon_g^2 u_g u_g^{'}}{w_g}} \right] - 1 \\
    \label{eq:subgausian_v}
    &\leq \EE_{u \sim \pi} \EE_{u' \sim \pi} \left[ e^{n\sum_{g \in Q} \frac{ 4\tau^2 \epsilon_g^2 u_g u_g^{'}}{w_g}} \right] - 1.
\end{align}

We note that $u$ and $u'$ are $1$-subgasian, hence, we can bound \eqref{eq:subgausian_v} by
\begin{align}
    \label{eq:ingster_suslina}
    \EE_{u \sim \pi} \EE_{u' \sim \pi} \left[ e^{n\sum_{g \in Q} \frac{ 4\tau^2 \epsilon_g^2 u_g u_g^{'}}{w_g}} \right] - 1 &= \EE_{u \sim \pi} \EE_{u' \sim \pi} \left[\prod_{g \in Q} e^{n \frac{ 4\tau^2 \epsilon_g^2 u_g u_g^{'}}{w_g}} \right] - 1 \\
    \label{eq:bound_subgausian}
    &\leq \EE_{u \sim \pi}   \left[e^{n^2 \sum_{g \in Q} \frac{  16\tau^4 \epsilon_g^4 u_g^2}{w_g^2}} \right] - 1 \\
\label{eq:bound_after_subgausian}
    &\leq e^{n^2 \sum_{g \in Q} \frac{16\tau^4 \epsilon_g^4}{w_g^2}} - 1,
\end{align}
\revision{
where \eqref{eq:ingster_suslina} comes from \eqref{eq:subgausian_v}, \eqref{eq:bound_subgausian} comes from the fact that $u'$ is $1$-subgausian, and \eqref{eq:bound_subgausian} comes from the fact that $|u_g| = 1$.
}

Now, by plugging in $\epsilon_g$ \eqref{eq:def_epsilon_g} in \eqref{eq:bound_after_subgausian} we conclude that
\begin{align}
    \EE_{u \sim \pi} \EE_{u' \sim \pi} \left[ e^{n\sum_{g \in Q} \frac{ 4\tau^2 \epsilon_g^2 u_g u_g^{'}}{w_g}} \right] - 1
    &\leq e^{n^2 \sum_{g \in Q} \frac{16\tau^4 \epsilon_g^4}{w_g^2}} - 1 \leq e^{ \frac{ 16\tau^4 n^2 \epsilon^4}{\left(\sum_{g \in Q} w_g^{2/3} \right)^{3} }} - 1
    \label{eq:last_inequality}
\end{align}

But, from the definition of $Q$, i.e., $\sum_{g \in Q} w_g^{2/3} = \lfloor(1-\alpha)|\gp|\rfloor \frac{1}{|\gp|^{1/3}}  \geq \frac{3(1 - \alpha)}{8} \sum_{g \in \gp} w_g^{2/3}$.
Hence, plugging \eqref{eq:last_inequality} in \eqref{eq:last_inequality} we have that
\begin{align}
    \EE_{u \sim \pi} \EE_{u' \sim \pi} \left[ e^{n\sum_{g \in Q} \frac{ 4\tau^2 \epsilon_g^2 u_g u_g^{'}}{w_g}} \right] - 1
    &\leq e^{n^2 \sum_{g \in Q} \frac{16\tau^4 \epsilon_g^4}{w_g^2}} - 1 \leq e^{ \frac{ 1024\tau^4 n^2 \epsilon^4}{\left( (1 - \alpha) \sum_{g \in \gp} w_g^{2/3} \right)^{3} }} - 1.
    \label{eq:combining_everything}
\end{align}

Concluding the proof of the lemma by combining the inequality in \eqref{eq:subgausian_v} with \eqref{eq:combining_everything} and getting
\begin{align}
    {\chi}^2\left( \EE_{u}\left[ P_{u}^n \right]  \biggl| \biggl|  P_0^n  \right)  &\leq e^{ \frac{ 1024\tau^4 n^2 \epsilon^4}{\left( (1 - \alpha) \sum_{g \in \gp} w_g^{2/3} \right)^{3} }} - 1 \\
    &\leq e^{ \frac{ 1024 (1 - \alpha) n^2 \epsilon^4}{\alpha^{4} \left( \sum_{g \in \gp} w_g^{2/3} \right)^{3} }} - 1. 
\end{align}
\end{proof}

\ConverseCVar*

\begin{proof}
    From Lemma \ref{lem:main_chi_bound_cvar} we know that there exist a sequence of distributions $P_0 \in \hypregions_0(\epsilon)$ and $ \{P_{u}\}_{u \in \mathcal{U}} \in \hypregions_1(\epsilon)$ indexed by elements of a set $\mathcal{U}$ such that if $u \in \mathcal{U}$ we have that
    \begin{equation}
     {\chi}^2\left( \EE_{u}\left[ P_{u}^n \right]  \biggl| \biggl|  P_0^n  \right)
        \leq e^{ \frac{ 1024 (1 - \alpha) n^2 \epsilon^4}{\alpha^{4} |\gp| }} - 1.
        \label{eq:recal_lemma_chi_bound}
    \end{equation}

    On the other hand, by the generalized Le Cam lemma \cite{Polyanskiy19}, we have that
    \begin{align}
        2\inf_{\psi} P_{\text{error}} & \geq 1 - \text{TV}\left( \EE_{u}\left[ P_{u}^n \right]  \biggl| \biggl|  P_0^n  \right) \\
        \label{eq:prob_error_to_chi}
        & \geq 1- \frac{1}{2} \sqrt{{\chi}^2\left( \EE_{u}\left[ P_{u}^n \right]  \biggl| \biggl|  P_0^n  \right)}
    \end{align}

    Plugging \eqref{eq:recal_lemma_chi_bound} in \eqref{eq:prob_error_to_chi}, we have that:
    \begin{equation}
        2\inf_{\psi} P_{\text{error}} \geq 1 - \frac{1}{2} \sqrt{e^{ \frac{ 1024 (1 - \alpha) n^2 \epsilon^4}{\alpha^{4} |\gp|  }} - 1}.
    \end{equation}
    Therefore, if the probability of error is smaller than $0.01$ we need to ensure that: 
    \begin{equation}
       0.02 \geq  1 - \frac{1}{2} \sqrt{e^{ \frac{ 1024 (1 - \alpha) n^2 \epsilon^4}{\alpha^{4}|\gp|  }} - 1}.
    \end{equation}    
    From where we conclude that
    \begin{align}
        n &= \Omega\left( \frac{\alpha^2 \sqrt{|\gp| } }{(1 - \alpha)^{1/2} \epsilon^2} \right) ,
    \end{align}
    which is the desired result.
\end{proof}

\secondrev{
\begin{lemma}[$ \texttt{F\_CVaR}_{\alpha}$ vs. $\text{CVaR}_{\beta}$] Let $\sum_{g \in Q_{\alpha}} w_g = 1- \alpha$, and any two groups $g \neq g'$ are such that $\Delta(g) \neq \Delta(g')$. 
Then, we can define $\beta = 1 - \alpha$ and conclude that 
\begin{equation}
    \text{CVaR}_{\beta}(\Delta(g)) = \texttt{F\_CVaR}_{\alpha}(\bw; P, L, \averagegtg{L})
\end{equation}
\label{apx_cvar_fcvar}
\end{lemma}

\begin{proof}
First, notice that $g \in Q_{\alpha}$ if and only if $\Delta(g) \geq q$
Hence, taking $\beta =  1 - \alpha $ we conclude that $q_{\beta} = \min_{g\in Q_{\alpha}} \Delta(g)$, then

\begin{align}
    \text{CVaR}_{\beta}(\Delta(g)) &= \EE[\Delta(g) | \Delta(g) \geq q_{\beta}] \\
    &= \frac{\sum_{g; \Delta(g) \geq \beta}w_g\Delta(g)}{\sum_{g; \Delta(g) \geq \beta}w_g}\\
    &= \frac{\sum_{g \in Q_{\alpha}} w_g\Delta(g)}{\sum_{g \in Q_{\alpha}}w_g}\\
    &= \frac{\sum_{g \in Q_{\alpha}} w_g\Delta(g)}{1 - \alpha}\\
    &=  \texttt{F\_CVaR}_{\alpha}(\bw; P, L, \averagegtg{L}).
\end{align}
    
\end{proof}
}
\end{document}